\documentclass{article}
\usepackage{hyperref}

\usepackage[preprint]{icml2026}
\usepackage{latexsym}
\usepackage[T1]{fontenc}
\usepackage[utf8]{inputenc}
\usepackage{microtype}
\usepackage{inconsolata}
\usepackage{amsmath,amssymb,amsthm}
\usepackage{booktabs}
\usepackage{longtable}
\usepackage{capt-of}
\usepackage{graphicx}
\usepackage{array}
\usepackage{multirow}
\usepackage{algorithm}
\newcommand{\Require}{\REQUIRE}
\newcommand{\Ensure}{\ENSURE}
\newcommand{\State}{\STATE}
\newcommand{\For}{\FOR}
\newcommand{\EndFor}{\ENDFOR}
\newcommand{\Comment}[1]{\COMMENT{#1}}
\newcommand{\Return}{\textbf{return}\space}
\usepackage{xcolor}
\usepackage{tcolorbox}
\tcbuselibrary{skins,breakable}
\usepackage{needspace}

\definecolor{OECPBlue}{HTML}{24649A}   
\definecolor{OECPBlueBG}{HTML}{F2F6FA}
\definecolor{OECPTeal}{HTML}{287D78}   
\definecolor{OECPTealBG}{HTML}{F1F8F7}
\definecolor{OECPGold}{HTML}{9A641F}   
\definecolor{OECPGoldBG}{HTML}{FBF7EE}
\tcbset{oecpbase/.style={arc=1.8mm, boxrule=1.2pt, left=1.5mm, right=1.5mm,
  top=0.9mm, bottom=0.9mm, boxsep=0.5mm, before skip=6pt, after skip=6pt,
  fonttitle=\bfseries, toptitle=0.3mm, bottomtitle=0.3mm}}
\newtcolorbox{oecpformulationbox}[1][]{oecpbase, colback=OECPBlueBG,
  colframe=OECPBlue!55, coltitle=black, colbacktitle=OECPBlueBG, #1}
\newtcolorbox{oecpmainresultbox}[1][]{oecpbase, colback=OECPGoldBG,
  colframe=OECPGold!70, coltitle=black, colbacktitle=OECPGoldBG, #1}
\newtcolorbox{oecpcompressionbox}[1][]{oecpbase, colback=OECPTealBG,
  colframe=OECPTeal!40, coltitle=black, colbacktitle=OECPTealBG, #1}

\newtheorem{theorem}{Theorem}

\newtheorem{corollary}{Corollary}
\newtheorem{proposition}{Proposition}
\newtheorem{appthm}{Theorem}

\newtheorem{appcor}[appthm]{Corollary}
\theoremstyle{definition}

\newcommand{\Rdist}{R^*_{T,\mathrm{dist}}}
\newcommand{\meff}{m_{\mathrm{eff}}}
\newcommand{\Fiso}{\mathcal F^{\uparrow}}
\newcommand{\1}{\mathbf 1}
\DeclareMathOperator{\width}{width}
\DeclareMathOperator{\rank}{rank}

\icmltitlerunning{Online Evidence-Containment Prediction under Distinct Queries}
\icmlkeywords{online learning, evidence containment, isotonic regression, evidence acquisition}
\hypersetup{
  pdfsubject={Research preprint},
  pdfborder={0 0 0}
}
\makeatletter
\renewcommand{\ICML@preprint}{\textit{Preprint.}}
\makeatother

\begin{document}
\twocolumn[
\icmltitle{\texorpdfstring{Online Evidence-Containment Prediction over Text Spans\\under Distinct Queries}{Online Evidence-Containment Prediction over Text Spans under Distinct Queries}}
\begin{icmlauthorlist}
\icmlauthor{Xiang Liu}{xmu}
\icmlauthor{Haochen Ding}{nefu}
\end{icmlauthorlist}
\icmlaffiliation{xmu}{Xiamen University, China}
\icmlaffiliation{nefu}{Northeast Forestry University, China}
\icmlcorrespondingauthor{Xiang Liu}{\nolinkurl{persevereliu@stu.xmu.edu.cn}}
\vskip 0.3in
]
\printAffiliationsAndNotice{}

\begin{abstract}
Evidence-grounded NLP systems often verify nested text regions sequentially, so verifying one region constrains others.
We formalize this as Online Evidence-Containment Prediction (OECP): each candidate is predicted before exact verification and queried at most once.
Span containment is exactly a triangular 2D product order, making OECP distinct-query online isotonic regression.
With multiple admissible targets, worst-case regret is near-linear, $\widetilde{\Theta}(m_{\rm eff})$, in the effective scale given by endpoint-rank compression; a single fixed target, as in standard QA, admits an explicit polynomial-time learner with optimal $\Theta(\log m_{\rm eff})$ regret, run on real NQ episodes.
On Natural Questions, containment yields a large deductive gain plus a smaller learned gain on unresolved candidates, reproduced across two backbones and DPR hierarchies.
Expected Search-space Reduction (ESR) turns these probabilities into decisions, localizing evidence in fewer verifier calls on SciFact, ERASER, and DPR hierarchies; on DPR hierarchies under call budgets it outperforms four recent external selectors and raises fixed-reader F1 over learned and generator-greedy scheduling.
\end{abstract}

\section{Introduction}
\label{sec:intro}

\begin{figure*}[t]
\centering
\includegraphics[width=\textwidth]{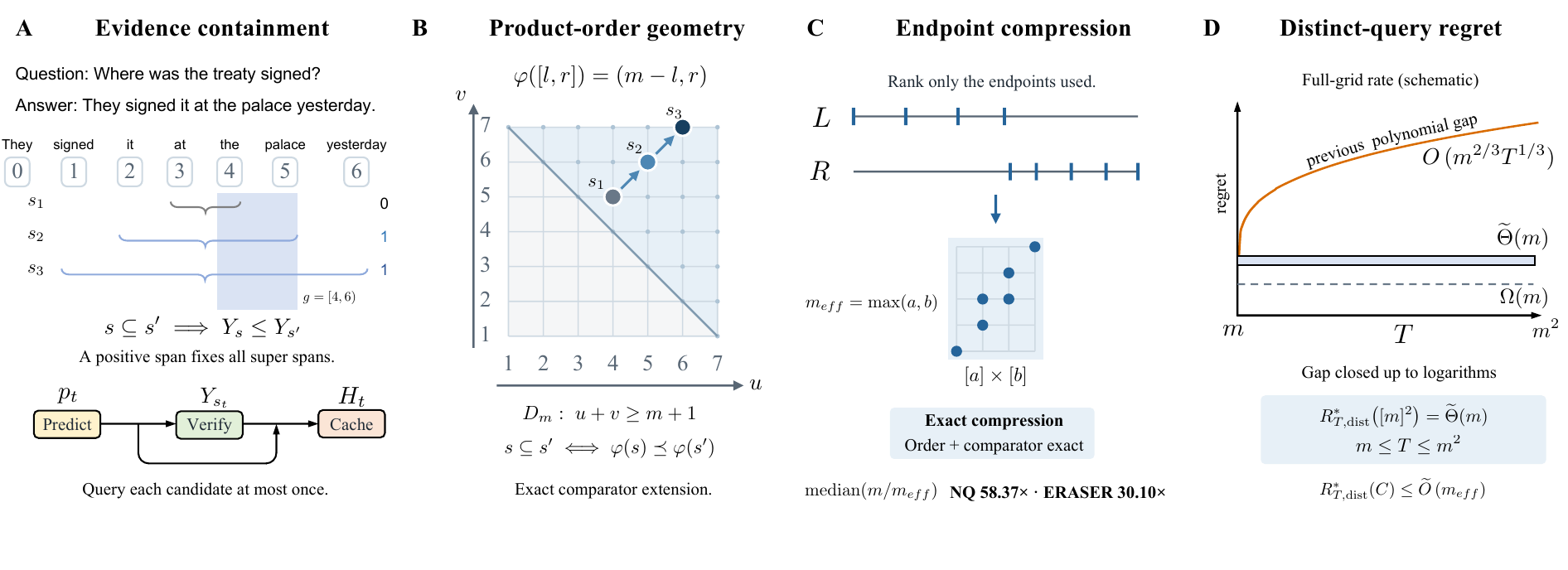}
\caption{Overview of OECP. \textbf{(A)} Nested spans with monotone containment labels, each verified once. \textbf{(B)} Exact map from spans to a triangle in the $m\times m$ grid ($m$: raw token length). \textbf{(C)} Endpoint-rank compression to the $\meff\times\meff$ grid of observed endpoints. \textbf{(D)} Distinct-query regret is $\widetilde\Theta(m)$ on the raw grid, hence $\widetilde O(\meff)$ after compression.}
\label{fig:overview}
\end{figure*}

Evidence-grounded NLP systems increasingly verify evidence incrementally, proposing a candidate region, checking it, and continuing the search \citep{thorne2018fever,trivedi2023interleaving,xie2025fire}.
Candidates nest, sentences inside paragraphs inside sections, so labels are dependent: if a span contains complete evidence, so does every superspan, and if not, no subspan can.
Existing work studies retrieval, sufficiency, and adaptive acquisition \citep{karpukhin2020dense,joren2025sufficient,jeong2024adaptiverag}, but not how verification outcomes propagate across nested candidates.

We formalize the predictor-side problem as \textbf{Online Evidence-Containment Prediction (OECP)}: for externally proposed candidates, a learner predicts $Y_s=\1\{\exists g\in G: g\subseteq s\}$, whether span $s$ contains a complete admissible target, before an exact verifier reveals it, and each candidate is queried at most once (Fig.~\ref{fig:overview}A).
Containment compares endpoints coordinatewise, so it is exactly a triangular two-dimensional product order and OECP is distinct-query online isotonic regression (Fig.~\ref{fig:overview}B); exact endpoint-rank compression replaces token length by the effective scale $\meff$, the number of distinct candidate endpoints (Fig.~\ref{fig:overview}C).
Target multiplicity then changes the complexity: with alternative admissible targets, the worst case is near-linear, $\widetilde\Theta(\meff)$ (Fig.~\ref{fig:overview}D), whereas a single fixed target, as in standard single-answer QA, admits an explicit polynomial-time learner with optimal $\Theta(\log\meff)$ regret.

Empirically, containment acts through exact deduction and a smaller learned signal on unresolved candidates.
\emph{Expected Search-space Reduction} (ESR; Section~\ref{sec:esr}) turns the resulting probabilities into decisions that localize compact evidence in fewer verifier calls on SciFact, ERASER, and DPR hierarchies, with scoped call-budget and reader-F1 gains.

\paragraph{Contributions.}
\begin{itemize}\setlength{\itemsep}{0pt}\setlength{\parsep}{0pt}\setlength{\topsep}{2pt}
\item \textbf{OECP.} We formalize sequential evidence-containment prediction and show that span containment is exactly a triangular 2D product order.
\item \textbf{Sharp complexity characterization.} Exact endpoint compression yields the effective scale $\meff$; the general multi-target class has $\widetilde\Theta(\meff)$ regret, while fixed-target OECP has an explicit polynomial-time learner with worst-case optimal $\Theta(\log\meff)$ regret, which we execute on real NQ candidate families.
\item \textbf{Prediction and verification.} Beyond exact deduction, containment adds a reproducible learned signal on unresolved candidates; ESR localizes evidence with fewer verifier calls on SciFact, ERASER, and DPR hierarchies, where under call budgets it outperforms four recent external selectors and raises fixed-reader F1 over learned and generator-greedy scheduling.
\end{itemize}

\section{Related Work}
\label{sec:related}

\paragraph{Evidence selection, sufficiency, and scientific verification.}
Evidence-centric NLP studies retrieval, rationale extraction, and evidence sufficiency across QA and fact verification \citep{kwiatkowski2019natural,deyoung2020eraser,thorne2018fever,joren2025sufficient}.
Beyond sentence-wise retrieval, evidence has been modeled through document-level token selection \citep{stammbach2021evidence}, modular and joint rationale selection on scientific claims \citep{rana2022rerrfact,wadden2022multivers}, and open-domain scientific verification \citep{wadden2022scifactopen}.
These methods optimize which evidence to retrieve or retain; rationale metrics themselves can be gamed without better evidence \citep{hsia2024goodhart}, so we report localization separately from downstream accuracy.
OECP studies a complementary dependency among already proposed candidates: verification outcomes propagate deterministically through span containment.

\paragraph{Sequential, hierarchical, and adaptive evidence acquisition.}
Evidence acquisition can itself be hierarchical or sequential.
Dense hierarchical retrieval first narrows documents before passages \citep{liu2021dense}, while recent multi-granular RAG organizes sentence- and phrase-level evidence in hierarchical structures and iteratively refines retrieval \citep{hu2026mgranrag}.
Adaptive systems also interleave retrieval, reasoning, stopping, or verification \citep{jiang2023active,asai2024selfrag,jeong2024adaptiverag,xie2025fire}.
Recent selectors span reasoning-based reranking \citep{zhang2025rearank}, nested evidence selection \citep{verma2026nest}, submodular budgeted routing \citep{qureshi2026budgetrouting}, and cost-aware selection \citep{wu2026knowledgefree}.
OECP operates after candidates are proposed and exploits a different signal: the consequences of observed verification outcomes under containment. ESR uses this structure to shrink the remaining search space, in the spirit of search-space reduction for QA \citep{christmann2022beyond}, but one verification at a time.

\paragraph{Structured and online prediction under partial orders.}
Online isotonic regression studies monotone prediction over ordered domains, including linear and product orders \citep{kotlowski2016online,kotlowski2017random,wang2026product}, using sequential-complexity and entropy tools \citep{rakhlin2014online,gao2007entropy}.
OECP connects this theory to text-span verification: interval containment induces the relevant partial order \citep{dushnik1941partially,golumbic1989containment}, while distinct queries arise because verified candidates are cached rather than re-verified.
Endpoint-rank compression replaces raw token coordinates by observed endpoints, and we further separate the general multi-target class from the logarithmic fixed-target subclass.

\section{OECP: Formulation and Theory}
\label{sec:formulation}

\subsection{Sequential Evidence-Containment Prediction}
\label{sec:oecp}

\paragraph{Running example.}
For \emph{``Where was the treaty signed?''}, a system may consider the document, a paragraph, a sentence in it, and the phrase ``the palace'' (Fig.~\ref{fig:esr}).
Before checking each, a model predicts whether it contains complete evidence.
If the sentence verifies positive, so must the paragraph and document; if the paragraph verifies negative, so must the sentence and phrase.
A verified span is cached, never rechecked.
OECP studies prediction before each verification; generator and scheduler may be external.
We now formalize this predict--verify--update loop.
Fix a document of $m$ tokens; a candidate is a span $s=[l,r)$, $0\le l<r\le m$, and $\mathcal S_m$ denotes all spans.

\begin{oecpformulationbox}[title={Protocol 1: OECP}]
\textbf{Candidates.} A family $\mathcal C\subseteq\mathcal S_m$, generated independently of the gold evidence.\\[0.3ex]
\textbf{Targets.} Hidden admissible target spans $G\subseteq\mathcal S_m$.\\[0.3ex]
\textbf{Containment label.}
\[
Y_s=\1\{\exists g\in G: g\subseteq s\}.
\]
\textbf{Online protocol.} An external scheduler selects an unverified $s_t\in\mathcal C$; the learner predicts $p_t\in[0,1]$ before an exact verifier reveals and caches $Y_{s_t}$. Each candidate is queried at most once.
\end{oecpformulationbox}

\noindent
\textbf{In words.} The scheduler presents an unseen candidate, the learner predicts, and the verifier's binary outcome joins the history $H_t$; since re-verification yields nothing new, queries are \emph{distinct}.
By transitivity, $s\subseteq s'\Rightarrow Y_s\le Y_{s'}$: the label is isotonic by definition, not by a relevance assumption.

\paragraph{Online loss.}
The learning question is how much additional squared error a learner incurs relative to the best containment-consistent predictor on the same candidates.
On a finite poset $(P,\preceq)$ with isotonic maps $\Fiso(P)$, an adversary presents $x_t\in P$, the learner predicts $\hat y_t\in[0,1]$, and $y_t\in[0,1]$ is revealed, with $x_s\neq x_t$ for all $s\neq t$ pathwise.
The regret and distinct-query minimax regret are
\begin{equation*}
\begin{aligned}
R_T&=\sum_{t=1}^T(\hat y_t-y_t)^2\\
&\quad-\min_{f\in\Fiso(P)}\sum_{t=1}^T\bigl(f(x_t)-y_t\bigr)^2,\\
\Rdist(P)&=\inf_{\mathcal A}\;\sup_{\mathcal E_{\mathrm{dist}}}\;\mathbb E\,[R_T],
\end{aligned}
\end{equation*}
over learners $\mathcal A$ and adaptive distinct-query adversaries $\mathcal E_{\mathrm{dist}}$; OECP is $P=(\mathcal C,\subseteq)$, $y_t=Y_{s_t}$, where regret equals cumulative Brier loss.

\subsection{What Structure Does Containment Induce?}
\label{sec:sparse}

A span has two degrees of freedom, and $s\subseteq s'$ exactly when $s'$ starts no later and ends no earlier; flipping the left endpoint makes this a grid order.
With the coordinatewise order $\preceq$ on $[m]^2$, let
\begin{equation*}
\begin{aligned}
\phi([l,r))&=(m-l,\,r),\\
D_m&=\{(u,v)\in[m]^2: u+v\ge m+1\}.
\end{aligned}
\end{equation*}

\begin{proposition}[Span-order isomorphism]
\label{prop:span-order}
$\phi$ is a bijection from $\mathcal S_m$ onto $D_m$, and for all $s,s'\in\mathcal S_m$,
\[
s\subseteq s'\iff\phi(s)\preceq\phi(s').
\]
\end{proposition}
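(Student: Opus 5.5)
The proof is a direct verification in three parts: $\phi$ maps $\mathcal S_m$ into $D_m$, it is a bijection onto $D_m$, and it translates containment into the coordinatewise order. The only point needing care is the convention $[m]=\{1,\dots,m\}$, which must match the endpoint ranges $0\le l<r\le m$.

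\emph{Well-definedness.} Take $s=[l,r)$ with $0\le l<r\le m$. Then $u=m-l$ lies in $\{1,\dots,m\}$, since $0\le l\le m-1$ (because $l<r\le m$). Likewise $v=r$ lies in $\{1,\dots,m\}$, since $1\le r\le m$ (because $r>l\ge0$). The integer inequality $r\ge l+1$ rewrites as $(m-l)+r\ge m+1$, i.e.\ $u+v\ge m+1$, so $\phi(s)\in D_m$.

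\emph{Bijectivity.} I would give the explicit inverse $\psi(u,v)=[m-u,\,v)$ and check it lands in $\mathcal S_m$. From $u\le m$ we get $m-u\ge0$. From $u+v\ge m+1$ we get $m-u<v$. From $v\le m$ the right endpoint is admissible. Both compositions $\psi\circ\phi$ and $\phi\circ\psi$ are the identity by inspection, which gives injectivity and surjectivity at once.

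\emph{Order isomorphism.} For nonempty half-open integer intervals, $[l,r)\subseteq[l',r')$ holds iff $l'\le l$ and $r\le r'$. The direction $\Leftarrow$ is immediate. For $\Rightarrow$, use $l,r-1\in[l,r)$: they must lie in $[l',r')$, giving $l'\le l$ and $r-1<r'$, i.e.\ $r\le r'$. Nonemptiness ($l<r$) is essential here, and it is guaranteed by the definition of $\mathcal S_m$. Finally, $l'\le l\iff m-l\le m-l'$, and $r\le r'$ is the second coordinate comparison. So $s\subseteq s'\iff\phi(s)\preceq\phi(s')$.
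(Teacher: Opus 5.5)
Your proof is correct and follows essentially the same route as the paper's proof (Lemma~B.9). Both characterize containment as $l'\le l$ and $r\le r'$, flip the left coordinate via $m-l$, obtain membership in $D_m$ from $l<r\iff u+v\ge m+1$, and establish bijectivity through the explicit preimage $[m-u,v)$. Your argument via $l,r-1\in[l,r)$ and your two-sided inverse are somewhat more explicit than the paper's. You may want to rename the inverse, since $\psi$ already denotes the endpoint-rank compression map in the paper.
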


\noindent
\textbf{Takeaway.} Nested evidence spans are exactly a triangular subset of a two-dimensional product order, not merely analogous to a partial order (Fig.~\ref{fig:overview}B); hence $\Rdist(\mathcal S_m,\subseteq)\le\Rdist([m]^2)$ (Appendix~\ref{app:span-reduction}).

\subsection{What Does Distinct Querying Buy?}
\label{sec:minimax}

Repeated-query online isotonic regression may revisit the same point; verification caches each checked span instead. Does forbidding repeats change the minimax difficulty?

\begin{oecpmainresultbox}[title={Main result: the distinct-query rate}]
\begin{theorem}[Upper bound]
\label{thm:upper}
There is a universal constant $C>0$ such that for all integers $m\ge 2$ and $1\le T\le m^2$,
\begin{equation*}
\Rdist([m]^2)\le C\min\{T,\Gamma_{m,T}\},
\end{equation*}
where $\Gamma_{m,T}:=m\bigl[1+\log(em)\log(1+T/m)\bigr]$.
\end{theorem}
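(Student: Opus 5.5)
The bound $T$ is immediate: predicting $\hat y_t=1/2$ costs at most $1/4$ per round, so assume $T>m$ and aim for $O(\Gamma_{m,T})$. I would prove the main term by decomposing the grid $[m]^2$ into chains and running a one-dimensional online isotonic learner on each chain. Since queries are distinct, the number of rounds spent on a chain is at most its length, and this is what makes the decomposition pay off.

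Concretely, I would cover $[m]^2$ by the $2m-1$ antidiagonal-free ``hook'' chains, or more simply by the $m$ columns $\{u\}\times[m]$. On a single chain of length $n$, a known learner for online isotonic regression on a totally ordered set applies; examples are the exponentially weighted forecaster over a discretized isotonic class, or the Kotłowski--Kotłowski--Koolen bound. Under $T_j$ rounds it has regret $O(\log n\cdot\log(1+T_j))$, or at worst $O(n^{1/3}T_j^{1/3})$.

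The difficulty is that the comparator must be isotonic on the whole grid, not separately per chain. But the restriction of any $f\in\Fiso([m]^2)$ to a chain is isotonic on that chain. Hence $\min_f\sum_t$ is at least $\sum_j\min_{f_j}$, and the total regret is bounded by the sum of the per-chain regrets. The sum then becomes $\sum_{j=1}^m O(1+\log(em)\log(1+T_j))$ with $\sum_j T_j=T$. By concavity of $\log$ this is at most $O(m[1+\log(em)\log(1+T/m)])=O(\Gamma_{m,T})$.

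The main obstacle is the per-chain bound. I need a learner whose regret on a chain of length $n\le m$ with $T_j$ distinct queries is $O(1+\log(em)\log(1+T_j))$ against an adaptive adversary. The first thing I would try is exponential weights over isotonic step functions with values in a $1/T_j$-grid, restricted to the positions actually queried. A monotone function on the chain changes value only at breakpoints, so I would encode it by a multiscale (dyadic) prior whose cost is about $\log n$ per level. Combining this with exp-concavity of square loss, which gives $\log|\text{class}|$ regret plus a $T_j\cdot(1/T_j)$ discretization error, should give the desired $\log(em)\log(1+T_j)$.

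If the direct covering count comes out too large, I would fall back on a doubling trick in $T_j$, together with the isotonic-regression entropy bounds of Gao–Wellner type cited in the paper, bounded via sequential Rademacher complexity (Rakhlin et al.). Adaptivity of the adversary is harmless here because exp-weights bounds hold pathwise.
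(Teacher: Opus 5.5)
\textbf{The per-chain bound fails.} Your claim that a chain of length $n$ with $T_j$ distinct queries admits regret $O(1+\log(em)\log(1+T_j))$ is false. One chain, each point queried at most once, with arbitrary $y_t\in[0,1]$, is exactly fixed-design online isotonic regression \citep{kotlowski2016online}, whose minimax regret is $\widetilde\Theta(T_j^{1/3})$. The lower bound goes as follows. Cut the chain into $K\asymp T_j^{1/3}$ blocks, and give block $k$ Bernoulli labels with mean $\mu_k$ or $\mu_k+\delta$, chosen by a hidden fair bit. Take $\delta\asymp\sqrt{K/T_j}$ and $\mu_{k+1}\ge\mu_k+2\delta$, so every choice of bits is isotonic. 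No learner can resolve a block's bit, so it loses a constant per block. The logarithmic rate you have in mind holds only for noise-free isotonic labels.

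Your exponential-weights plan cannot get around this. Isotonic functions on $n$ points with values on a $\delta$-grid have log-cardinality of order $\delta^{-1}\log(1+n\delta)$. With $\delta=1/T_j$ the $\log|\text{class}|$ term is already of order $T_j$, and balancing against discretization gives at best $T_j^{1/3}$; no prior can beat the lower bound. Plugging the true per-chain rate into your decomposition $R_T\le\sum_j R_j$ (which is itself valid) gives $\sum_j T_j^{1/3}\le m^{2/3}T^{1/3}$ up to logs. At $T=m^2$ that is $m^{4/3}$, exactly the repeated-query rate the theorem is meant to improve on.

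\textbf{The decomposition itself is too weak, not just its analysis.} Plant the block construction in every column $\{u\}\times[m]$, but couple the hidden bits across columns through independent uniform thresholds, $B_{u,k}=\1\{u>\tau_k\}$. The mean surface is then isotonic on the whole grid, so the grid comparator is good. Yet a learner that sees only column $u$ faces a nondegenerate prior on $B_{u,k}$ for a constant fraction of columns. It pays a constant per block there, giving $\Omega(m^{4/3})$ in total at $T=m^2$.

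What makes the grid easy is precisely the cross-column monotonicity you discard. The bivariate class has normalized-$L_2$ entropy $O(\alpha^{-2}\log^2(e/\alpha))$ at scale $\alpha$, independent of $m$. A product of $m$ column classes has entropy of order $m/\alpha$, roughly a factor $m\alpha$ larger.

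The paper therefore works with the whole class. It applies the offset chaining bound of \citet{rakhlin2014online} restricted to pathwise-distinct trees. It uses distinctness not to cap $T_j$ but to rescale covers: on a distinct tree the pathwise squared error is a sub-sum of the full-grid error, so a static $d_{\mathrm{full}}$-cover at radius $\beta\sqrt T/m$ is already a sequential $\beta$-cover. The Dudley integral from $\rho=m/T$ to $1/2$ then gives $O(m[1+\log(em)\log(1+T/m)])$. Your sequential-Rademacher fallback leads there only if you apply it to the grid class as a whole together with this rescaling; applied per chain it returns $T_j^{1/3}$ again.
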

\begin{corollary}[Near-matching minimax rate]
\label{cor:rate}
For all $m\ge 2$ and $m\le T\le m^2$, $\Rdist([m]^2)\ge m/4$; hence $m/4\le\Rdist([m]^2)\le Cm(1+\log^2 m)$, that is,
\begin{equation*}
\Rdist([m]^2)=\widetilde\Theta(m)\qquad(m\le T\le m^2).
\end{equation*}
\end{corollary}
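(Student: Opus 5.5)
The statement to prove is Corollary~\ref{cor:rate}. The upper half follows from Theorem~\ref{thm:upper}. For $m\le T\le m^2$ we have $\log(1+T/m)\le\log(1+m)\le 2\log(em)$, so $\Gamma_{m,T}\le m[1+2\log^2(em)]$. Since $\log(em)=1+\log m$, this is at most $C'm(1+\log^2 m)$ for an absolute $C'$; absorbing $C'$ into $C$ gives the stated upper bound. The real content is therefore the lower bound $\Rdist([m]^2)\ge m/4$ for every $T\in[m,m^2]$.

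For the lower bound I plan to use an antichain with random labels. Take the anti-diagonal $A=\{(i,m+1-i):i\in[m]\}\subseteq[m]^2$, which has $m$ pairwise incomparable points, so every $\{0,1\}$ labeling of $A$ is isotonic on $A$. For the first $m$ rounds the adversary presents the points of $A$ in a fixed order, each once, with labels $y_t$ drawn i.i.d.\ uniform on $\{0,1\}$, independent of everything else. Because $y_t$ is independent of the past and of $\hat y_t$, each such round costs the learner at least $\mathbb E(\hat y_t-y_t)^2\ge 1/4$.

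The delicate part is the remaining $T-m$ rounds, which must be distinct queries and must not give the learner a chance to exploit a suboptimal comparator. I would split the grid into the lower set $L=\{u+v\le m\}$ and the upper set $U=\{u+v\ge m+2\}$. Every point of $L$ lies below some point of $A$, or is incomparable to all of $A$, and similarly for $U$. The adversary presents points of $L$ with label $0$ and points of $U$ with label $1$, in any order, taking as many as needed to fill $T-m\le m^2-m=|L\cup U|$ rounds. These labels are deterministic, and the learner may well predict them perfectly, so they add nonnegative learner loss. The extension $f=0$ on $L$, $f=y$ on $A$, $f=1$ on $U$ is isotonic: if $x\preceq x'$ with $x\in U$, then $x'\in U$, and if $x'\in L$, then $x\in L$. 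This $f$ has zero loss on every round, so the comparator term is $0$.

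Hence $\mathbb E[R_T]\ge m\cdot\tfrac14$ for every learner, which gives $\Rdist([m]^2)\ge m/4$ since the randomized labels define an admissible (oblivious, hence adaptive) distinct-query adversary. Combining the two halves yields the $\widetilde\Theta(m)$ rate.
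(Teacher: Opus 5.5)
Your proof is correct and is essentially the paper's argument (Theorem~B.7 and Corollary~B.8). It uses the central antichain $\{(i,m+1-i)\}$ with independent fair bits, which costs every learner at least $1/4$ per antichain round. A zero-loss isotonic comparator absorbs the $T-m$ distinct filler rounds. The upper bound is read off Theorem~\ref{thm:upper} using $\log(1+T/m)\le\log(1+m)=O(\log m)$.

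The only difference is cosmetic. You label fillers deterministically by the layers $u+v\le m$ (label $0$) and $u+v\ge m+2$ (label $1$). The paper instead labels arbitrary filler points by $f_B(x)=\max(\{B_a:a\preceq x\}\cup\{0\})$, then averages over $B$ to extract a fixed labeling. Both comparators are isotonic and agree with the bits on the antichain.
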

\end{oecpmainresultbox}

\noindent
\textbf{Takeaway.} Distinct verification removes the polynomial $T$-dependence of the repeated-query $O(m^{2/3}T^{1/3})$ bound \citep{wang2026product}: for $m\le T\le m^2$ the general multi-target problem is near-linear in $m$.
The upper bound turns static covers into sequential covers along distinct-query trajectories and applies chaining; the lower bound uses fair labels on an $m$-antichain realized by multiple targets (Appendices~\ref{app:legal-trees}--\ref{app:oecp-lower-semantics}).

\subsection{What Is the Real Problem Size?}
\label{sec:transfer}

Raw length is the wrong scale: containment depends only on endpoints occurring among the candidates. We use $\phi$ only for the conceptual span-to-grid map and $\psi$ for compressing a finite family.
Let $L,R$ be the observed left and right endpoint sets, $a=|L|$, $b=|R|$, with ascending ranks $\rank_L,\rank_R$.

\begin{oecpcompressionbox}
\vspace{-1.6ex}
\begin{equation*}
\begin{aligned}
\meff&=\max(a,b)\le m,\\
\psi([l,r))&=\bigl(a-\rank_L(l)+1,\ \rank_R(r)\bigr).
\end{aligned}
\end{equation*}
\begin{theorem}[Endpoint-rank compression]
\label{thm:compression}
For finite $\mathcal C$, $\psi$ is injective and $s\subseteq s'\iff\psi(s)\preceq\psi(s')$ on $\mathcal C$, so $(\mathcal C,\subseteq)$ embeds exactly into $[a]\times[b]\subseteq[\meff]^2$, and isotonic comparators on $\psi(\mathcal C)$ extend to $[\meff]^2$.
Hence $\Rdist(\mathcal C,\subseteq)\le\Rdist([\meff]^2)$ for $T\le|\mathcal C|$, and $\Rdist(\mathcal C,\subseteq)\le C\,\Gamma_{\meff,T}$ if $\meff\le T\le\meff^2$.
\end{theorem}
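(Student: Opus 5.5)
The argument has three parts: an order-embedding check for $\psi$, an extension lemma for isotonic comparators, and a reduction of learners that transfers regret bounds from $[\meff]^2$ to $(\mathcal C,\subseteq)$.

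\emph{Step 1: $\psi$ is an order embedding.} For spans $s=[l,r)$ and $s'=[l',r')$ we have $s\subseteq s'$ iff $l'\le l$ and $r\le r'$. The maps $\rank_L$ and $\rank_R$ are strictly increasing bijections onto $[a]$ and $[b]$. Hence $l'\le l$ iff $a-\rank_L(l')+1\ge a-\rank_L(l)+1$, and $r\le r'$ iff $\rank_R(r)\le\rank_R(r')$. This gives $s\subseteq s'\iff\psi(s)\preceq\psi(s')$. Injectivity follows because $\psi(s)=\psi(s')$ forces $l=l'$ and $r=r'$; equivalently, it is antisymmetry of both orders. The image lies in $[a]\times[b]\subseteq[\meff]^2$ because $a,b\le\meff$. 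Finally, $\meff\le m$ since $L\subseteq\{0,\dots,m-1\}$ and $R\subseteq\{1,\dots,m\}$.

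\emph{Step 2: extension of isotonic comparators.} Let $Q=\psi(\mathcal C)$ and let $g$ be isotonic on $Q$ with values in $[0,1]$. Define
\[
\bar g(x)=\max\bigl(\{0\}\cup\{g(q):q\in Q,\ q\preceq x\}\bigr)
\]
on all of $[\meff]^2$. This is monotone, since the set of $q\preceq x$ grows with $x$. It also agrees with $g$ on $Q$, because $g$ is isotonic there, so the maximum over $q\preceq x$ with $x\in Q$ is attained at $q=x$. Consequently, for any sequence supported on $Q$, $\min_{f\in\Fiso([\meff]^2)}\sum_t(f(\psi(s_t))-y_t)^2=\min_{g\in\Fiso(\mathcal C)}\sum_t(g(s_t)-y_t)^2$. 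The inequality $\le$ uses the extension; the inequality $\ge$ uses restriction together with Step 1.

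\emph{Step 3: reduction of learners.} Fix a learner $\mathcal A$ for $[\meff]^2$ that achieves its minimax regret up to $\epsilon$. On $\mathcal C$, run $\mathcal A$ on the mapped points $\psi(s_t)$. Any adaptive distinct-query adversary on $\mathcal C$ then induces an adaptive adversary on $[\meff]^2$. That induced adversary is distinct-query because $\psi$ is injective, and it is legal for $T\le|\mathcal C|\le\meff^2$. By Step 2 the comparator terms coincide, and the learner's losses coincide trivially. So the expected regrets are equal, and taking the sup and inf gives $\Rdist(\mathcal C,\subseteq)\le\Rdist([\meff]^2)$. The last claim then follows from Theorem~\ref{thm:upper} applied with $m=\meff$. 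This needs $\meff\ge2$, which we handle by noting that the case $\meff=1$ is a single point, where regret is trivially bounded and absorbed into $C$. It also needs $T\le\meff^2$, and $\min\{T,\Gamma\}\le\Gamma$ finishes the argument.

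The main subtlety is Step 3. One must check that the adversary class transfers: in particular, the adversary on $\mathcal C$ can exploit the shared history, and the minimax quantity over $[\meff]^2$ must dominate it. It does, because every induced strategy is a valid distinct-query strategy on the larger poset. The remaining steps are direct verifications.
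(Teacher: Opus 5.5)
Your proposal is correct and follows the paper's proof essentially step for step. The paper uses the same rank-based order embedding with injectivity and the same extension $x\mapsto\max(\{0\}\cup\{g(q):q\in Q,\ q\preceq x\})$ to get equal comparator minima (Theorem~B.13 and Corollary~B.14); it then simulates the grid learner on $\psi(s_t)$, using $|\mathcal C|\le ab\le\meff^2$ and a trivial bound in the degenerate case $\meff=1$ (Corollary~B.15).
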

\end{oecpcompressionbox}

\noindent
\textbf{Takeaway.} The relevant scale is $\meff$, the number of distinct observed endpoints, not the raw token length $m$ (Fig.~\ref{fig:overview}C); a family of width $w$ also has regret at least $w/4$, so the bound is tight up to logarithms when $w\approx\meff$ (Appendices~\ref{app:endpoint-compression}--\ref{app:width-lower}).

\subsection{Why Does Target Multiplicity Matter?}
\label{sec:single}

The general lower bound needs multiple admissible targets; a single fixed target generates far fewer labelings and may be easier.
A fixed target $g$ labels $h_g(s)=\1\{g\subseteq s\}$; let $\mathcal H_{\rm one}(\mathcal C)=\{h_g|_{\mathcal C}:g\in\mathcal S_m\}$, $N_1(\mathcal C)=|\mathcal H_{\rm one}(\mathcal C)|$, and measure single-target regret against the best $h\in\mathcal H_{\rm one}(\mathcal C)$ under arbitrary $y_t\in[0,1]$.

\begin{oecpmainresultbox}[enhanced jigsaw, breakable, title={Main result: the fixed-target rate}]
\begin{theorem}[Fixed-target OECP]
\label{thm:single}
For every finite $\mathcal C$ with $a$ left and $b$ right endpoints,
\[
N_1(\mathcal C)\le ab+1\le\meff^2+1,
\]
exponential weights over these labelings is a polynomial-time learner with single-target regret $R_T\le\min\{T,\,2\ln N_1(\mathcal C)\}$, and for a known horizon $T$, choosing before play between it and the constant prediction $\tfrac12$ gives
\[
\begin{aligned}
R_T&\le\min\{T/4,\ 2\ln N_1(\mathcal C)\}\\
&=O\bigl(\min\{T,\log(\meff{+}1)\}\bigr).
\end{aligned}
\]
Conversely, for every $K\ge1$ there is a family $\mathcal C_K$ with $\meff=K$ and $|\mathcal C_K|=K^2$ on which, for all $1\le T\le K^2$, every learner incurs, against a target fixed before play and distinct queries, expected loss at least $\frac14\min\{T,\lfloor\log_2(K{+}1)\rfloor\}$.
Hence, in the worst case over families of effective scale $\meff$, the fixed-target rate is $\Theta(\min\{T,\log(\meff{+}1)\})$.
\end{theorem}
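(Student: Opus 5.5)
We prove the three parts in order: the counting bound on $N_1(\mathcal C)$, the upper bound via exponential weights, and the matching lower bound.

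\textbf{Counting bound.} For a fixed target $g=[l_g,r_g)$ the labeling on $\mathcal C$ is $h_g(s)=\1\{l_s\le l_g\}\1\{r_s\ge r_g\}$, where $s=[l_s,r_s)$. This depends on $g$ only through two quantities. The first is the set $\{l\in L: l\le l_g\}$, which is a prefix of the sorted $L$. The second is the set $\{r\in R: r\ge r_g\}$, which is a suffix of the sorted $R$. If either set is empty, $h_g\equiv 0$ on $\mathcal C$. Otherwise the pair is determined by a nonempty prefix and a nonempty suffix, giving at most $ab$ choices. Hence $N_1\le ab+1\le\meff^2+1$.

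The labelings can be enumerated in polynomial time. For each $(i,j)\in[a]\times[b]$, take the prefix and suffix of those sizes and compute the label vector directly; add the all-zero labeling.

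\textbf{Upper bound.} Squared loss on $[0,1]$ is $\eta$-exp-concave with $\eta=1/2$. This gives mixability constant $1/2$ for predictions and outcomes in $[0,1]$. Exponentially weighted averaging with learning rate $1/2$ over the $N_1$ experts $h\in\mathcal H_{\rm one}$ therefore has regret at most $2\ln N_1$ against the best expert, for arbitrary $y_t\in[0,1]$ and any adaptive query order. Distinctness is not needed here.

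The bound $R_T\le T$ is trivial, since each per-round loss lies in $[0,1]$. For the refinement with a known $T$, we compare against the constant predictor $\tfrac12$. Its loss satisfies $\sum_t(\tfrac12-y_t)^2-\sum_t(h(x_t)-y_t)^2\le T/4$, because $h\in\{0,1\}$ gives $(h-y)^2\ge 0$ and $(\tfrac12-y)^2\le\tfrac14$. Before play we choose the constant predictor if $T/4<2\ln N_1$, and exponential weights otherwise. This yields $\min\{T/4,2\ln N_1\}=O(\min\{T,\log(\meff+1)\})$.

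\textbf{Lower bound.} Let $k=\lfloor\log_2(K+1)\rfloor$. The plan is to build $\mathcal C_K$ as the $K\times K$ family of spans, so that $\psi(\mathcal C_K)=[K]^2$ and $\meff=K$. We embed a binary-search game along the antidiagonal-free "staircase" direction.

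Restrict the target to the $K$ choices $g_j$ whose labeling is the up-set $\{(u,v):u\ge j,\,v\ge j'\}$ for an index $j$ on a chain. The adversary queries points $x_1,\dots,x_{\min(T,k)}$, each a bisection point of the current interval of consistent targets. Each query's label is an independent fair bit that halves the consistent set. With $2^k-1\le K$ targets there are enough indices for $k$ halvings.

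Take the target uniformly at random and fixed before play; the adversary simulates bisection deterministically given the revealed bits. Each of the first $\min\{T,k\}$ labels is then a fair coin conditional on the history, so any prediction incurs expected loss at least $\tfrac14$ per round. The best fixed target has loss $0$, which gives expected regret at least $\tfrac14\min\{T,k\}$. For $T>k$ the adversary fills the remaining rounds with unused points, and these only add nonnegative loss. Combining the upper and lower bounds gives the $\Theta$ claim.

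\textbf{Main obstacle.} The main obstacle is making the bisection geometry concrete. We need a one-parameter chain of fixed targets whose labelings differ in a threshold fashion on distinct queryable spans. The first candidate is the targets $g_j=[j-1,j)$ on the diagonal together with the query points $(K-c+1,c)$. A query at the threshold point $c$ must be positive exactly for $j\ge c$, or exactly for $j\le c$.

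The upper bound's exp-concavity constant is standard and should pose no difficulty.
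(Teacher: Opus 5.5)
\textbf{Correct parts, and a fix to your enumeration.} Your counting argument is correct: the label of $g$ on $\mathcal C$ depends only on the prefix $\{l\in L:l\le l_g\}$ and the suffix $\{r\in R:r\ge r_g\}$. It is a shorter route than the paper's canonical-intersection proof. The paper's proof, in exchange, gives the exact value $N_1(\mathcal C)=|\mathcal J(\mathcal C)|+\delta$ and one representative per class. Your exponential-weights and horizon-selection steps match the paper.

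Your enumeration, however, does not produce $\mathcal H_{\rm one}(\mathcal C)$. A pair with $\lambda_i\ge\rho_j$ can give an unrealizable vector. For $\mathcal C=\{[0,2),[3,5)\}$, the full prefix and full suffix label both spans $1$, yet no target lies in two disjoint spans. The zero vector is also unrealizable when $[0,m)\in\mathcal C$. Exponential weights over that list only guarantees $2\ln(ab+1)$, not $2\ln N_1(\mathcal C)$. To fix this, keep only pairs with $\lambda_i<\rho_j$ (then $[\lambda_i,\rho_j)$ realizes the vector), deduplicate, and add zero iff $[0,m)\notin\mathcal C$.

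\textbf{The gap: an off-by-one in the lower bound.} To make each of the first $k=\lfloor\log_2(K+1)\rfloor$ labels exactly fair, you need at least $2^k$ equally likely targets with distinct signatures. This is because every string in $\{0,1\}^k$ must occur with probability $2^{-k}$. Since $2^k$ can equal $K+1$, your claim that $2^k-1\le K$ targets suffice is false. Restricting to the $K$ chain targets yields only $\lfloor\log_2 K\rfloor$ fair rounds, one short whenever $K+1$ is a power of two. At $K=1$ your adversary has a single target, so the learner predicts its labels with zero loss. The claimed $\tfrac14$ then fails, and so does the $\Theta(\log(\meff+1))$ rate at $\meff=1$.

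\textbf{The missing ingredient.} You need a $(K{+}1)$-st hypothesis: the all-zero signature, realized by a legal target contained in no candidate. Such a target exists because $[0,m)\notin\mathcal C_K$. Take $\mathcal C_K=\{[i,K+r):0\le i<K,\ 1\le r\le K\}$, $m=2K+1$, and the target $[2K,2K+1)$.

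In your geometry, $s_c=[c-1,K+c)$ and $g_j=[j-1,j)$ do give label $\1\{j\ge c\}$. But the query $c=1$ is uninformative among $j=1,\dots,K$. The extra target acts as $j=0$ and makes $c=1$ informative. The $K$ thresholds then separate $K+1$ targets, so a uniform dyadic block of size $2^k$ gives $k$ fair rounds.

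This is the paper's construction up to reflection. The paper uses prefix candidates $[0,K+r)$ and one-token targets $[K+J-1,K+J)$ with $J\le 2^k\le K+1$, where $J=K+1$ is the zero signature. You should also state why the bisection queries are pathwise distinct. Each new threshold lies strictly inside the current block, and that block lies on one side of every earlier threshold.
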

\end{oecpmainresultbox}

\noindent\emph{Intuition.}
Each fixed target labels the candidates like an interval with observed endpoints, so at most $ab+1$ labelings exist and exponential weights pays their logarithm ($T/4$ comes from predicting $\tfrac12$); a threshold target probed by median prefix queries gives the lower bound (Appendix~\ref{app:single}).

\begin{corollary}[Target-multiplicity separation]
\label{cor:separation}
For $K\ge2$, $\mathcal C_K$ contains a $K$-element antichain whose labels multiple admissible targets realize independently, so on this same geometry the general rate is $\widetilde\Theta(K)$ for $K\le T\le K^2$, while the fixed-target rate is $\Theta(\log K)$.
\end{corollary}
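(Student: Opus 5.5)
The corollary is a combination of three ingredients already in place: an explicit antichain inside $\mathcal C_K$, the width lower bound that accompanies Theorem~\ref{thm:compression}, and the upper bounds of Theorems~\ref{thm:compression} and~\ref{thm:single}. I will make the family concrete. I take $\mathcal C_K$ to be the family from the converse part of Theorem~\ref{thm:single} with $\meff=K$ and $|\mathcal C_K|=K^2$. The natural realization is all spans $[l,r)$ with $l\in\{0,\dots,K-1\}$ and $r\in\{K+1,\dots,2K\}$ (so $a=b=K$), whose image under $\psi$ is the full grid $[K]^2$. If the construction in Appendix~\ref{app:single} differs, I only need that $\psi(\mathcal C_K)$ contains an anti-diagonal. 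In the grid picture, the anti-diagonal $\{(i,K+1-i)\}$ is an antichain of size $K$ for $\preceq$, and by Theorem~\ref{thm:compression} it is an antichain for $\subseteq$. Concretely, these are the spans $s_i=[l_i,r_i)$ with both endpoints increasing in $i$, so no $s_i$ contains $s_j$.

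\textbf{Independent realizability.} Given any labeling $\epsilon\in\{0,1\}^K$ of the antichain, I set $G=\{s_i:\epsilon_i=1\}$. Then $Y_{s_j}=1$ iff some $s_i\in G$ satisfies $s_i\subseteq s_j$. Since the family is an antichain, this forces $i=j$, so $Y_{s_j}=\epsilon_j$ exactly. This is a genuine multi-target instance, so the fair-label argument of Corollary~\ref{cor:rate} (Appendix~\ref{app:width-lower}) applies. The adversary queries the $K$ antichain points first with i.i.d.\ fair labels. Any learner pays expected loss $1/4$ per query, while the comparator that labels antichain points by $\epsilon$ (isotonic, since antichain points are incomparable and the labeling comes from a legitimate $G$) pays $0$. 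This gives regret at least $K/4$ once $T\ge K$. The remaining $T-K$ rounds can be fed labels consistent with $Y$ under $G$, so the comparator stays at loss $0$ and the learner's extra loss is nonnegative.

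\textbf{Upper bounds.} For the general class, Theorem~\ref{thm:compression} with $\meff=K$ and $K\le T\le K^2$ gives regret at most $C\Gamma_{K,T}\le CK(1+\log^2 K)$ up to constants. Together with the lower bound this yields $\widetilde\Theta(K)$. For the fixed-target class, Theorem~\ref{thm:single} gives regret at most $2\ln(K^2+1)=O(\log K)$. Its converse on the same $\mathcal C_K$ gives at least $\tfrac14\lfloor\log_2(K+1)\rfloor$ for $T\ge\log_2(K+1)$, which holds since $T\ge K$. Hence the fixed-target rate is $\Theta(\log K)$ on this geometry.

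\textbf{Main obstacle.} The delicate step is aligning the two lower-bound constructions on the \emph{same} family, i.e.\ checking that the $\mathcal C_K$ used for the threshold/median lower bound in Theorem~\ref{thm:single} really contains a $K$-antichain. If the appendix uses, for example, prefix-type spans, I would instead define $\mathcal C_K$ as the full rank grid above. I would then re-verify that the median-prefix threshold argument still embeds in it, which it should, because the grid contains a chain of length $K$ along one row.
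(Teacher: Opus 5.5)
Your proposal is correct and follows the paper's own proof: the paper's $\mathcal C_K=\{[i,K+r):0\le i<K,\ 1\le r\le K\}$ is exactly your full rank grid, and its antichain $A_i=[i,K+i+1)$ is your anti-diagonal. The paper pairs the fair-bit multi-target lower bound with Theorem~\ref{thm:compression}, and Theorem~\ref{thm:single-lower} with Corollary~\ref{cor:single-horizon}, just as you do. Your ``main obstacle'' does not arise, because the threshold argument's prefix spans $[0,K+r)$ form a row of that same grid; the paper also writes out the constants behind $\Theta(\log K)$, using $K\ge2$.
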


\noindent
\textbf{Takeaway.} Target multiplicity changes the statistical regime: one fixed target has worst-case $\Theta(\log\meff)$ regret with a constructive learner, whereas the multi-target envelope is near-linear, $\widetilde\Theta(\meff)$.
The two regimes describe different data: single-answer QA instances are fixed-target, while the near-linear rate needs wide antichains with independently mixed labels, which alternative admissible targets can create (e.g., ERASER evidence groups) but which are rare on NQ ($2.13\%$ of documents; Appendix~\ref{app:seq}).

\begin{figure}[t]
\centering
\includegraphics[width=\columnwidth]{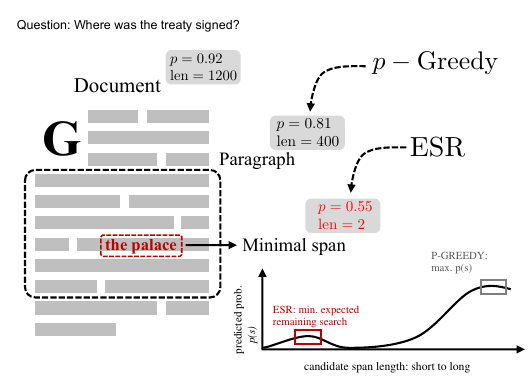}
\caption{Schematic running example: gray boxes give predicted containment probability $p$ and length. The inset is illustrative, not measured: P-GREEDY picks the maximum-$p$ coarse candidate; ESR minimizes the expected remaining search space, moving toward the minimal span.}
\label{fig:esr}
\end{figure}

\Needspace{8\baselineskip}
\section{Decision-Aware Verification}
\label{sec:esr}

Section~\ref{sec:formulation} controls how well containment can be predicted before each verification; it does not decide what to verify next. Of two equally likely candidates, the one whose outcome removes more alternatives is more informative (Fig.~\ref{fig:esr}). ESR consumes the predicted probabilities and the containment order together and formalizes this one-step tradeoff: better calibrated probabilities change ESR's expected remaining set, whereas a greedy rule uses only their ranking. We do not derive search guarantees from the regret bounds; the link is empirical (Section~\ref{sec:decision}).
Targeting one compact sufficient region, not every admissible target, we maintain a \textcolor{OECPTeal}{\textbf{\emph{verified-positive incumbent}}} $R_t$ and an \textcolor{OECPTeal}{\textbf{\emph{active set}}} $A_t$ of feasible unverified candidates, $N_t=|A_t|$.
For $s\in A_t$, let $B_t(s)=\{u\in A_t:u\subseteq s\}$ with \textcolor{OECPTeal}{\textbf{\emph{branch size}}} $b_t(s)=|B_t(s)|$; verifying $s$ gives
\[
A_{t+1}=
\begin{cases}
B_t(s)\setminus\{s\}, & Y_s=1,\\
A_t\setminus B_t(s), & Y_s=0,
\end{cases}
\]
with $R_{t+1}=s$ if positive: a positive outcome leaves the $b_t(s)-1$ candidates inside $s$, a negative one prunes $s$ and its descendants and leaves $N_t-b_t(s)$.
With predicted containment probability $p_t(s)$, the \textcolor{OECPTeal}{\textbf{\emph{expected remaining set}}} after verifying $s$ is
\[
\textcolor{OECPTeal}{\mathcal L_t(s)}=p_t(s)\,[b_t(s)-1]+\bigl(1-p_t(s)\bigr)\,[N_t-b_t(s)],
\]
and \textcolor{OECPTeal}{\textbf{\emph{Expected Search-space Reduction (ESR)}}} verifies the candidate with the smallest expected remaining search space:
\[
\textcolor{OECPTeal}{s_t^{\mathrm{ESR}}}\in\arg\min_{s\in A_t}\mathcal L_t(s).
\]

\begin{proposition}[One-step optimality]
\label{prop:esr}
Conditioned on $A_t$ and the supplied probabilities $p_t$, ESR minimizes the expected number of active candidates after the next verification over all $s\in A_t$.
\end{proposition}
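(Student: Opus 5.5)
The plan is to treat the supplied probability $p_t(s)$ as the conditional probability that $Y_s=1$ given $A_t$ and the history. Under that reading, $\mathcal L_t(s)$ is exactly the conditional expectation of $N_{t+1}=|A_{t+1}|$ when $s$ is verified, and the claim reduces to ``the argmin of a function is a minimizer of that function.''

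The first step is to compute $|A_{t+1}|$ in each outcome directly from the update rule.

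\begin{itemize}
\item If $Y_s=1$, then $A_{t+1}=B_t(s)\setminus\{s\}$. Since $s\subseteq s$, we have $s\in B_t(s)$, so $|A_{t+1}|=b_t(s)-1$.
\item If $Y_s=0$, then $A_{t+1}=A_t\setminus B_t(s)$. Since $B_t(s)\subseteq A_t$ by definition, $|A_{t+1}|=N_t-b_t(s)$.
\end{itemize}

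Both quantities are deterministic functions of $A_t$ and $s$; the only randomness given the conditioning is the label $Y_s$.

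The second step takes the conditional expectation, writing $\Pr(Y_s=1\mid A_t,p_t)=p_t(s)$:
\[
\mathbb E\bigl[|A_{t+1}|\,\big|\,A_t,p_t,\ s_t=s\bigr]=p_t(s)\,[b_t(s)-1]+\bigl(1-p_t(s)\bigr)\,[N_t-b_t(s)]=\mathcal L_t(s).
\]
Because $A_t$ is finite, $\mathcal L_t$ attains its minimum over $A_t$. By definition $s_t^{\mathrm{ESR}}$ satisfies $\mathcal L_t(s_t^{\mathrm{ESR}})\le\mathcal L_t(s)$ for every $s\in A_t$, which is exactly the assertion.

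The third step extends the conclusion to randomized selection rules. Any randomized choice over $A_t$, made independently of the label given the conditioning, yields a convex combination of the values $\mathcal L_t(s)$. Such a combination is at least $\min_s\mathcal L_t(s)$, so ESR remains optimal among randomized rules as well.

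The main subtlety is interpretive rather than technical. The statement says ``conditioned on \ldots\ the supplied probabilities,'' so the expectation must be taken with respect to the model's $p_t$, not the true label distribution. I would state this explicitly: the optimality is relative to the supplied beliefs, and no calibration of $p_t$ is assumed. I would also note that ties in the argmin are harmless, since any minimizer achieves the same minimal value.
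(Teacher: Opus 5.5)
Your proposal is correct and follows the paper's own proof. You compute the post-verification active-set size in each outcome ($b_t(s)-1$ if positive, $N_t-b_t(s)$ if negative), take the expectation under the supplied $p_t(s)$ to recover $\mathcal L_t(s)$, and note that ESR is by definition its minimizer over $A_t$. Your extra remarks on randomized selection rules, and on optimality being relative to the supplied (not necessarily calibrated) probabilities, are consistent with the paper's caveat that the result is conditional on $p_t$.
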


\noindent\emph{Proof.}
Querying $s$ leaves $b_t(s)-1$ active candidates if $Y_s=1$ and $N_t-b_t(s)$ if $Y_s=0$; take expectations. \qed

\noindent
This \textcolor{OECPTeal}{\textbf{\emph{one-step}}} result is conditional on $p_t$ and implies no global, minimax, or multi-step optimality; ESR is a fixed, parameter-free rule.

\paragraph{The OECP pipeline.}
Algorithm~\ref{alg:oecp-pipeline} combines three modules defined above: containment prediction $P_\theta$, optional isotonic correction, and ESR selection.
Given probabilities $q_t(s)=P_\theta(Y_s=1\mid q,d,s,H_t)$ from any containment-aware predictor and the verified history $H_t$, the isotonic operator
\[
\begin{aligned}
\mathcal I(q_t,H_t)={}&\arg\min_{f\in\Fiso(\mathcal C,\subseteq)}\Bigl[\lambda\sum_{s\in\mathcal C}\bigl(f(s)-q_t(s)\bigr)^2\\
&+\sum_{(s,y)\in H_t}\bigl(f(s)-y\bigr)^2\Bigr]
\end{aligned}
\]
returns probabilities monotone under containment ($s\subseteq s'\Rightarrow f(s)\le f(s')$), from which ESR selects the next verification; Section~\ref{sec:experiments} evaluates each module.

\begin{algorithm}[t]
\caption{OECP Prediction and Adaptive Evidence Verification}
\label{alg:oecp-pipeline}
\begin{algorithmic}[1]
\Require query $q$, document $d$, candidates $\mathcal C$, root $r$, predictor $P_\theta$, verifier $\mathcal V$, budget $B$
\State $E\gets\mathrm{Hasse}(\psi(\mathcal C))$; $\mathrm{Sub}(s)\gets\{u\in\mathcal C{:}\,u\subseteq s\}$
\State $R\gets r$; $H\gets\{(r,1)\}$; $A\gets\mathrm{Sub}(r)\setminus\{r\}$ \Comment{root}
\For{$t=2,\dots,B$}
  \State $q_t(s)\gets P_\theta(Y_s{=}1\mid q,d,s,H)$ \Comment{predict}
  \State $p_t\gets\mathcal I(q_t,H)$ over edges $E$ \Comment{isotonic}
  \State $B_t(s)\gets A\cap\mathrm{Sub}(s)$; $\mathcal L_t(s)$ as in Sec.~\ref{sec:esr}
  \State $s_t\gets\arg\min_{s\in A}\mathcal L_t(s)$ \Comment{ESR}
  \State $y_t\gets\mathcal V(s_t)$; $H\gets H\cup\{(s_t,y_t)\}$ \Comment{verify}
  \State \textbf{if} $y_t=1$ \textbf{then} $R\gets s_t$; $A\gets B_t(s_t)\setminus\{s_t\}$
  \State \textbf{else} $A\gets A\setminus B_t(s_t)$ \Comment{update}
  \State \textbf{if} $A=\emptyset$ \textbf{then break}
\EndFor
\State \Return localized evidence $R$
\end{algorithmic}
\end{algorithm}

\section{Experiments}
\label{sec:experiments}

We test whether the containment order appears in real candidate families, separate its deductive and learned contributions to prediction, instantiate the theory's learners, and evaluate ESR on benchmark and retriever-generated candidates, separating localization efficiency from downstream reader accuracy.

\begin{figure*}[t]
\centering
\includegraphics[width=\textwidth]{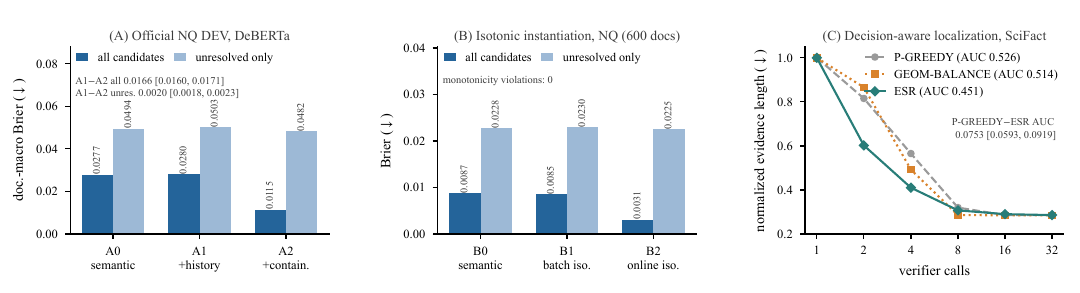}
\caption{\textbf{(A)} Official public NQ DEV ($7{,}830$ documents): matched DeBERTa models differing only in history inputs; containment-aware A2 lowers Brier on all and on unresolved candidates. \textbf{(B)} Internal NQ DEV ($600$ documents): batch (B1) and prequential (B2) isotonic learners. \textbf{(C)} SciFact held-out ($209$ instances): normalized length of the verified-positive region per verifier-call budget; ERASER is in Table~\ref{tab:esr-cross-benchmark}. Brackets: 95\% paired bootstrap CIs.}
\label{fig:experiments}
\end{figure*}

\subsection{Experimental Setup}
\label{sec:setup}

\paragraph{Data.}
We use $3{,}000$ label-free Natural Questions training documents for geometry \citep[NQ;][]{kwiatkowski2019natural}, the official public NQ DEV set ($7{,}830$ documents, $4.36$M candidates), and, for the isotonic and sequential mechanism analyses only, $600$ locked internal NQ DEV documents ($354{,}856$ candidates); official public DEV is the primary neural evaluation.
ERASER Evidence Inference \citep{lehman2019inferring,deyoung2020eraser} has alternative evidence groups, hence multiple targets; SciFact \citep{wadden2020fact} gives $209$ fresh held-out claim--abstract instances ($13{,}463$ sentence windows).
NQ and ERASER candidates are exhaustive multigranular families (paragraphs, sentences, and runs of $2$, $4$, or $8$ units), used for controlled geometry and prediction analysis; SciFact candidates are all contiguous sentence windows (Appendix~\ref{app:setup}). Section~\ref{sec:realgen} replaces them with retriever-generated hierarchies.

\paragraph{Predictors.}
On ERASER, BASE is a logistic regression over static span geometry and query--sentence similarities under a frozen encoder; BASE-H adds generic online-history counts; COMBINED$^*$ adds sixteen learned containment features of previously verified sub- and superspans; Closure+BASE applies the deterministic consequences of monotonicity, and RAND-CONT permutes the containment graph (Appendix~\ref{app:model}).
On NQ, A0 (semantic only), A1 (+ generic history), and A2 (+ containment history) are matched fine-tuned DeBERTa-v3-base models (three seeds; Appendix~\ref{app:lanea}).
The three models share the same $384$-token template and prediction head and differ only in their history inputs.
A candidate is \textbf{unresolved} if, when predicted, its label is not forced by a verified positive subspan or negative superspan; these candidates isolate learned signal from deterministic closure.

\subsection{Geometry of Real Candidate Families}
\label{sec:geometry}

\begin{table}[t]
\centering
\footnotesize
\setlength{\tabcolsep}{2.8pt}
\begin{tabular}{@{}lrrrrrr@{}}
\toprule
Data & $m$ & $\meff$ & $m/\meff$ & $w{=}\meff$ & Rel. & Err. \\
\midrule
NQ & 6336.5 & 90.5 & 58.37$\times$ & 93.43\% & 26.5M & 0 \\
ERASER & 4410 & 145 & 30.10$\times$ & 100\% & 127.6M & 0 \\
\bottomrule
\end{tabular}
\caption{Containment geometry.}
\label{tab:geom}
\end{table}

Endpoint-rank compression reduces the median document from $m=6{,}336.5$ tokens to $\meff=90.5$ on NQ ($58.37\times$) and from $4{,}410$ to $145$ on ERASER ($30.10\times$), and exhaustive checks confirm that $\psi$ preserves all $26.5$M and $127.6$M strict containments (Table~\ref{tab:geom}).
Width equals $\meff$ in $93.43\%$ of NQ and all ERASER documents, so the lower and upper bounds coincide up to logarithms; $27.48\%$ of ERASER instances have multiple targets.

\subsection{Containment Improves Online Prediction}
\label{sec:predictive}

\begin{table}[t]
\centering
\footnotesize
\setlength{\tabcolsep}{2.6pt}
\begin{tabular}{@{}lrrrrr@{}}
\toprule
ERASER & Macro$\downarrow$ & Raw$\downarrow$ & PRC$\uparrow$ & ROC$\uparrow$ & BAcc$\uparrow$ \\
\midrule
STATIC & .2896 & .0704 & .4447 & .8495 & .5928 \\
SEMANT. & .3038 & .0764 & .3384 & .8421 & .5439 \\
BASE & .2783 & .0691 & .4578 & .8700 & .5965 \\
BASE-H & .2732 & .0670 & .5004 & .8757 & .6017 \\
Closure+B. & \underline{.1283} & \underline{.0294} & \underline{.8843} & \underline{.9775} & \underline{.8594} \\
RAND-CONT & .4303 & .3576 & .1455 & .6302 & .5674 \\
\textbf{COMB.$^*$} & \textbf{.0455} & \textbf{.0115} & \textbf{.9803} & \textbf{.9973} & \textbf{.9427} \\
\multicolumn{6}{@{}l}{\scriptsize $\Delta$Macro vs BASE .2328 [.2268,.2389]; vs BASE-H} \\
\multicolumn{6}{@{}l}{\scriptsize .2277 [.2211,.2346]; vs RAND-CONT .3848 [.3819,.3875]} \\
\midrule
NQ & Brier$\downarrow$ & Unres.$\downarrow$ & PRC$\uparrow$ & ROC$\uparrow$ & ECE$\downarrow$ \\
\midrule
A0 semantic & .0277 & .0494 & .2156 & .9047 & .0071 \\
A1 +history & .0280 & .0503 & .1876 & .9037 & .0066 \\
\textbf{A2 +contain.} & \textbf{.0115} & \textbf{.0482} & \textbf{.8159} & \textbf{.9808} & \textbf{.0030} \\
\multicolumn{6}{@{}l}{\scriptsize $\Delta$(A1$-$A2): all .0166 [.0160,.0171]; unresolved .0020 [.0018,.0023]} \\
\bottomrule
\end{tabular}
\caption{Predictive results on ERASER test and official NQ DEV. Macro: macro balanced Brier; Unres.: unresolved Brier; PRC/ROC: AUPRC/AUROC; BAcc: balanced accuracy.}
\label{tab:main}
\end{table}

On ERASER test data, COMBINED$^*$ reduces macro balanced Brier from $0.278$ (BASE) to $0.045$, and its gain over generic-history BASE-H is $0.228$ (CI $[0.221, 0.235]$; Table~\ref{tab:main}, top).
Deterministic closure alone (Closure+BASE) accounts for $69\%$ of the raw-Brier reduction from BASE to COMBINED$^*$; on the $1.45$M unresolved predictions, where closure has nothing to propagate, COMBINED$^*$ still lowers raw Brier from $0.114$ to $0.045$ (Appendix~\ref{app:closure}). Permuting the containment graph (RAND-CONT) is worse than every static baseline.

\subsection{Deductive and Learned Gains Across Backbones}
\label{sec:neural}

On official NQ public DEV, the primary neural evaluation, A2 lowers document-macro Brier from $0.0280$ (A1) to $0.0115$ (A1$-$A2 $0.0166$, CI $[0.0160, 0.0171]$; Table~\ref{tab:main}, bottom; Fig.~\ref{fig:experiments}A), positive in all three seeds, while generic history alone does not help (A0 $0.0277$).
Almost all of this gain is deductive: closure contributes $0.0161$ and the learned unresolved component $0.0005$ (Appendix~\ref{app:lanea}).
On unresolved candidates the residual is smaller but reproducible: Brier falls from $0.0503$ to $0.0482$ ($0.0020$, $[0.0018, 0.0023]$), and with RoBERTa-base \citep{liu2019roberta} the unresolved A1$-$A2 gain is $0.00120$ ($[0.00096, 0.00145]$; Appendix~\ref{app:roberta}).

\subsection{Practical and Constructive Instantiation}
\label{sec:isotonic}

On the internal NQ DEV split, batch isotonic projection of a frozen semantic score (B1) lowers all-candidate Brier from $0.0087$ (B0) to $0.0085$, and the prequential isotonic learner (B2) to $0.0031$, with unresolved Brier $0.0228\to0.0225$ (Fig.~\ref{fig:experiments}B; Appendix~\ref{app:laneb}).
On $1{,}689$ real NQ, ERASER, and SciFact families ($|\mathcal C|\le3{,}885$, $\meff\le724$), exact projection keeps every containment constraint and stays lightweight (median $0.359$\,ms; Table~\ref{tab:computational-footprint}, Appendix~\ref{app:scaling}).

\paragraph{Direct theorem instantiation.}
The fixed-target regime is the one closest to single-answer QA, so we run its constructive learners on the $3{,}640$ official NQ DEV documents with an exact unique-target realization: $N_1(\mathcal C)\le ab+1\le\meff^2+1$ holds for every family (median $N_1=721$), and over $21{,}840$ distinct-query trajectories exponential weights satisfies $R_T\le2\ln N_1(\mathcal C)$ and the version-space learner $L_T\le\frac12\ln N_1(\mathcal C)$ on every trajectory (medians $4.32$, $1.80$; Appendix~\ref{app:fixedinst}).

\subsection{Sequential Mechanism}
\label{sec:sequential}

On internal NQ DEV trajectories of a frozen-BERT predictor pair with and without containment inputs, comparable verified history precedes $92.0\%$ of predictions and the containment gain concentrates there ($0.0159$, $[0.0142, 0.0177]$, $599$ documents), largest when a verified span forces the label (Appendix~\ref{app:seq}).
Swapping comparable spans' labels for non-comparable ones raises Brier by $0.3155$ on the $44.98\%$ of events where feasible: the signal comes from \emph{which} spans were verified.

\subsection{Robustness}
\label{sec:robust}

COMBINED$^*$ improves on BASE by $0.205$--$0.270$ under all four ERASER schedules, whose logically determined share ranges from $11.2\%$ to $88.4\%$ (Appendix~\ref{app:scheduler}), and permuting the containment graph (RAND-CONT) costs $0.385$ ($[0.382, 0.388]$).

\subsection{Decision-Aware Evidence Localization}
\label{sec:decision}

\begin{table}[t]
\centering
\small
\setlength{\tabcolsep}{3pt}
\begin{tabular}{@{}lrrrr@{}}
\toprule
Dataset & P-GREEDY & ESR & P-GREEDY$-$ESR$\uparrow$ & Calls$\downarrow$ \\
\midrule
SciFact & .526 & \textbf{.451} & \textbf{.075} [.059, .092] & 6$\rightarrow$\textbf{4} \\
ERASER & .470 & \textbf{.354} & \textbf{.116} [.098, .136] & 16$\rightarrow$\textbf{11} \\
\bottomrule
\end{tabular}
\caption{Evidence-length AUC ($\downarrow$), P-GREEDY$-$ESR ($\uparrow$ favors ESR; 95\% cluster-bootstrap CI), and median calls to the minimal oracle region.}
\label{tab:esr-cross-benchmark}
\end{table}

With a frozen, temperature-calibrated SciBERT predictor on held-out SciFact, ESR lowers evidence-length AUC from $0.526$ under probability-greedy acquisition (P-GREEDY) to $0.451$ (P-GREEDY$-$ESR $0.0753$, $[0.0593, 0.0919]$) and median calls to the minimal oracle region from $6$ to $4$; on $959$ ERASER instances the AUC falls from $0.470$ to $0.354$ ($0.116$, $[0.098, 0.136]$) and calls from $16$ to $11$ (Table~\ref{tab:esr-cross-benchmark}; Fig.~\ref{fig:experiments}C; Appendices~\ref{app:f2}, \ref{app:eraser-esr}).
Evidence-length AUC measures localization and verification efficiency, not task correctness; downstream readers are evaluated separately.
On SciFact, ESR also improves on the probability-free balanced split GEOM-BALANCE (length AUC $0.0637$, $[0.0486, 0.0790]$; Appendix~\ref{app:f2}), so probabilities matter beyond branch sizes.
Conversely, on ERASER decision frontiers COMBINED$^*$ improves Brier over BASE yet induces the identical candidate ranking (Appendix~\ref{app:f1diag}), so probability-greedy acquisition cannot use the better predictor; ESR, which weighs probabilities against branch sizes, chooses differently from P-GREEDY in $207$ of $212$ shared SciFact states ($97.6\%$) and $83.7\%$ of ERASER states, at a median decision cost of $0.162$\,ms (Appendix~\ref{app:scaling}).

\subsection{Validation with Retrieval-Generated Candidate Hierarchies}
\label{sec:realgen}

\begin{table}[t]
\centering
\footnotesize
\setlength{\tabcolsep}{1.6pt}
\begin{tabular*}{\columnwidth}{@{\extracolsep{\fill}}lrrrr@{}}
\toprule
DPR@8 & Nonroot+ & Min.\,tgt+ & Med.\,$|\mathcal C|$ & p90 \\
\midrule
NQ DEV & 70.69\% & 66.01\% & 40 & 56 \\
\multicolumn{5}{@{}l}{\scriptsize max $|\mathcal C|$ 332; comparable pairs / ${\ge}3$ levels: 99.97\% of covered} \\
\midrule
Predictor & Unres.$\downarrow$ & $-$A2$\uparrow$ & \multicolumn{2}{r@{}}{95\% CI} \\
\midrule
A0 semantic & .0762 & .0176 & \multicolumn{2}{r@{}}{\mbox{[.0168,.0184]}} \\
A1 +history & .0647 & .0062 & \multicolumn{2}{r@{}}{\mbox{[.0057,.0067]}} \\
\textbf{A2 +contain.} & \textbf{.0586} & -- & \multicolumn{2}{r@{}}{--} \\
\midrule
Verification & Base & ESR & \multicolumn{2}{r@{}}{Gain$\uparrow$ [95\% CI]} \\
\midrule
Len.\,AUC$\downarrow$ (LS) & .2560 & .2473 & \multicolumn{2}{r@{}}{\mbox{.0087 [.0055,.0121]}} \\
Len.\,AUC$\downarrow$ (GG) & .2752 & .2473 & \multicolumn{2}{r@{}}{\mbox{.0279 [.0213,.0346]}} \\
S@8$\uparrow$ (GG) & .6310 & .6419 & \multicolumn{2}{r@{}}{\mbox{.0109 [.0031,.0189]}} \\
\midrule
Recent sel. & Call$\downarrow$ & \multicolumn{3}{r@{}}{Method$-$ESR$\uparrow$ [95\% CI]} \\
\midrule
NEST-Adapt & .3085 & \multicolumn{3}{r@{}}{\mbox{.0612 [.0548,.0677]}} \\
REARANK & .3239 & \multicolumn{3}{r@{}}{\mbox{.0766 [.0697,.0835]}} \\
Cost-MMR-Adapt & .6758 & \multicolumn{3}{r@{}}{\mbox{.4285 [.4208,.4360]}} \\
RCD-Adapt & .8639 & \multicolumn{3}{r@{}}{\mbox{.6166 [.6093,.6240]}} \\
ESR & \textbf{.2473} & \multicolumn{3}{r@{}}{--} \\
\midrule
Reader F1 & AUC$\uparrow$ & @8$\uparrow$ & \multicolumn{2}{r@{}}{Gain$\uparrow$ [95\% CI]} \\
\midrule
LEARN.-SCHED & .3528 & .4392 & \multicolumn{2}{r@{}}{\mbox{.0046 [.0022,.0070]}} \\
GEN.-GREEDY & .3509 & .4381 & \multicolumn{2}{r@{}}{\mbox{.0065 [.0017,.0113]}} \\
ESR & .3574 & .4467 & \multicolumn{2}{r@{}}{--} \\
\bottomrule
\end{tabular*}
\caption{Frozen DPR ($k=8$) hierarchies on NQ public DEV. Prediction and length AUC use $3{,}887$ generated-covered documents; SUCCESS@8 (S@8) keeps all $5{,}499$ answerable ones, scoring retrieval misses zero. LS: TRAIN-only learned scheduler; GG: generator-greedy; Gain is base$-$ESR for length AUC and ESR$-$base for S@8. Recent sel.: verifier-call evidence-length AUC (lower is better). Reader F1: call-budget F1-AUC and F1@8 of the frozen DPR reader (gain ESR$-$comparator) on the official short-answer/yes-no subset ($N=4{,}460$).}
\label{tab:real-generator}
\end{table}

We finally replace the exhaustive families with realistic retriever-generated pools: the document root, the top-$8$ paragraphs of a frozen NQ-trained DPR retriever \citep{karpukhin2020dense}, and their sentences, on $5{,}499$ answerable NQ DEV documents.
DPR yields non-root positive proposals for $70.7\%$ of documents (median pool $40$), and $99.97\%$ of covered documents keep comparable candidates and ${\ge}3$ hierarchy levels (Table~\ref{tab:real-generator}); without retraining, A2 improves unresolved Brier over A1 and A0, and ESR improves length AUC over the TRAIN-only learned scheduler and generator-greedy verification and raises coverage-aware SUCCESS@8 over generator-greedy.
On the same frozen pools and exact containment transitions, we compare ESR with four runnable recent external selectors: listwise reasoning reranking, nested evidence survival, budgeted routing, and cost-aware selection, each adapted as documented in Table~\ref{tab:recent-fidelity}.
Under verifier-call budgets, ESR has lower evidence-length AUC than each of the four; the strongest, NEST-Adapt, scores $0.3085$ versus ESR's $0.2473$ (paired difference $0.0612$, 95\% CI $[0.0548, 0.0677]$).
With a frozen NQ-trained DPR reader on the official public-DEV short-answer/yes-no subset ($N=4{,}460$), ESR improves call-budget F1-AUC over the TRAIN-only learned scheduler by $0.0046$ ($[0.0022, 0.0070]$) and over generator-greedy by $0.0065$ ($[0.0017, 0.0113]$); at eight calls ESR's F1 is $0.4467$, versus $0.4392$ and $0.4381$; these reader gains are small and specific to this setting.
Token-budget, success, BM25 and cutoff sensitivity, and full reader analyses are in Appendices~\ref{app:realgen}, \ref{app:recent}, and~\ref{app:downstream} \citep{robertson2009probabilistic}.

\section{Conclusion}
\label{sec:conclusion}

Nested evidence verification is exactly distinct-query online prediction on a two-dimensional order scaled by observed endpoints; the multi-target envelope is near-linear, while a single fixed target, the standard QA case, is logarithmic with a constructive learner.
Empirically, containment acts through exact deduction plus a smaller reproducible learned signal, and ESR, a one-step structural rule, turns them into fewer verifier calls on SciFact, ERASER, and DPR hierarchies; on DPR hierarchies under call budgets it outperforms four recent external selectors and gives small fixed-reader F1 gains over learned and generator-greedy scheduling.

\section*{Limitations}

The general-class upper bound is non-constructive and tight up to logarithmic factors; a constructive near-optimal multi-target learner remains open.
OECP assumes exact verifier feedback; robustness to noisy or model-based verification remains open.
ESR is a one-step rule without multi-step guarantees, and all evaluations are in English.

\bibliographystyle{icml2026}
\bibliography{oecp}

\clearpage
\appendix
\setcounter{secnumdepth}{3}

\onecolumn
\section{Notation}
\label{app:notation}

Table~\ref{tab:notation} collects the notation of the main text (Sections~\ref{sec:formulation}--\ref{sec:experiments}) and of the proofs in Appendices~\ref{app:complete-proofs} and~\ref{app:single}.
Symbols marked \emph{(proof)} appear only in the appendix proofs.

\begingroup
\captionsetup[table]{position=above}
\small
\setlength{\tabcolsep}{4pt}
\renewcommand{\arraystretch}{1.08}
\begin{longtable}{@{}>{\raggedright\arraybackslash}p{0.34\textwidth}p{0.62\textwidth}@{}}
\caption{Notation used in the paper.}\label{tab:notation}\\
\toprule
Symbol & Meaning \\
\midrule
\endfirsthead
\multicolumn{2}{@{}l}{\tablename~\thetable\ (continued)}\\
\toprule
Symbol & Meaning \\
\midrule
\endhead
\midrule
\multicolumn{2}{r@{}}{\emph{continued on next page}}\\
\endfoot
\bottomrule
\endlastfoot

\multicolumn{2}{@{}l}{\textbf{Evidence verification (Section~\ref{sec:oecp})}}\\
$q$, $d$ & Query (question or claim) and document. \\
$m$ & Raw document length in tokens; also the side of the full grid $[m]^2$. \\
$s=[l,r)$ & Candidate span with left endpoint $l$ and right endpoint $r$, $0\le l<r\le m$. \\
$\mathcal S_m$ & All spans of a document of $m$ tokens. \\
$\mathcal C$ & Finite candidate family, $\mathcal C\subseteq\mathcal S_m$, proposed independently of the gold evidence. \\
$G$, $g$ & Hidden set of admissible target spans; one target. \\
$Y_s=\1\{\exists g\in G:g\subseteq s\}$ & Containment label: $1$ iff $s$ contains a complete target. \\
$t$, $T$ & Round index; number of (distinct) verification rounds. \\
$s_t$ & Candidate verified at round $t$. \\
$H_t$ & Verification history (verified candidates and labels) before round $t$. \\
unresolved & A candidate whose label is not forced by a verified positive subspan or negative superspan when it is predicted. \\

\multicolumn{2}{@{}l}{\textbf{Online prediction and regret (Sections~\ref{sec:oecp}--\ref{sec:minimax})}}\\
$(P,\preceq)$ & Finite partially ordered set; $\preceq$ is also the coordinatewise order on grids. \\
$\Fiso(P)$ & $[0,1]$-valued isotonic (order-preserving) functions on $P$. \\
$x_t$, $\hat y_t$, $y_t$ & Queried point, learner prediction, and revealed outcome in the online game. \\
$R_T$ & Cumulative squared-loss regret against the best comparator on the realized trajectory. \\
$\Rdist(P)$ & Distinct-query minimax regret on $P$ (each point queried at most once on every path). \\
$\mathcal A$, $\mathcal E_{\mathrm{dist}}$ & Learners; adaptive distinct-query adversaries. \\
$\Gamma_{m,T}=m[1+\log(em)\log(1+T/m)]$ & Scale of the upper bound in Theorem~\ref{thm:upper}. \\
$C$ & Universal constant in Theorems~\ref{thm:upper} and~\ref{thm:compression}. \\
$\widetilde\Theta$, $\widetilde O$ & Rates up to polylogarithmic factors. \\

\multicolumn{2}{@{}l}{\textbf{Span geometry and compression (Sections~\ref{sec:sparse}--\ref{sec:transfer})}}\\
$[m]^2$ & Full $m\times m$ grid with the coordinatewise order. \\
$\phi([l,r))=(m-l,r)$ & Conceptual span-to-grid map (Proposition~\ref{prop:span-order}). \\
$D_m=\{(u,v)\in[m]^2:u+v\ge m+1\}$ & Triangle of grid points that represent legal spans. \\
$L$, $R$; $a=|L|$, $b=|R|$ & Observed left and right endpoints of $\mathcal C$ and their numbers. \\
$\rank_L$, $\rank_R$ & Ascending ranks within $L$ and $R$. \\
$\psi([l,r))=(a-\rank_L(l)+1,\rank_R(r))$ & Endpoint-rank compression of a finite family (Theorem~\ref{thm:compression}). \\
$\meff=\max(a,b)$ & Effective scale: the larger number of distinct observed endpoints; $\meff\le m$. \\
$w=\width(P)$ & Width: size of the largest antichain of $P$. \\

\multicolumn{2}{@{}l}{\textbf{Fixed-target OECP (Section~\ref{sec:single}, Appendix~\ref{app:single})}}\\
$h_g(s)=\1\{g\subseteq s\}$ & Labeling induced by one fixed target $g$. \\
$\mathcal H_{\rm one}(\mathcal C)$, $N_1(\mathcal C)$ & Distinct fixed-target labelings of $\mathcal C$ and their number. \\
$\mathcal C_K$ & Hard family with $\meff=K$ and $|\mathcal C_K|=K^2$ in Theorem~\ref{thm:single}. \\
$g^\star$ & Target fixed before play in the realizable game. \\
$L_T$ & Cumulative square loss (realizable game; also the version-space learner). \\
$n=|\mathcal C|$, $K=\meff$ & Family size; effective scale in the fixed-target analysis. \\
$\mathcal J(\mathcal C)$ & \emph{(proof)} Nonempty intersections of at most two candidates, used as canonical targets. \\
$\lambda_i$, $\rho_j$; $U_i$, $V_j$ & \emph{(proof)} Sorted left/right endpoints; $U_i=\max\{r:[\lambda_i,r)\in\mathcal C\}$ and $V_j=\min\{l:[l,\rho_j)\in\mathcal C\}$ for enumerating $\mathcal J(\mathcal C)$. \\
$\delta=\1\{[0,m)\notin\mathcal C\}$ & \emph{(proof, Part II)} Indicator that the all-zero fixed-target signature is realizable. \\
$V^{\rm ag}_{T,\rm dist}$, $V^{\rm real}_{T,\rm dist}$ & \emph{(proof)} Fixed-target minimax values in the agnostic and realizable games. \\
$W^{\rm ag}(K,T)$, $W^{\rm real}(K,T)$ & \emph{(proof)} Worst-case values over families of effective scale $K$. \\
$w_t(h)$, $\eta$ & \emph{(proof)} Exponential-weights mass of labeling $h$; learning rate. \\
$\mathcal V_t$ & \emph{(proof)} Version space of labelings consistent with labels before round $t$. \\
$\operatorname{Ldim}$ & \emph{(proof)} Littlestone dimension of the fixed-target class. \\
$\log$, $\ln$, $\log_2$ & Natural logarithms ($\log$, $\ln$); base-two logarithm ($\log_2$). \\

\multicolumn{2}{@{}l}{\textbf{General upper and lower bounds (Appendix~\ref{app:complete-proofs}; proof only)}}\\
$R_{m,T}$ & Distinct-query minimax regret on the full grid, $\Rdist([m]^2)$. \\
$\mathcal F_m^\uparrow$ & $[0,1]$-valued isotonic functions on $[m]^2$. \\
$\mathbf x$; $\mathfrak X_T^{\mathrm{dist}}$ & Adaptive covariate (query) tree; depth-$T$ trees that are pathwise distinct. \\
$\mathcal A_t(x_{1:t-1})$ & Legal next queries: $[m]^2\setminus\{x_1,\ldots,x_{t-1}\}$. \\
$\boldsymbol\mu$ & $[-1,1]$-valued tree used in the minimax-to-offset reduction. \\
$\mathcal R_t(h_{t-1})$ & Continuation value in the legal-query backward recursion (Lemma B.2). \\
$\epsilon\in\{\pm1\}^T$ & Path through a binary tree. \\
$N_2(\beta,\mathcal F_m^\uparrow,\mathbf x)$, $N^{\mathrm{dist}}_{2,\mathrm{seq}}(\beta,T)$ & Sequential $L_2$ covering number on tree $\mathbf x$; its supremum over distinct trees. \\
$H_{\mathrm{dist}}(\beta,T)$ & Logarithm of $N^{\mathrm{dist}}_{2,\mathrm{seq}}(\beta,T)$. \\
$d_{\mathrm{full}}$, $N_{2,\mathrm{full}}(\alpha)$ & Normalized $L_2$ metric on the full grid; static covering number at scale $\alpha$. \\
$\mathcal G$, $\mathcal G_{\mathrm{off}}$ & Static full-grid cover; centered class $g_f(x,\mu)=f(x)-\mu$ in the fixed-tree chaining step. \\
$\alpha,\beta,\gamma,\varepsilon,\rho$ & Covering and chaining scales; $\varepsilon$ is the integration variable, distinct from the path sign $\epsilon$. \\

\multicolumn{2}{@{}l}{\textbf{Decision-aware verification (Section~\ref{sec:esr})}}\\
$p_t(s)$ & Predicted containment probability of $s$ before its verification, used by ESR. \\
$q_t(s)$ & Raw probability from the predictor $P_\theta$, before isotonic correction. \\
$\mathcal I(q_t,H_t)$, $\lambda$ & Isotonic correction of $q_t$ given the history; weight on the raw scores. \\
$\mathcal V$ & Exact verifier returning $Y_s$. \\
$A_t$, $N_t=|A_t|$ & Active set of feasible unverified candidates; its size. \\
$R_t$ & Verified-positive incumbent (current localized evidence). \\
$B_t(s)$, $b_t(s)=|B_t(s)|$ & Active candidates contained in $s$; their number. \\
$\mathcal L_t(s)$ & Expected number of active candidates after verifying $s$. \\
$B$, $E$, $\mathrm{Sub}(s)$ & Verification budget, Hasse edges, and candidates contained in $s$ (Algorithm~\ref{alg:oecp-pipeline}). \\

\multicolumn{2}{@{}l}{\textbf{Models and policies (Section~\ref{sec:experiments})}}\\
A0, A1, A2 & NQ DeBERTa / RoBERTa predictors: semantic only; + generic history; + containment history. \\
B0, B1, B2 & Frozen semantic score; batch isotonic projection; prequential online isotonic learner. \\
BASE, BASE-H, COMBINED$^*$ & ERASER predictors: static; + generic history; + learned containment features. \\
Closure+BASE, RAND-CONT & BASE with forced labels applied; COMBINED$^*$ with a permuted containment graph. \\
N2, FULL; G1, G4 & Earlier NQ predictors (semantic; + containment features) and their feature groups (Appendix~\ref{app:neural}). \\
S0--S3 & History states: no comparable history; comparable, not forcing; forced by a positive subspan; forced by a negative superspan. \\
P-GREEDY, GEOM-BALANCE & Verify the maximum-$p_t(s)$ candidate; minimize the larger branch without probabilities. \\
ESR & Verify $\arg\min_{s\in A_t}\mathcal L_t(s)$. \\
LS, GG & LEARNED-SCHED (TRAIN-only learned scheduler) and GENERATOR-GREEDY (generator-score order), Section~\ref{sec:realgen}. \\
\end{longtable}
\endgroup

\clearpage
\twocolumn

\section{Complete Proofs}
\label{app:complete-proofs}

\noindent
{\small\setlength{\tabcolsep}{3pt}\renewcommand{\arraystretch}{1.12}
\begin{tabular}{@{}>{\raggedright\arraybackslash}p{.24\columnwidth}>{\raggedright\arraybackslash}p{.22\columnwidth}>{\raggedright\arraybackslash}p{.46\columnwidth}@{}}
\toprule
Main claim & Proof & Key ingredients \\
\midrule
Thm.~\ref{thm:upper}: upper bound & \ref{app:legal-trees}--\ref{app:upper-bound} & Legal trees; restricted chaining; radius rescaling; isotonic entropy \\
Cor.~\ref{cor:rate}: rate & \ref{app:lower-bound} & Central antichain; zero-loss isotonic comparator \\
Prop.~\ref{prop:span-order}: geometry & \ref{app:span-reduction} & Span/grid isomorphism; triangle extension \\
Thm.~\ref{thm:compression}: compression & \ref{app:endpoint-compression} & Endpoint ranks; lower-set extension \\
Width lower bound & \ref{app:width-lower} & Maximum antichain \\
Multi-target realization & \ref{app:oecp-lower-semantics} & One-token targets realize antichain bits \\
Thm.~\ref{thm:single}: fixed target & \ref{app:single-compression}--\ref{app:single-lower} & Target equivalence; EW; version space; threshold lower bound \\
Cor.~\ref{cor:separation}: separation & \ref{app:single-separation} & Same $\mathcal C_K$ in both regimes \\
\bottomrule
\end{tabular}}

\subsection{Part I: The General Multi-Target Class}
\label{app:proofs-general}

\paragraph{Proof architecture.}
\label{app:proof-dependencies}
\emph{Imported facts:} the minimax-to-offset reduction and fixed-tree
sequential chaining inequality of \citet{rakhlin2014online}, and the bounded
bivariate isotonic entropy estimate of \citet{chatterjee2018matrix}.
\emph{New argument for distinct queries:} preserving the history-dependent
legal query set through the reduction, and the static-to-sequential radius
rescaling enabled by pathwise distinctness (Lemma~B.3).
\[
\begin{array}{c}
\text{restricted chaining (B.2)}\\
+\;\text{radius rescaling (B.3)}\\
+\;\text{static entropy (B.4)}\\
\Downarrow\\
\text{sequential entropy (B.5)}\;\Longrightarrow\;\text{upper (B.6)}\\[1mm]
\text{central antichain}\;\Longrightarrow\;\text{lower (B.7)}\\[1mm]
\text{span/grid transfer + compression}\\
\text{(B.9--B.15)}\\
\Downarrow\\
\text{OECP bounds; width and realization}\\
\text{(B.16--B.18)}
\end{array}
\]

\subsubsection{Notation and the distinct-query game}
\label{app:proof-notation}

For $m\ge2$, let
\[
[m]^2=\{1,\ldots,m\}^2
\]
with the coordinate-wise order
\[
(i,j)\preceq(i',j')
\quad\Longleftrightarrow\quad
i\le i'
\ \text{and}\
j\le j'.
\]
Let
\[
\begin{aligned}
\mathcal F_m^\uparrow
=
\bigl\{
f:[m]^2\to[0,1]:{}
&x\preceq x'\\
&\Longrightarrow f(x)\le f(x')
\bigr\}.
\end{aligned}
\]

For a finite poset $P$, the distinct-query game proceeds for
$t=1,\ldots,T$ as follows.  The adversary chooses
\[
x_t\in P\setminus\{x_1,\ldots,x_{t-1}\},
\]
the learner predicts $\widehat y_t\in[0,1]$, the adversary reveals
$y_t\in[0,1]$, and the learner incurs
$(\widehat y_t-y_t)^2$.  The regret is
\[
R_T
=
\sum_{t=1}^T(\widehat y_t-y_t)^2
-
\min_{f\in\mathcal F^\uparrow(P)}
\sum_{t=1}^T(f(x_t)-y_t)^2.
\]
The minimax value is denoted by
$R^*_{T,\mathrm{dist}}(P)$, and for the full grid we write
\[
R_{m,T}
=
R^*_{T,\mathrm{dist}}([m]^2).
\]

A depth-$T$ covariate tree
$\mathbf x=(x_t)_{t=1}^T$ is a sequence of maps
\[
x_t:\{-1,+1\}^{t-1}\to[m]^2.
\]
For a path
$\epsilon=(\epsilon_1,\ldots,\epsilon_T)\in\{-1,+1\}^T$,
write
\[
x_t(\epsilon)
:=
x_t(\epsilon_{1:t-1}).
\]
We call $\mathbf x$ \emph{pathwise distinct} if for every path $\epsilon$,
\[
x_s(\epsilon)\ne x_t(\epsilon)
\qquad
\text{whenever }s\ne t.
\]
Let $\mathfrak X_T^{\mathrm{dist}}$ denote the set of all pathwise-distinct
depth-$T$ covariate trees.

For a fixed covariate tree $\mathbf x$, a finite collection $\mathcal V$ of
real-valued depth-$T$ trees is a sequential $L_2$ $\beta$-cover of
$\mathcal F_m^\uparrow$ on $\mathbf x$ if
\[
\forall f\in\mathcal F_m^\uparrow,\quad
\forall\epsilon\in\{-1,+1\}^T,\quad
\exists v\in\mathcal V
\]
such that
\[
\left[
\frac1T\sum_{t=1}^T
\left(
f(x_t(\epsilon))-v_t(\epsilon)
\right)^2
\right]^{1/2}
\le\beta.
\]
Let
\[
N_2(\beta,\mathcal F_m^\uparrow,\mathbf x)
\]
be the minimum cardinality of such a cover, and define
\[
N^{\mathrm{dist}}_{2,\mathrm{seq}}(\beta,T)
=
\sup_{\mathbf x\in\mathfrak X_T^{\mathrm{dist}}}
N_2(\beta,\mathcal F_m^\uparrow,\mathbf x),
\]
\[
H_{\mathrm{dist}}(\beta,T)
=
\log
N^{\mathrm{dist}}_{2,\mathrm{seq}}(\beta,T).
\]

All logarithms in Part~I are natural.  On the full grid define
\[
d_{\mathrm{full}}(f,g)
=
\left[
\frac1{m^2}
\sum_{x\in[m]^2}
(f(x)-g(x))^2
\right]^{1/2},
\]
and let $N_{2,\mathrm{full}}(\alpha)$ be the covering number of
$\mathcal F_m^\uparrow$ under $d_{\mathrm{full}}$.

\subsubsection{The adaptive legality constraint and pathwise-distinct trees}
\label{app:legal-trees}

The first point that must be checked is that the standard sequential-tree
representation does not silently re-introduce repeated covariates.

\paragraph{Lemma B.1 (Legal-tree characterization).}
A covariate strategy is legal for the distinct-query game if and only if its
binary-tree representation is pathwise distinct.

\paragraph{Proof.}
At a node on level $t$, the ancestors of that node record the covariates
chosen on the unique root-to-node history.  The distinct-query rule says
exactly that the covariate placed at the node must differ from all those
ancestor covariates.  Therefore every root-to-leaf path contains pairwise
distinct covariates.  This is precisely pathwise distinctness.

Conversely, if a covariate tree is pathwise distinct, then at every node its
label differs from every ancestor label on that node's unique path.
Therefore choosing that node label at the corresponding history is legal
under the distinct-query rule. \qed

\paragraph{Range normalization.}
The chaining result of \citet{rakhlin2014online} is stated for a comparator
class contained in $[-1,1]^{\mathcal X}$ and responses in a symmetric
interval $[-B,B]$.  Our comparator class is contained in $[0,1]^{[m]^2}$,
hence also in $[-1,1]^{[m]^2}$, and our responses lie in the smaller set
$[0,1]\subset[-1,1]$.  We may therefore upper bound our game by the
corresponding game with responses in $[-1,1]$, i.e.\ with $B=1$.
If the resulting strategy predicts outside $[0,1]$, project its prediction
onto $[0,1]$ when it is used in the original game.  For every
$y\in[0,1]$, Euclidean projection cannot increase squared loss:
\[
\bigl(\Pi_{[0,1]}(u)-y\bigr)^2
\le
(u-y)^2.
\]
The comparator loss is unchanged.  Thus any regret upper bound obtained for
the enlarged response range yields a valid upper bound for the original
$[0,1]$ game.

\subsubsection{Sequential chaining with the distinct-query constraint}
\label{app:restricted-rs}

We next record the precise form of the square-loss chaining inequality needed
below.

\paragraph{Lemma B.2 (Restricted-tree square-loss chaining).}
For every $m\ge2$, $1\le T\le m^2$, and $\gamma\in(0,1]$,
\begin{align}
R_{m,T}
\le\;&
32\log
N^{\mathrm{dist}}_{2,\mathrm{seq}}(\gamma,T)
\nonumber\\
&+
\inf_{\rho\in(0,\gamma)}
\Bigl\{
4\rho T
\nonumber\\
&\quad+
12\sqrt T
\int_\rho^\gamma
\sqrt{
H_{\mathrm{dist}}(\varepsilon,T)
}\,d\varepsilon
\Bigr\}.
\label{eq:distinct-chaining}
\end{align}

\paragraph{Proof.}
For a history $h_{t-1}=((x_s,y_s))_{s<t}$, the enlarged response game
has legal set
\[
\mathcal A_t(x_{1:t-1})
= [m]^2\setminus\{x_1,\ldots,x_{t-1}\}.
\]
Its continuation value satisfies the backward recursion
\[
\begin{aligned}
\mathcal R_t(h_{t-1})
={}&\sup_{x_t\in\mathcal A_t(x_{1:t-1})}
\inf_{p_t\in[-1,1]}\sup_{y_t\in[-1,1]}\\
&\bigl\{(p_t-y_t)^2+
\mathcal R_{t+1}(h_{t-1},x_t,y_t)\bigr\},
\end{aligned}
\]
with terminal value
$\mathcal R_{T+1}(h_T)=-\min_f\sum_{s=1}^T(f(x_s)-y_s)^2$,
where $f\in\mathcal F_m^\uparrow$.
Thus the legal covariate supremum precedes the prediction/response minimax
step.  In Lemma~7 and Eq.~(12) of \citet{rakhlin2014online}, the minimax
interchange acts only on that prediction/response step after $x_t$ is fixed;
response symmetrization changes response-side variables only.  Predictable
sign changes merely relabel the children of a node and preserve its ancestor
covariates.  Backward iteration therefore keeps the same legal supremum at
each node, indexed by that node's ancestor list.  None of these operations
enlarges $\mathcal A_t$.  The resulting covariate trees are precisely
$\mathfrak X_T^{\mathrm{dist}}$ by Lemma~B.1.

More explicitly, after the same symmetrization as in their Lemma~7, one
obtains
\begin{align}
R_{m,T}
&\le
\sup_{\mathbf x\in\mathfrak X_T^{\mathrm{dist}}}
\sup_{\boldsymbol\mu}
\mathbb E_\epsilon
\sup_{f\in\mathcal F_m^\uparrow}
\sum_{t=1}^T
\Bigl[
\nonumber\\
&\qquad
4\epsilon_t
\bigl(f(x_t(\epsilon))-\mu_t(\epsilon)\bigr)
\nonumber\\
&\qquad-
\bigl(f(x_t(\epsilon))-\mu_t(\epsilon)\bigr)^2
\Bigr],
\label{eq:restricted-offset}
\end{align}
where $\boldsymbol\mu$ ranges over $[-1,1]$-valued trees.
The only difference from the unrestricted formula is the domain of the
supremum over $\mathbf x$.

The second imported fact is the following fixed-tree specialization of
Lemma~8 of \citet{rakhlin2014online}: for square loss with response and
comparator ranges contained in $[-1,1]$, fix one
$\mathbf x\in\mathfrak X_T^{\mathrm{dist}}$
and one $\boldsymbol\mu$.  Form the product-valued tree
\[
z_t(\epsilon)
=
\bigl(x_t(\epsilon),\mu_t(\epsilon)\bigr)
\]
and the class
\[
\mathcal G_{\mathrm{off}}
=
\left\{
g_f:\;
g_f(x,\mu)=f(x)-\mu,\quad
f\in\mathcal F_m^\uparrow
\right\}.
\]
For this fixed tree, the sequential $L_2$ covering number of
$\mathcal G_{\mathrm{off}}$ on $\mathbf z$ equals that of
$\mathcal F_m^\uparrow$ on $\mathbf x$, because subtraction of the same
$\mu_t(\epsilon)$ from both a target and a cover tree preserves every
pointwise difference.  In the notation of that lemma, take
$n=T$, $A=2$, $B=1$, and multiplier tree $\eta_t\equiv1$;
indeed $\mathcal G_{\mathrm{off}}\subseteq[-2,2]$ and $0<\gamma\le1\le A$.
Its specialized inequality, pointwise in the fixed $\mathbf x$ and
$[-1,1]$-valued $\boldsymbol\mu$, is
\begin{align*}
&\mathbb E_\epsilon
\sup_{f\in\mathcal F_m^\uparrow}
\sum_{t=1}^T
\Bigl[
4\epsilon_t
\bigl(f(x_t(\epsilon))-\mu_t(\epsilon)\bigr)
\\
&\qquad-
\bigl(f(x_t(\epsilon))-\mu_t(\epsilon)\bigr)^2
\Bigr]
\\
&\le
32\log N_2(\gamma,\mathcal F_m^\uparrow,\mathbf x)
+
\inf_{\rho\in(0,\gamma)}
\Bigl\{
4\rho T
\\
&\quad+
12\sqrt T
\int_\rho^\gamma
\sqrt{
\log N_2(\varepsilon,\mathcal F_m^\uparrow,\mathbf x)
}\,d\varepsilon
\Bigr\}.
\end{align*}
Taking the supremum over legal $\mathbf x$ and using
\[
\log N_2(\varepsilon,\mathcal F_m^\uparrow,\mathbf x)
\le
H_{\mathrm{dist}}(\varepsilon,T)
\]
for every such $\mathbf x$ gives
\eqref{eq:distinct-chaining}.  No step enlarges a legal distinct-query tree
to a tree containing a repeated covariate. \qed

\subsubsection{Static-to-sequential radius rescaling}
\label{app:radius-rescaling}

\paragraph{Lemma B.3 (Static-to-sequential radius rescaling).}
For every $m\ge2$, $1\le T\le m^2$, and $\beta>0$,
\[
N^{\mathrm{dist}}_{2,\mathrm{seq}}(\beta,T)
\le
N_{2,\mathrm{full}}
\left(
\beta\frac{\sqrt T}{m}
\right).
\]

The cover quantifiers are, respectively,
\[
\underbrace{\forall f\ \exists g_j}_{\text{static}}
\qquad\text{and}\qquad
\underbrace{\forall f\ \forall\epsilon\ \exists v^j}_{\text{sequential}}.
\]

\paragraph{Proof.}
Set
\[
N=m^2,
\qquad
\alpha
=
\beta\sqrt{\frac{T}{N}}
=
\beta\frac{\sqrt T}{m}.
\]
Let
\[
\mathcal G=\{g_1,\ldots,g_M\},
\qquad
M=N_{2,\mathrm{full}}(\alpha),
\]
be an $\alpha$-cover of $\mathcal F_m^\uparrow$ under
$d_{\mathrm{full}}$.

Fix an arbitrary
$\mathbf x\in\mathfrak X_T^{\mathrm{dist}}$.
For every $g_j\in\mathcal G$, define a depth-$T$ prediction tree
\[
v_t^j(\epsilon)
=
g_j(x_t(\epsilon)).
\]
We claim that
\[
\mathcal V_{\mathbf x}
=
\{v^1,\ldots,v^M\}
\]
is a sequential $L_2$ $\beta$-cover of
$\mathcal F_m^\uparrow$ on $\mathbf x$.
The collection $\mathcal V_{\mathbf x}$ may depend on the fixed covariate
tree $\mathbf x$, as permitted by $N_2(\beta,\mathcal F_m^\uparrow,\mathbf x)$.

Fix $f\in\mathcal F_m^\uparrow$.  Because $\mathcal G$ is an
$\alpha$-cover, choose $g_j$ satisfying
\[
\frac1N
\sum_{x\in[m]^2}
(f(x)-g_j(x))^2
\le
\alpha^2.
\]
Crucially, this choice of $g_j$ depends on $f$ but not on the path
$\epsilon$.

Now fix an arbitrary path $\epsilon$.
By pathwise distinctness, the queried points
\[
x_1(\epsilon),\ldots,x_T(\epsilon)
\]
are $T$ distinct elements of $[m]^2$.  Therefore their nonnegative squared
errors form a sub-sum of the full-grid squared-error sum:
\begin{align*}
&\frac1T
\sum_{t=1}^T
\bigl(
f(x_t(\epsilon))
-
v_t^j(\epsilon)
\bigr)^2
\\
&\quad=
\frac1T
\sum_{t=1}^T
\bigl(
f(x_t(\epsilon))
-
g_j(x_t(\epsilon))
\bigr)^2
\\
&\quad\le
\frac1T
\sum_{x\in[m]^2}
(f(x)-g_j(x))^2
\le
\frac{N\alpha^2}{T}
=
\beta^2.
\end{align*}
Thus the same cover element $v^j$ works for every path of the fixed
covariate tree.  In particular, the usual sequential-cover quantifier
\[
\forall f,\ \forall\epsilon,\ \exists v
\]
is satisfied.  Hence
\[
N_2(\beta,\mathcal F_m^\uparrow,\mathbf x)
\le
M.
\]
Taking the supremum over
$\mathbf x\in\mathfrak X_T^{\mathrm{dist}}$
proves the result. \qed

\paragraph{Remark.}
Repeated queries break exactly the sub-sum inequality: a revisited point
can contribute more than once to the pathwise error sum.

\subsubsection{Normalized full-grid entropy}
\label{app:static-entropy}

We next convert the bounded bivariate-isotonic Frobenius entropy bound of
\citet{chatterjee2018matrix} to the normalized empirical metric used here.

\paragraph{Lemma B.4 (Normalized full-grid entropy).}
There is a universal constant $C_0>0$ such that for every
$m\ge2$ and $0<\alpha\le1/2$,
\[
\log N_{2,\mathrm{full}}(\alpha)
\le
C_0
\alpha^{-2}
\log^2\left(\frac{e}{\alpha}\right).
\]

\paragraph{Proof.}
Identify each
$f\in\mathcal F_m^\uparrow$
with its $m\times m$ coordinate-wise nondecreasing matrix of values.
Because $0\le f(i,j)\le1$, this class is a subset of the bounded isotonic
matrix class
\[
\begin{aligned}
B_\infty(\mathbf 0,1)
=
\bigl\{
\theta:{}
&\theta\text{ is coordinate-wise}\\
&\text{nondecreasing},
\ \|\theta\|_\infty\le1
\bigr\}
\end{aligned}
\]
used in Lemma~3.4 of \citet{chatterjee2018matrix}.

Since $d_{\mathrm{full}}(f,g)=\|f-g\|_F/m$, the parameter mapping, in the notation of the cited result, is
\[
n=m^2,\qquad \varepsilon_F=\sqrt n\,\alpha=m\alpha,
\qquad t=1.
\]
For the admissible regime $0<\alpha\le1/2$, Lemma~3.4 of
\citet{chatterjee2018matrix} specializes to
\[
\begin{aligned}
&\log N\bigl(m\alpha,B_\infty(\mathbf0,1),\|\cdot\|_F\bigr)\\
&\qquad\le C\alpha^{-2}\log^2(1/\alpha),
\end{aligned}
\]
with entropy ratio $t\sqrt n/\varepsilon_F=1/\alpha\ge2$.
The original class is already contained in $B_\infty(\mathbf0,1)$.
Hence
\[
\log N_{2,\mathrm{full}}(\alpha)
\le
C\alpha^{-2}\log^2(1/\alpha).
\]
For $0<\alpha\le1/2$,
\[
\log(1/\alpha)
\le
\log(e/\alpha),
\]
so increasing the universal constant if necessary yields
\[
\log N_{2,\mathrm{full}}(\alpha)
\le
C_0
\alpha^{-2}
\log^2(e/\alpha).
\]
This is the claimed normalized entropy bound. \qed

\paragraph{Corollary B.5 (Distinct-query sequential entropy).}
For $1\le T\le m^2$ and $0<\varepsilon\le1/2$,
\[
H_{\mathrm{dist}}(\varepsilon,T)
\le
C_0
\frac{m^2}{T\varepsilon^2}
\log^2
\left(
\frac{em}{\varepsilon\sqrt T}
\right).
\]

\paragraph{Proof.}
Apply Lemma~B.3 with
\[
\alpha
=
\frac{\varepsilon\sqrt T}{m}.
\]
Because $T\le m^2$ and $\varepsilon\le1/2$,
\[
0<\alpha\le1/2.
\]
Lemma~B.4 therefore gives
\begin{align*}
H_{\mathrm{dist}}(\varepsilon,T)
&\le
\log
N_{2,\mathrm{full}}
\left(
\frac{\varepsilon\sqrt T}{m}
\right)
\\
&\le
C_0
\left(
\frac{m}{\varepsilon\sqrt T}
\right)^2
\log^2
\left(
\frac{em}{\varepsilon\sqrt T}
\right),
\end{align*}
as claimed. \qed

\subsubsection{Complete proof of the distinct-query upper bound}
\label{app:upper-bound}

\paragraph{Theorem B.6 (Distinct-query upper bound; restatement of Theorem~\ref{thm:upper}).}
There exists a universal constant $C>0$ such that for every integer
$m\ge2$ and every $1\le T\le m^2$,
\[
\begin{aligned}
R_{m,T}
\le
C
\min
\Bigl\{
&T,\;
m
\Bigl[
1+
\log(em)\\
&\cdot\log\Bigl(1+\frac{T}{m}\Bigr)
\Bigr]
\Bigr\}.
\end{aligned}
\]

\paragraph{Proof.}
\textbf{Step 1. Trivial $T$ bound.} We first record
\[
R_{m,T}\le T.
\]
Indeed, a learner that always predicts in $[0,1]$ incurs at most $1$ squared
loss per round, while the comparator loss is nonnegative.

\paragraph{Step 2. Choose $\gamma=1/2$ and $\rho$.}
It remains to prove the second term.  The constant function
$g\equiv1/2$ is a one-element sequential $L_2$ cover at radius $1/2$:
for every $f:[m]^2\to[0,1]$ and every path,
\[
\frac1T
\sum_{t=1}^T
\left(
f(x_t)-\frac12
\right)^2
\le
\frac14.
\]
Consequently,
\[
N^{\mathrm{dist}}_{2,\mathrm{seq}}(1/2,T)=1,
\qquad
H_{\mathrm{dist}}(1/2,T)=0.
\]

Apply Lemma~B.2 with
\[
\gamma=\frac12.
\]
The entropy term at scale $\gamma$ vanishes, so for every
$\rho\in(0,1/2)$,
\begin{equation}
R_{m,T}
\le
4\rho T
+
12\sqrt T
\int_\rho^{1/2}
\sqrt{
H_{\mathrm{dist}}(\varepsilon,T)
}\,d\varepsilon.
\label{eq:chaining-half}
\end{equation}

We split into two regimes.

\paragraph{Regime 1: $T\le2m$.}
The trivial estimate gives
\[
R_{m,T}\le T\le2m.
\]
Since
\[
1+\log(em)\log\left(1+\frac{T}{m}\right)\ge1,
\]
we have
\[
\begin{aligned}
R_{m,T}
&\le
2m
\\
&\le
2m
\Bigl[
1+
\log(em)\log\Bigl(1+\frac{T}{m}\Bigr)
\Bigr].
\end{aligned}
\]
Thus the desired second-term bound holds in this regime.

\paragraph{Regime 2: $T>2m$.}
Choose
\[
\rho=\frac{m}{T}.
\]
Then $0<\rho<1/2$.
\paragraph{Step 3. Evaluate the chaining integral.}
For every $\varepsilon\in[\rho,1/2]$, Corollary~B.5 yields
\[
\sqrt{
H_{\mathrm{dist}}(\varepsilon,T)
}
\le
\sqrt{C_0}
\frac{m}{\sqrt T\,\varepsilon}
\log
\left(
\frac{em}{\varepsilon\sqrt T}
\right).
\]
The logarithm is positive on the whole integration interval:
because $T\le m^2$ and $\varepsilon\le1/2$,
\[
\frac{em}{\varepsilon\sqrt T}
\ge
2e>1.
\]
Substituting in \eqref{eq:chaining-half} gives
\begin{align}
R_{m,T}
&\le
4m
\nonumber\\
&\quad+
12\sqrt{C_0}\,m
\int_{m/T}^{1/2}
\frac{
\log\!\left(em/(\varepsilon\sqrt T)\right)
}{\varepsilon}
\,d\varepsilon.
\label{eq:upper-integral}
\end{align}

Set
\[
A=\frac{em}{\sqrt T}.
\]
Since
\[
\frac{d}{d\varepsilon}
\left[
-\frac12\log^2(A/\varepsilon)
\right]
=
\frac{\log(A/\varepsilon)}{\varepsilon},
\]
we obtain
\begin{align*}
&\int_{m/T}^{1/2}
\frac{\log(A/\varepsilon)}{\varepsilon}
\,d\varepsilon
\\
&\quad=
\frac12
\Bigl[
\log^2\!\Bigl(\frac{A}{m/T}\Bigr)
-
\log^2\!\Bigl(\frac{A}{1/2}\Bigr)
\Bigr]
\\
&\quad=
\frac12
\Bigl[
\log^2(e\sqrt T)
-
\log^2\!\Bigl(\frac{2em}{\sqrt T}\Bigr)
\Bigr]
\\
&\quad=
\frac12
\log\Bigl(\frac{T}{2m}\Bigr)
\log(2e^2m),
\end{align*}
where the last equality uses
$x^2-y^2=(x-y)(x+y)$.

Therefore
\begin{equation}
R_{m,T}
\le
4m
+
6\sqrt{C_0}\,m
\log\left(\frac{T}{2m}\right)
\log(2e^2m).
\label{eq:upper-explicit}
\end{equation}
Since $T>2m$,
\[
0<
\log\left(\frac{T}{2m}\right)
\le
\log\left(1+\frac{T}{m}\right),
\]
and for $m\ge2$ there is a universal numerical constant $c_1$ such that
\[
\log(2e^2m)
\le
c_1\log(em).
\]
Hence
\[
R_{m,T}
\le
C_1m
\left[
1+
\log(em)
\log\left(1+\frac{T}{m}\right)
\right]
\]
for a universal constant $C_1$.

\paragraph{Step 4. Combine regimes.}
Combining Regimes 1 and 2 proves the second-term bound.  Combining that
bound with $R_{m,T}\le T$ proves
\[
\begin{aligned}
R_{m,T}
\le
C
\min
\Bigl\{
&T,\;
m
\Bigl[
1+
\log(em)\\
&\cdot\log\Bigl(1+\frac{T}{m}\Bigr)
\Bigr]
\Bigr\}.
\end{aligned}
\]
\qed

The only step using distinctness is Lemma~B.3.

\subsubsection{Complete proof of the full-grid lower bound}
\label{app:lower-bound}

\paragraph{Theorem B.7 (Distinct-query lower bound; first part of Corollary~\ref{cor:rate}).}
For every integer $m\ge2$ and every
\[
m\le T\le m^2,
\]
\[
R_{m,T}\ge\frac{m}{4}.
\]

\paragraph{Proof.}
All latent bits are sampled at time zero, queries use only legally revealed
history (here a fixed order), and validity for randomized learners follows
by conditioning on their seed and averaging, as detailed below.
Define the central antichain
\[
A_m
=
\{(i,m+1-i):i=1,\ldots,m\}.
\]
If $i<i'$, then the first coordinate of $(i,m+1-i)$ is smaller than that of
$(i',m+1-i')$, while the second coordinate is larger.  Thus the two points
are incomparable, and $A_m$ is an antichain of size $m$.

Fix any ordering
\[
a_1,\ldots,a_m
\]
of $A_m$.  At time zero sample independent bits
\[
B_a\sim\mathrm{Bernoulli}(1/2),
\qquad a\in A_m.
\]
The randomized adversary queries
$a_1,\ldots,a_m$ in the first $m$ rounds and reveals
\[
y_t=B_{a_t}.
\]
If $T>m$, choose in advance any additional $T-m$ distinct grid points outside
$A_m$.  Their labels will be defined below.  Thus every query is distinct,
and all adversarial randomness is sampled before play begins.

First condition on an arbitrary realization of the learner's internal random
seed.  The learner is then deterministic.  Immediately before predicting on
$a_t$, the bit $B_{a_t}$ is independent of the entire observed history.
For any prediction $p_t\in[0,1]$,
\begin{align*}
&\mathbb E
\bigl[
(p_t-B_{a_t})^2
\mid
\text{history before round }t
\bigr]
\\
&\quad=
\frac12p_t^2
+
\frac12(1-p_t)^2
=
\Bigl(p_t-\frac12\Bigr)^2
+
\frac14
\ge
\frac14.
\end{align*}
Hence the learner's expected loss on the first $m$ rounds is at least
$m/4$.

For the sampled vector $B$, define
\[
f_B(x)
=
\max
\left(
\{B_a:a\in A_m,\ a\preceq x\}
\cup\{0\}
\right).
\]
If $x\preceq x'$, every antichain element below $x$ is also below $x'$,
so
\[
f_B(x)\le f_B(x').
\]
Thus $f_B$ is isotonic.

For any $a\in A_m$, no distinct $a'\in A_m$ satisfies
$a'\preceq a$, because $A_m$ is an antichain.  Therefore
\[
f_B(a)=B_a.
\]
For every preselected filler point $x_t$ with $t>m$, define
\[
y_t=f_B(x_t).
\]
Once $B$ has been sampled, these labels are fixed before play.  The single
comparator $f_B$ therefore incurs zero loss on all $T$ rounds.

We now remove the conditioning on the learner's random seed.  For every
fixed seed, expectation over the randomized oblivious adversary gives
expected regret at least $m/4$.  Averaging over the learner's seed preserves
this inequality:
\[
\mathbb E_{B}
\mathbb E_{\text{learner}}
[R_T]
\ge
\frac{m}{4}.
\]
Therefore there exists at least one deterministic realization of $B$
(and hence one legal oblivious labeled sequence) for which the randomized
learner's expected regret is at least $m/4$.  Since this holds for every
randomized learner,
\[
R_{m,T}\ge\frac{m}{4}.
\]
\qed

\paragraph{Corollary B.8 (Near-matching rate throughout the open window; Corollary~\ref{cor:rate}).}
For every $m\ge2$ and $m\le T\le m^2$,
\[
\frac{m}{4}
\le
R_{m,T}
\le
Cm(1+\log^2m),
\]
and hence
\[
R_{m,T}
=
\widetilde\Theta(m)
\]
uniformly over $m\le T\le m^2$.

\paragraph{Proof.}
The lower bound is Theorem~B.7.  For the upper bound, Theorem~B.6 and
$T\le m^2$ imply
\[
\log\left(1+\frac{T}{m}\right)
\le
\log(1+m)
=
O(\log m),
\]
while
\[
\log(em)=O(\log m)
\qquad
(m\ge2).
\]
Substitution gives the displayed upper bound. \qed

\subsubsection{Complete proof of the span-containment reduction}
\label{app:span-reduction}

Let a length-$m$ document have token boundaries
$0,1,\ldots,m$, and define the set of nonempty half-open spans
\[
\mathcal S_m
=
\{[l,r):0\le l<r\le m\}.
\]

\paragraph{Part A: order.}
\paragraph{Lemma B.9 (Span-containment order isomorphism; Proposition~\ref{prop:span-order}).}
Define
\[
\phi:\mathcal S_m\to[m]^2,
\qquad
\phi([l,r))=(m-l,r),
\]
and
\[
D_m
=
\{(u,v)\in[m]^2:u+v\ge m+1\}.
\]
Then $\phi$ is a bijection from $\mathcal S_m$ onto $D_m$, and
\[
s\subseteq s'
\quad\Longleftrightarrow\quad
\phi(s)\preceq\phi(s')
\]
for all $s,s'\in\mathcal S_m$.

\paragraph{Proof.}
Write
$s=[l,r)$ and $s'=[l',r')$.  For half-open intervals,
\[
s\subseteq s'
\quad\Longleftrightarrow\quad
l'\le l
\ \text{and}\
r\le r'.
\]
The first inequality is equivalent to
$m-l\le m-l'$, while the second is unchanged.  Hence
\[
\begin{aligned}
s\subseteq s'
&\Longleftrightarrow
(m-l,r)\preceq(m-l',r')
\\
&\Longleftrightarrow
\phi(s)\preceq\phi(s').
\end{aligned}
\]

If $s=[l,r)\in\mathcal S_m$, then
$u=m-l\in[m]$ and $v=r\in[m]$.  Moreover,
$l<r$ is equivalent to
$m-u<v$, which for integers is equivalent to
$u+v\ge m+1$.  Hence $\phi(s)\in D_m$.

Conversely, if $(u,v)\in D_m$, set
\[
l=m-u,\qquad r=v.
\]
Then $0\le l\le m-1$, $1\le r\le m$, and
$u+v\ge m+1$ implies $l<r$.  Hence $[l,r)\in\mathcal S_m$ and
\[
\phi([m-u,v))=(u,v).
\]
Thus $\phi$ is bijective onto $D_m$. \qed

\paragraph{Part B: comparator extension.}
\paragraph{Lemma B.10 (Comparator-preserving extension from the triangle).}
The set $D_m$ is an upper set of $[m]^2$.  For every isotonic
$f:D_m\to[0,1]$, define
\[
F(x)
=
\min
\{f(y):y\in D_m,\ x\preceq y\}.
\]
Then the minimum is well defined,
$F:[m]^2\to[0,1]$ is isotonic, and
\[
F|_{D_m}=f.
\]

\paragraph{Proof.}
If $(u,v)\in D_m$ and
$(u,v)\preceq(u',v')$, then
\[
u'+v'\ge u+v\ge m+1,
\]
so $(u',v')\in D_m$.  Thus $D_m$ is an upper set.

For any $x\in[m]^2$, the set
\[
U_x=\{y\in D_m:x\preceq y\}
\]
is nonempty because $(m,m)\in D_m$ and $x\preceq(m,m)$.
It is finite, so the minimum exists.

If $x\preceq x'$, then
\[
U_{x'}\subseteq U_x,
\]
hence
\[
F(x)=\min_{y\in U_x}f(y)
\le
\min_{y\in U_{x'}}f(y)
=
F(x').
\]
Thus $F$ is isotonic.

Finally, if $x\in D_m$, then $x\in U_x$, so $F(x)\le f(x)$.
For every $y\in U_x$, isotonicity gives $f(x)\le f(y)$.
Therefore $f(x)$ is the attained minimum and
$F(x)=f(x)$. \qed

Therefore the comparator evaluation-vector sets are exactly equal:
restriction and extension, together, give the following consequence.

\paragraph{Corollary B.11 (Exact comparator equality on queried span points).}
For any queried points
$x_1,\ldots,x_T\in D_m$, the set of achievable evaluation vectors
\[
\left\{
(f(x_1),\ldots,f(x_T)):
f\in\mathcal F^\uparrow(D_m)
\right\}
\]
is exactly equal to
\[
\left\{
(F(x_1),\ldots,F(x_T)):
F\in\mathcal F_m^\uparrow
\right\}.
\]
Consequently, for any loss functional depending only on these evaluations,
the two comparator infima are equal.  In particular, for any continuous
per-round loss $\ell$ (including squared loss), the minima are attained and
\[
\begin{aligned}
&\min_{f\in\mathcal F^\uparrow(D_m)}
\sum_{t=1}^T\ell(f(x_t),y_t)
\\
&\quad=
\min_{F\in\mathcal F_m^\uparrow}
\sum_{t=1}^T\ell(F(x_t),y_t).
\end{aligned}
\]

\paragraph{Proof.}
Restriction of a full-grid isotonic function to $D_m$ gives an isotonic
function on $D_m$, giving one inclusion between evaluation-vector sets.
Lemma~B.10 extends every isotonic function on $D_m$ to a full-grid isotonic
function with identical values on all queried points, giving the reverse
inclusion.  Equality of infima follows immediately.  For continuous
$\ell$, both comparator classes are compact subsets of finite-dimensional
cubes, so the infima are attained. \qed

\paragraph{Corollary B.12 (OECP upper-bound transfer on all spans).}
For
\[
1\le T\le|\mathcal S_m|=\frac{m(m+1)}2,
\]
\[
R^*_{T,\mathrm{dist}}(\mathcal S_m,\subseteq)
\le
R_{m,T}.
\]

\paragraph{Proof.}
Since $|\mathcal S_m|=m(m+1)/2\le m^2$, every admissible distinct-query
span horizon is within the full-grid theorem's horizon range.
Run any full-grid learner on the mapped sequence
$\phi(s_t)$.  Lemma~B.9 makes $\phi$ injective, so distinct spans map to
distinct grid points.  Round-by-round predictions and losses are unchanged
by this relabeling.  Corollary~B.11 shows that the best
containment-monotone span comparator has exactly the same squared loss as the
best full-grid isotonic comparator on the mapped points.  Thus the simulated
span learner has exactly the full-grid regret on the mapped sequence.
Taking the infimum over learners and supremum over legal span sequences gives
the stated inequality. \qed

\subsubsection{Complete proof of endpoint-rank compression}
\label{app:endpoint-compression}

Let
$\mathcal C\subseteq\mathcal S_m$
be a finite set of candidate spans.  Define
\[
\begin{aligned}
L&=\{l:\exists r,\ [l,r)\in\mathcal C\},
\\
R&=\{r:\exists l,\ [l,r)\in\mathcal C\},
\end{aligned}
\]
\[
a=|L|,
\qquad
b=|R|,
\qquad
m_{\mathrm{eff}}=\max(a,b).
\]
Let
$\operatorname{rank}_L$ and $\operatorname{rank}_R$
be ascending ranks starting at $1$, and define
\[
\psi([l,r))
=
\left(
a-\operatorname{rank}_L(l)+1,\;
\operatorname{rank}_R(r)
\right).
\]

\paragraph{Theorem B.13 (Exact endpoint-rank compression; Theorem~\ref{thm:compression}).}
The map $\psi$ is injective and
\[
s\subseteq s'
\quad\Longleftrightarrow\quad
\psi(s)\preceq\psi(s')
\]
for all $s,s'\in\mathcal C$.  Thus
$(\mathcal C,\subseteq)$ is order-isomorphic to
\[
P=\psi(\mathcal C)
\subseteq
[a]\times[b]
\subseteq
[m_{\mathrm{eff}}]^2.
\]
Moreover, every isotonic
$f:P\to[0,1]$
has an isotonic extension
$F:[m_{\mathrm{eff}}]^2\to[0,1]$
with
$F|_P=f$.

\paragraph{Proof.}
\textbf{Part A: order.} Take
$s=[l,r)$ and $s'=[l',r')$.
Then
\[
s\subseteq s'
\quad\Longleftrightarrow\quad
l'\le l
\ \text{and}\
r\le r'.
\]
Because ascending rank is strictly increasing on the distinct endpoint sets,
\[
l'\le l
\quad\Longleftrightarrow\quad
\operatorname{rank}_L(l')
\le
\operatorname{rank}_L(l),
\]
which is equivalent to
\[
a-\operatorname{rank}_L(l)+1
\le
a-\operatorname{rank}_L(l')+1.
\]
Similarly,
\[
r\le r'
\quad\Longleftrightarrow\quad
\operatorname{rank}_R(r)
\le
\operatorname{rank}_R(r').
\]
These are exactly the two coordinate inequalities in
$\psi(s)\preceq\psi(s')$.

If
$\psi([l,r))=\psi([l',r'))$,
equality of the first coordinates implies equal left-endpoint ranks and hence
$l=l'$.  Equality of the second coordinates implies
$r=r'$.  Thus $\psi$ is injective.

\textbf{Part B: comparator extension.} For
$x\in[m_{\mathrm{eff}}]^2$
let
\[
D_x=\{z\in P:z\preceq x\}
\]
and define
\[
F(x)
=
\max
\left(
\{f(z):z\in D_x\}
\cup\{0\}
\right).
\]
The maximum is well defined because $P$ is finite and
$f(P)\subseteq[0,1]$.  If $x\preceq x'$, then
$D_x\subseteq D_{x'}$, hence $F(x)\le F(x')$.
Thus $F$ is isotonic.

For $x\in P$, the set $D_x$ contains $x$, so $F(x)\ge f(x)$.
For every $z\in D_x$, isotonicity on $P$ gives
$f(z)\le f(x)$, so all terms in the maximum are at most $f(x)$.
Hence $F(x)=f(x)$. \qed

Therefore the comparator evaluation-vector sets are exactly equal;
this is the additional fact needed to transfer comparator losses.

\paragraph{Corollary B.14 (Exact comparator equality after compression).}
For any queried points $x_1,\ldots,x_T\in P$, the evaluation-vector set of
isotonic functions on $P$ equals the evaluation-vector set obtained by
restricting full-grid isotonic functions on
$[m_{\mathrm{eff}}]^2$ to those points.  Therefore the two comparator
infima are equal for every loss functional depending only on those
evaluations, and the minima are equal for every continuous per-round loss.

\paragraph{Proof.}
Restriction gives one inclusion.  The extension in Theorem~B.13 gives the
reverse inclusion.  The loss conclusion is then identical to the argument in
Corollary~B.11. \qed

\paragraph{Corollary B.15 (Sparse-candidate upper-bound transfer; second part of Theorem~\ref{thm:compression}).}
For every
\[
1\le T\le|\mathcal C|,
\]
\[
R^*_{T,\mathrm{dist}}(\mathcal C,\subseteq)
\le
R^*_{T,\mathrm{dist}}([m_{\mathrm{eff}}]^2).
\]
If $m_{\mathrm{eff}}\ge2$, then automatically
\[
T\le|\mathcal C|\le ab\le m_{\mathrm{eff}}^2,
\]
so Theorem~B.6 gives
\[
\begin{aligned}
&R^*_{T,\mathrm{dist}}(\mathcal C,\subseteq)
\le
C
\min
\Bigl\{
T,\;
m_{\mathrm{eff}}
\Bigl[
1
\\
&\quad+
\log(e m_{\mathrm{eff}})
\log\Bigl(1+\frac{T}{m_{\mathrm{eff}}}\Bigr)
\Bigr]
\Bigr\}.
\end{aligned}
\]
If $m_{\mathrm{eff}}=1$, then
$|\mathcal C|=1$ and the trivial one-round bound applies.

\paragraph{Proof.}
Simulate the full-grid learner on $\psi(s_t)$.  Injectivity preserves
distinctness, and Corollary~B.14 preserves the comparator loss exactly.
This proves the first display.  The cardinality bound
$|\mathcal C|\le ab\le m_{\mathrm{eff}}^2$
allows Theorem~B.6 to be applied whenever $m_{\mathrm{eff}}\ge2$. \qed

Algorithm~\ref{alg:endpoint-compression} computes $\psi$ and $\meff$ by sorting the distinct endpoints once, in $O(n\log n)$ time and $O(n)$ memory for $n=|\mathcal C|$, without enumerating candidate pairs.

\begin{algorithm}[t]
\caption{Exact Endpoint-Rank Compression}
\label{alg:endpoint-compression}
\begin{algorithmic}[1]
\Require spans $\mathcal C=\{[l_i,r_i)\}_{i=1}^{n}$
\Ensure coordinates $\psi(\mathcal C)$; side length $\meff$
\State $L\gets\mathrm{sort}(\mathrm{unique}\{l_i\}_{i=1}^{n})$
\State $R\gets\mathrm{sort}(\mathrm{unique}\{r_i\}_{i=1}^{n})$
\State $a\gets|L|$, \ $b\gets|R|$
\State build ascending rank maps $\rank_L$, $\rank_R$
\For{$i=1,\ldots,n$}
  \State $u_i\gets a-\rank_L(l_i)+1$
  \State $v_i\gets\rank_R(r_i)$
  \State $\psi([l_i,r_i))\gets(u_i,v_i)$
\EndFor
\State $\meff\gets\max\{a,b\}$
\State \Return $\psi(\mathcal C)$, $\meff$
\end{algorithmic}
\end{algorithm}

\subsubsection{Width lower bound for arbitrary finite candidate posets}
\label{app:width-lower}

\paragraph{Proposition B.16 (Width lower bound).}
Let $(P,\preceq)$ be any finite poset with
\[
w=\operatorname{width}(P).
\]
For every
\[
w\le T\le|P|,
\]
distinct-query online isotonic regression under squared loss satisfies
\[
R^*_{T,\mathrm{dist}}(P)
\ge
\frac{w}{4}.
\]

\paragraph{Proof.}
All latent bits are sampled at time zero, queries use only legally revealed
history (here a fixed order), and randomized learners are handled by the
seed conditioning and averaging of Theorem~B.7.
Let
$A\subseteq P$
be a maximum antichain, $|A|=w$.
At time zero sample independent bits
\[
B_a\sim\operatorname{Bernoulli}(1/2),
\qquad a\in A.
\]
Query the points of $A$ first in any fixed order and reveal label $B_a$ at
$a$.  Conditioned on any fixed learner random seed, each unrevealed bit is
independent of the observed history, so the same computation as in
Theorem~B.7 gives expected squared loss at least $1/4$ per antichain round.

Define
\[
f_B(x)
=
\max
\left(
\{B_a:a\in A,\ a\preceq x\}
\cup\{0\}
\right).
\]
If $x\preceq x'$, the set of antichain elements below $x$ is contained in
the set below $x'$, hence $f_B(x)\le f_B(x')$.
Thus $f_B$ is isotonic.
Because $A$ is an antichain,
$f_B(a)=B_a$ for each $a\in A$.

If $T>w$, preselect any $T-w$ unused points and label each filler point $x$
by $f_B(x)$.  The comparator $f_B$ has zero loss on every round.
The averaging argument of Theorem~B.7 gives expected regret at least $w/4$
against every randomized learner; taking the infimum over learners proves
the minimax bound. \qed

\paragraph{Corollary B.17 (Width versus effective size).}
For a finite candidate family $\mathcal C$ with
\[
w=\operatorname{width}(\mathcal C,\subseteq),
\]
\[
w\le\min(a,b)\le m_{\mathrm{eff}}.
\]
Hence, whenever
\[
m_{\mathrm{eff}}\le T\le|\mathcal C|,
\]
\[
\frac{w}{4}
\le
R^*_{T,\mathrm{dist}}(\mathcal C,\subseteq)
\le
\widetilde O(m_{\mathrm{eff}}).
\]
If
$w=\Theta(m_{\mathrm{eff}})$,
the lower and upper scales match up to logarithmic factors.

\paragraph{Proof.}
By Theorem~B.13,
$\mathcal C$ embeds order-isomorphically into $[a]\times[b]$.
Two distinct elements of an antichain in $[a]\times[b]$ cannot share the
same first coordinate, because points with equal first coordinate are
comparable by their second coordinates.  Therefore every such antichain has
size at most $a$.  The symmetric argument gives size at most $b$, so
\[
w\le\min(a,b)\le m_{\mathrm{eff}}.
\]
The lower bound follows from Proposition~B.16 because
$T\ge m_{\mathrm{eff}}\ge w$.
The upper bound follows from Corollary~B.15. \qed

\subsubsection{Realizing the full-grid antichain lower bound within OECP}
\label{app:oecp-lower-semantics}

The preceding lower bounds concern the full isotonic comparator class.  We
now identify precisely when the central-antichain hard family is also an
evidence-containment family.

\paragraph{Lemma B.18 (Multiple-target OECP realization).}
Let
\[
A_m=\{(i,m+1-i):i\in[m]\}.
\]
Under the span isomorphism of Lemma~B.9,
$\phi^{-1}(A_m)$ is exactly the set of all $m$ one-token spans
\[
\{[j,j+1):j=0,\ldots,m-1\}.
\]
For any binary assignment
$B:A_m\to\{0,1\}$, define
\[
G_B
=
\left\{
\phi^{-1}(a):
a\in A_m,\ B_a=1
\right\}.
\]
Then the OECP containment label
\[
Y_s
=
\mathbf 1
\{\exists g\in G_B:g\subseteq s\}
\]
satisfies
\[
Y_s=f_B(\phi(s)),
\]
where
\[
f_B(x)
=
\max
\left(
\{B_a:a\in A_m,\ a\preceq x\}
\cup\{0\}
\right).
\]

\paragraph{Proof.}
For
$a=(i,m+1-i)\in A_m$,
\[
\phi^{-1}(a)
=
[m-i,m+1-i),
\]
which has length one.
By Lemma~B.9,
\[
\phi^{-1}(a)\subseteq s
\quad\Longleftrightarrow\quad
a\preceq\phi(s).
\]
Therefore
\begin{align*}
Y_s=1
&\Longleftrightarrow
\exists a\in A_m:
B_a=1
\ \text{and}\
a\preceq\phi(s)
\\
&\Longleftrightarrow
\max
\bigl(
\{B_a:a\in A_m,\ a\preceq\phi(s)\}
\\
&\qquad\qquad
\cup\{0\}
\bigr)
=1,
\end{align*}
which is exactly
$f_B(\phi(s))=1$.
Both sides are binary, so equality holds for every $s$. \qed

\paragraph{Single-target scope.}
With one fixed target span $g$,
\[
Y_s=\mathbf1\{g\subseteq s\}
\]
is the indicator of a single principal upper set.  Such a family cannot
realize arbitrary binary labelings of a size-$m$ antichain.  Therefore
Theorem~B.7 and Proposition~B.16 concern the general multi-target class;
the fixed-target rate is characterized in Appendix~\ref{app:single}.

\paragraph{Nonempty-target variant.}
The general OECP definition permits $G_B=\varnothing$, which occurs for the
all-zero assignment.  If an application insists that the target set be
nonempty, fix one antichain bit to $1$ and sample the remaining $m-1$ bits
independently.  The same argument forces expected loss at least
\[
\frac{m-1}{4}
\]
on the randomized bits while preserving a zero-loss containment comparator.
Thus the semantic lower bound remains $\Omega(m)$ even under an explicit
nonempty-target requirement.

\subsubsection{Technical conditions used in Part I}
\label{app:proof-audit-checklist}

\begin{itemize}
\item \emph{Adaptive legality.} The history-dependent legal query set is
retained through the reduction (Lemmas~B.1--B.2).
\item \emph{Response range.} Enlarge to $[-1,1]$, then clip predictions to
$[0,1]$ without increasing squared loss.
\item \emph{Sequential covers.} Covers may depend on the fixed covariate
tree; Lemma~B.3 satisfies the required pathwise quantifiers.
\item \emph{Entropy normalization.} Use Frobenius radius $m\alpha$, bounded
class parameter $t=1$, and $0<\alpha=\varepsilon\sqrt T/m\le1/2$ (B.4--B.5).
\item \emph{Randomized learners.} The lower bounds use time-zero bits and
seed conditioning followed by averaging (B.7 and B.16).
\item \emph{Comparators and scope.} Both embeddings preserve comparator
evaluation sets; the antichain realization allows multiple targets,
including the empty set for the all-zero labeling (B.9--B.18).
\end{itemize}

\setcounter{appthm}{18}
\subsection{Part II: Fixed-Target OECP}
\label{app:single}

\paragraph{Proof architecture.}
The fixed-target rate has three pieces: (i)~class-size compression, $N_1(\mathcal C)\le ab+1$ (Appendix~\ref{app:single-compression}); (ii)~a constructive upper bound: exponential weights attains $\min\{T,2\ln N_1(\mathcal C)\}$ (Theorem~\ref{thm:single-ew}), and the $T/4$ term of Theorem~\ref{thm:single} comes from selecting, before play, between exponential weights and the constant prediction $\tfrac12$ (Corollary~\ref{cor:single-horizon}; Appendix~\ref{app:single-upper}); and (iii)~a matching worst-case lower bound, using a hidden threshold on prefix candidates (Appendix~\ref{app:single-lower}).
Appendix~\ref{app:single-realizable} gives the realizable version-space bound; Appendix~\ref{app:single-ldim} gives an optional family-specific refinement; Appendix~\ref{app:single-separation} gives the same-family target-multiplicity separation.

Throughout, $\mathcal S_m=\{[u,v):u,v\in\mathbb Z,\ 0\le u<v\le m\}$, the finite candidate family $\mathcal C\subseteq\mathcal S_m$ is known before play, $n=|\mathcal C|$, $L=\{l_s:s\in\mathcal C\}$, $R=\{r_s:s\in\mathcal C\}$, $a=|L|$, $b=|R|$, and $K=\meff=\max(a,b)$.
$\ln$ is the natural logarithm; $\log_2$ is written explicitly.

\subsubsection{Fixed-target comparator class and games}
\label{app:single-games}

For a target $g\in\mathcal S_m$ put $h_g(s)=\1\{g\subseteq s\}$.
Write $g\sim_{\mathcal C}g'$ if $h_g|_{\mathcal C}=h_{g'}|_{\mathcal C}$, and
\[
\begin{aligned}
\mathcal H_{\rm one}(\mathcal C)&=\{h_g|_{\mathcal C}:g\in\mathcal S_m\},\\
N_1(\mathcal C)&=|\mathcal H_{\rm one}(\mathcal C)|.
\end{aligned}
\]
Exactly one nonempty target is used; the all-zero labeling, when possible, comes from a real target contained in no candidate, not from an empty target.

\paragraph{Games.}
In the \emph{agnostic} game the outcomes $y_t\in[0,1]$ are arbitrary and the comparator class is $\mathcal H_{\rm one}(\mathcal C)$:
\[
R_T=\sum_{t=1}^T(p_t-y_t)^2-\min_{h\in\mathcal H_{\rm one}(\mathcal C)}\sum_{t=1}^T\bigl(h(s_t)-y_t\bigr)^2.
\]
In the \emph{realizable} game a target $g^\star$ is fixed before the trajectory and $y_t=h_{g^\star}(s_t)$, so regret equals cumulative loss $L_T=\sum_t(p_t-y_t)^2$.
At round $t$ the scheduler chooses $s_t$ from the public history, the learner predicts $p_t\in[0,1]$, and then $y_t$ is revealed; a distinct-query scheduler satisfies $s_t\ne s_u$ for $t\ne u$ on every realized path, so $T\le n$.
Write $V^{\rm ag}_{T,\rm dist}(\mathcal C)=\inf_{\mathcal A}\sup_{\mathcal E}\mathbb E R_T$ and $V^{\rm real}_{T,\rm dist}(\mathcal C)=\inf_{\mathcal A}\sup_{g^\star,\mathcal E}\mathbb E L_T$.
Lower bounds below use less adversarial information: queries depend only on earlier labels, and the target is sampled at time zero independently of learner randomness.

\subsubsection{Exact target-equivalence compression}
\label{app:single-compression}

\begin{appthm}[Exact target-equivalence compression]
\label{thm:single-compression}
Let $\delta=\1\{[0,m)\notin\mathcal C\}$ and let
$\mathcal J(\mathcal C)=\{s\cap t:s,t\in\mathcal C,\ s\cap t\ne\varnothing\}$ be the nonempty intersections of at most two candidates ($s=t$ allowed).
Then
\[
N_1(\mathcal C)=|\mathcal J(\mathcal C)|+\delta\le ab+\delta\le ab+1.
\]
For nonempty $\mathcal C$, enumerate $L=\{\lambda_1<\dots<\lambda_a\}$, $R=\{\rho_1<\dots<\rho_b\}$, and set $U_i=\max\{r:[\lambda_i,r)\in\mathcal C\}$, $V_j=\min\{l:[l,\rho_j)\in\mathcal C\}$.
Then
\[
\begin{aligned}
N_1(\mathcal C)=\delta+\sum_{i=1}^a\sum_{j=1}^b\1\{&\lambda_i<\rho_j,\ \rho_j\le U_i,\\
&V_j\le\lambda_i\},
\end{aligned}
\]
and the qualifying $[\lambda_i,\rho_j)$, together with $[0,m)$ if $\delta=1$, form exactly one representative per labeling class.
\end{appthm}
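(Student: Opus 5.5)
The plan is to show that every fixed-target labeling of $\mathcal C$ is determined by one canonical interval, namely the intersection of all candidates containing the target. Both formulas then come from counting these canonical intervals.

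\emph{Step 1 (canonical representative).} Fix a target $g=[u,v)$ and let $S_g=\{s\in\mathcal C:g\subseteq s\}$.
\begin{itemize}
\item If $S_g=\varnothing$, then $h_g|_{\mathcal C}\equiv 0$. Such a $g$ exists iff $[0,m)\notin\mathcal C$. Indeed, if $[0,m)\in\mathcal C$, it contains every target. Conversely, if $[0,m)\notin\mathcal C$, the target $g=[0,m)$ lies in no candidate. This accounts for the term $\delta$.
\item If $S_g\neq\varnothing$, put $J(g)=\bigcap_{s\in S_g}s=[\max_{s\in S_g}l_s,\ \min_{s\in S_g}r_s)$. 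This interval contains $g$, so it is nonempty.
\end{itemize}
Choose $s_1\in S_g$ attaining the maximal left endpoint and $s_2\in S_g$ attaining the minimal right endpoint. Then $J(g)=s_1\cap s_2$, so $J(g)\in\mathcal J(\mathcal C)$.

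Next I check $h_g|_{\mathcal C}=h_{J(g)}|_{\mathcal C}$.
\begin{itemize}
\item If $s\supseteq g$, then $s\in S_g$, so $s\supseteq J(g)$.
\item If $s\supseteq J(g)$, then $s\supseteq g$, since $g\subseteq J(g)$.
\end{itemize}
Hence every labeling is either the zero labeling (available only when $\delta=1$) or $h_J$ for some $J\in\mathcal J(\mathcal C)$.

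\emph{Step 2 (injectivity).} Let $J=s\cap t\in\mathcal J(\mathcal C)$. Then $h_J(s)=h_J(t)=1$, so $h_J$ is not the zero labeling. Suppose $h_J|_{\mathcal C}=h_{J'}|_{\mathcal C}$ with $J'=s'\cap t'$. Then $J'\subseteq s$ and $J'\subseteq t$, so $J'\subseteq J$. By symmetry $J\subseteq J'$, hence $J=J'$. Together with Step 1 this gives $N_1(\mathcal C)=|\mathcal J(\mathcal C)|+\delta$, with one representative per class.

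\emph{Step 3 (counting).} Each $J=s\cap t$ has left endpoint $\max(l_s,l_t)\in L$ and right endpoint $\min(r_s,r_t)\in R$. So $J=[\lambda_i,\rho_j)$ for some $i,j$, which gives $|\mathcal J(\mathcal C)|\le ab$. For the exact formula I claim $[\lambda_i,\rho_j)\in\mathcal J(\mathcal C)$ iff $\lambda_i<\rho_j$, $\rho_j\le U_i$, and $V_j\le\lambda_i$.
\begin{itemize}
\item Sufficiency: the candidate $[\lambda_i,U_i)$ and the candidate $[V_j,\rho_j)$ intersect in exactly $[\max(\lambda_i,V_j),\min(U_i,\rho_j))=[\lambda_i,\rho_j)$, which is nonempty since $\lambda_i<\rho_j$.
\item Necessity: if $[\lambda_i,\rho_j)=s\cap t$, then one of $s,t$ has left endpoint $\lambda_i$ and right endpoint $\ge\rho_j$, so $U_i\ge\rho_j$. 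The other, or the same one, has right endpoint $\rho_j$ and left endpoint $\le\lambda_i$, so $V_j\le\lambda_i$.
\end{itemize}
Summing the indicator over all $(i,j)$ and adding $\delta$ gives the displayed formula for $N_1(\mathcal C)$.

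The main obstacle is Step 1's two-way equivalence $s\supseteq g\iff s\supseteq J(g)$, together with handling the zero labeling correctly. In particular, the zero class must be realized by a genuine nonempty target, and it is not realizable when $[0,m)\in\mathcal C$. The remaining steps are routine interval bookkeeping.
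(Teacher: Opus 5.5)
Your proposal is correct and follows the paper's proof essentially step for step. The shared steps are the canonical intersection $\bigcap_{s\in S_g}s$ realized by at most two candidates, the zero class handled via the target $[0,m)$, and the endpoint enumeration witnessed by $[\lambda_i,U_i)$ and $[V_j,\rho_j)$. The only cosmetic difference is injectivity: you argue by mutual containment $J'\subseteq J\subseteq J'$, whereas the paper recovers $I=s\cap t$ as the intersection of all candidates positive for $I$, which is the same idea.
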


\begin{proof}
\emph{Claim 1: every nonzero signature has a canonical representative.}
For a target $g$ with nonempty positive set $P_g=\{s\in\mathcal C:g\subseteq s\}$, define $I_g=\bigcap_{s\in P_g}s=[\max_{s\in P_g}l_s,\min_{s\in P_g}r_s)$.
It contains $g$, so it is a nonempty legal target.
If $g\subseteq s$ then $I_g\subseteq s$; conversely $I_g\subseteq s$ implies $g\subseteq s$.
Hence $h_{I_g}|_{\mathcal C}=h_g|_{\mathcal C}$.

\emph{Claim 2: $I_g$ is the intersection of at most two candidates.}
The maximal left and minimal right endpoints are attained by at most two candidates, so $I_g$ is their nonempty intersection and lies in $\mathcal J(\mathcal C)$.
Note that $g$ itself need not have observed endpoints; it is first mapped to its canonical representative $I_g$, whose endpoints are observed.

\emph{Claim 3: distinct canonical intersections have distinct signatures.}
Conversely let $I=s\cap t\ne\varnothing$.
The candidates positive for target $I$ include $s$ and $t$, so their intersection is contained in $s\cap t=I$; since all of them contain $I$, it equals $I$.
Hence distinct elements of $\mathcal J(\mathcal C)$ have distinct signatures, which proves a bijection between nonzero signatures and $\mathcal J(\mathcal C)$.
If $[0,m)\in\mathcal C$, every target lies in it and no zero signature exists; otherwise the target $[0,m)$ lies in no candidate and realizes the unique zero signature.

\emph{Endpoint-pair enumeration.}
Every nonempty intersection has left endpoint in $L$ and right endpoint in $R$, giving the $ab$ bound.
An interval $[\lambda_i,\rho_j)$ is such an intersection only if it is nonempty, some candidate with left endpoint $\lambda_i$ reaches at least $\rho_j$, and some candidate with right endpoint $\rho_j$ starts at most at $\lambda_i$; these are the stated conditions, and $[\lambda_i,U_i)$ and $[V_j,\rho_j)$ witness sufficiency.
\end{proof}

\paragraph{Sharpness.}
For positive $a,b$, take $m=a+b+1$ and $\mathcal C=\{[i,a+j):0\le i<a,\ 1\le j\le b\}$.
All $ab$ endpoint pairs are candidates and hence distinct canonical intersections, and $[0,m)\notin\mathcal C$, so $N_1(\mathcal C)=ab+1$; the zero-class correction cannot be dropped.

\subsubsection{Constructive agnostic upper bound}
\label{app:single-upper}

\begin{appthm}[Exponential weights over fixed targets]
\label{thm:single-ew}
Enumerate the $N=N_1(\mathcal C)$ representatives of Theorem~\ref{thm:single-compression}, set $w_1(h)=1/N$, and for every round
\[
\begin{aligned}
p_t&=\textstyle\sum_h w_t(h)h(s_t),\\
w_{t+1}(h)&\propto w_t(h)\,e^{-(h(s_t)-y_t)^2/2}.
\end{aligned}
\]
For every query sequence, adaptive and even repeated, and every outcome sequence in $[0,1]$, pathwise
\[
\begin{aligned}
R_T&\le\min\{T,2\ln N\}\le\min\{T,2\ln(ab+1)\}\\
&\le\min\{T,4\ln(K+1)\}.
\end{aligned}
\]
\end{appthm}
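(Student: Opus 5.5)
The plan is the standard exp-concavity argument for exponential weights with a mixture prediction. The key fact is that $\ell_y(p)=(p-y)^2$ with $p,y\in[0,1]$ is $\eta$-exp-concave for $\eta\le 1/2$. That is, $p\mapsto e^{-\eta(p-y)^2}$ is concave on $[0,1]$. Its second derivative is proportional to $4\eta^2(p-y)^2-2\eta$, which is nonpositive whenever $(p-y)^2\le 1/(2\eta)$, and this holds for $\eta=1/2$ since $(p-y)^2\le 1$.

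With $\eta=1/2$, the update in Theorem~\ref{thm:single-ew} is exactly the Hedge/aggregating update. Each expert $h$ predicts $h(s_t)\in\{0,1\}\subseteq[0,1]$, and the learner predicts the weighted mean $p_t$. By Jensen's inequality for the concave map above,
\[
e^{-\frac12(p_t-y_t)^2}\ \ge\ \textstyle\sum_h w_t(h)\,e^{-\frac12(h(s_t)-y_t)^2}.
\]

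The second step is the usual telescoping potential. Let $W_t=\sum_h \frac1N e^{-\frac12\sum_{u<t}(h(s_u)-y_u)^2}$, so that $w_t(h)$ is the normalized version of these unnormalized weights. The Jensen inequality reads $W_{t+1}/W_t\le e^{-\frac12(p_t-y_t)^2}$. Multiplying over $t$ gives $W_{T+1}\le e^{-\frac12\sum_t(p_t-y_t)^2}$, and $W_1=1$. On the other hand, for the best $h^\star$, $W_{T+1}\ge \frac1N e^{-\frac12\sum_t(h^\star(s_t)-y_t)^2}$. Taking logs yields $\sum_t(p_t-y_t)^2-\min_h\sum_t(h(s_t)-y_t)^2\le 2\ln N$.

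Nothing in this argument uses distinctness, the scheduler's adaptivity, or the structure of $\mathcal C$. It is a pathwise inequality valid for any sequence $(s_t,y_t)$, which gives the "adaptive and even repeated" clause. The $T$ term is trivial: $p_t\in[0,1]$ and $y_t\in[0,1]$ give per-round loss at most $1$, while the comparator loss is nonnegative, so $R_T\le T$.

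The remaining inequalities are bookkeeping. Theorem~\ref{thm:single-compression} gives $N=N_1(\mathcal C)\le ab+1$. Since $a,b\le K$, we have $ab+1\le K^2+1\le (K+1)^2$, so $2\ln(ab+1)\le 4\ln(K+1)$. The expert set must be the class $\mathcal H_{\rm one}(\mathcal C)$ itself, with one representative per labeling class, so that the minimum over experts equals the comparator in $R_T$. That enumeration is exactly the one provided by Theorem~\ref{thm:single-compression}.

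The only step needing care is the exp-concavity constant. I must check the range: the loss is bounded by $1$ on $[0,1]^2$, which makes $\eta=1/2$ admissible. This also explains why the update uses the exponent $-(h(s_t)-y_t)^2/2$.
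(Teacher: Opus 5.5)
Your proposal is correct and follows essentially the same route as the paper. Both arguments establish $\tfrac12$-exp-concavity of square loss on $[0,1]$ by the second-derivative check, apply Jensen to the mixture prediction, telescope the normalized potential, and set $\eta=1/2$ to get the $2\ln N$ term; the $T$ term and the chain $N\le ab+1\le(K+1)^2$ are handled as the same bookkeeping.
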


\begin{proof}
For $y\in[0,1]$ and $\eta>0$,
\[
\frac{d^2}{dp^2}e^{-\eta(p-y)^2}=2\eta e^{-\eta(p-y)^2}\bigl\{2\eta(p-y)^2-1\bigr\},
\]
so $p\mapsto e^{-\eta(p-y)^2}$ is concave on $[0,1]$ whenever $\eta\le1/2$ (square loss is $\tfrac12$-exp-concave; cf.\ \citealp{cesabianchi2006prediction}).
By Jensen's inequality,
\[
(p_t-y_t)^2\le-\eta^{-1}\ln\sum_h w_t(h)e^{-\eta(h(s_t)-y_t)^2}.
\]
With $L_t(h)=\sum_{u\le t}(h(s_u)-y_u)^2$ and $Z_t=N^{-1}\sum_h e^{-\eta L_t(h)}$, $Z_0=1$, the sum inside the logarithm equals $Z_t/Z_{t-1}$.
Summing and using $Z_T\ge N^{-1}e^{-\eta\min_hL_T(h)}$ gives $\sum_t(p_t-y_t)^2\le\min_hL_T(h)+\eta^{-1}\ln N$.
Take $\eta=1/2$.
Learner loss is at most $T$ and comparator loss is nonnegative, and $ab+1\le(K+1)^2$.
No distinctness or independence is used.
\end{proof}

\begin{appcor}[Horizon-aware fixed-target upper bound]
\label{cor:single-horizon}
Let $\mathcal A^{1/2}$ be the learner that predicts $p_t=\tfrac12$ in every round, and let $\mathcal A^{\rm EW}$ be the learner of Theorem~\ref{thm:single-ew}.
For a known horizon $T\ge1$, let $\mathcal A^{\rm H}_T$ run $\mathcal A^{1/2}$ for the whole game if $T/4\le2\ln N_1(\mathcal C)$ and $\mathcal A^{\rm EW}$ for the whole game otherwise; the choice is made before the first query and depends only on $T$ and $\mathcal C$.
Then for every adaptive query sequence (in particular every distinct-query scheduler, so $1\le T\le|\mathcal C|$) and every outcome sequence in $[0,1]$, pathwise,
\[
\begin{aligned}
R_T(\mathcal A^{\rm H}_T)&\le\min\{T/4,\,2\ln N_1(\mathcal C)\}\\
&\le\min\{T/4,\,2\ln(ab+1)\},
\end{aligned}
\]
and hence $V^{\rm ag}_{T,\rm dist}(\mathcal C)\le\min\{T/4,2\ln N_1(\mathcal C)\}$.
The bound $T/4$ is a property of $\mathcal A^{1/2}$, not of $\mathcal A^{\rm EW}$: exponential weights alone need not satisfy $R_T\le T/4$ (see the example below).
\end{appcor}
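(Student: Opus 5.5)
The plan is to bound the two branches of $\mathcal A^{\rm H}_T$ separately and pathwise, and then observe that the branch actually run is the one whose bound is the smaller.

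\emph{Branch $\mathcal A^{1/2}$.} Predicting $\tfrac12$ costs $(\tfrac12-y_t)^2$ per round. For $y_t\in[0,1]$ and any $h\in\mathcal H_{\rm one}(\mathcal C)$, the comparator value $h(s_t)\in\{0,1\}$, and a direct check gives
\[
\bigl(\tfrac12-y\bigr)^2-\bigl(c-y\bigr)^2\le\tfrac14
\quad\text{for } c\in\{0,1\},\ y\in[0,1].
\]
For example, with $c=0$ the left side equals $\tfrac14-y$, which is at most $\tfrac14$; the case $c=1$ is symmetric. Summing over rounds and taking the best $h$ gives $R_T\le T/4$. This holds for every query sequence, adaptive or repeated, and needs only that the comparator values are binary. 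That is the reason the bound is a property of $\mathcal A^{1/2}$: it would fail for a comparator predicting $\tfrac12$ against binary outcomes only in the trivial direction, but it is exactly the binary comparator values that make it work.

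\emph{Branch $\mathcal A^{\rm EW}$.} Here I invoke Theorem~\ref{thm:single-ew}, which gives $R_T\le 2\ln N_1(\mathcal C)$ pathwise for every query and outcome sequence.

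\emph{Combining.} The selection rule is fixed before play and depends only on $T$ and $\mathcal C$. It runs $\mathcal A^{1/2}$ exactly when $T/4\le 2\ln N_1(\mathcal C)$, so the bound of the chosen branch equals $\min\{T/4,2\ln N_1(\mathcal C)\}$. Since both branch bounds are pathwise and uniform over schedulers, the switch cannot be exploited by an adaptive adversary: the choice is not adaptive.

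The second inequality follows from $N_1(\mathcal C)\le ab+1$ in Theorem~\ref{thm:single-compression}, together with monotonicity of $\ln$. Taking expectations and then the infimum over learners gives the bound on $V^{\rm ag}_{T,\rm dist}(\mathcal C)$.

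\emph{Remaining remark.} The final sentence of the statement asserts that exponential weights alone need not satisfy $R_T\le T/4$, and this needs a small example. One instance: take $N\ge2$ labelings that disagree on $s_1$ with uniform weights, so that $p_1=\tfrac12$ if the split is balanced. To push $p_1$ away from $\tfrac12$, choose an unbalanced split, with $p_1$ near $1$ (most labelings equal to $1$ at $s_1$), and reveal $y_1=0$ where some $h$ has $h(s_1)=0$. Then $R_1=p_1^2>\tfrac14$ with $T=1$. I expect this example to be the only step requiring a concrete construction. It is realizable whenever $\mathcal C$ has a candidate that most canonical representatives lie inside but that is not contained in every candidate.
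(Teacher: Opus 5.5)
Your proof is correct and follows the paper's route. You bound the two branches pathwise (the constant-$\tfrac12$ learner by $T/4$, exponential weights by Theorem~\ref{thm:single-ew}), note that the pre-play choice runs the branch with the smaller bound, and finish with $N_1(\mathcal C)\le ab+1$ from Theorem~\ref{thm:single-compression}.

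Two side remarks need fixing. First, binary comparator values play no role in the $T/4$ bound: the paper's proof uses only $(\tfrac12-y_t)^2\le\tfrac14$ and nonnegative comparator loss, and your per-round inequality holds for every $c\in[0,1]$. Second, your closing realizability condition for the counterexample is misstated. Taking $s_1=[0,m)$ inside a larger family satisfies it, yet every labeling gives $s_1$ label $1$, so $R_1=0$. The counterexample actually needs $s_1\neq[0,m)$, so that some target lies outside $s_1$, together with strictly more than half of the $N_1(\mathcal C)$ labelings equal to $1$ at $s_1$. The paper's instance does this: $\mathcal C=\{[0,3),[0,4),[1,3),[1,4)\}$, $s_1=[0,4)$, $p_1=4/5$, $y_1=0$ (realized by target $[4,5)$), giving $R_1=16/25>\tfrac14$.
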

\begin{proof}
For $\mathcal A^{1/2}$ and any $y_t\in[0,1]$, $(\tfrac12-y_t)^2\le\tfrac14$, so its cumulative loss is at most $T/4$; every comparator has nonnegative loss, hence $R_T(\mathcal A^{1/2})\le T/4$.
Theorem~\ref{thm:single-ew} gives $R_T(\mathcal A^{\rm EW})\le2\ln N_1(\mathcal C)$ pathwise.
Since the selection rule is fixed before play and uses only $T$ and $\mathcal C$, the selected learner is a single complete learner for the whole game, and its regret is bounded by the bound of the learner it runs, which is the smaller of the two.
Finally $N_1(\mathcal C)\le ab+1$ by Theorem~\ref{thm:single-compression}, and $\ln$ is increasing.
\end{proof}

\paragraph{Exponential weights alone does not give $T/4$.}
Let $K=2$, $m=5$, and $\mathcal C=\{[0,3),[0,4),[1,3),[1,4)\}$.
By Theorem~\ref{thm:single-compression}, $N_1(\mathcal C)=5$: the four canonical intersections $[0,3),[0,4),[1,3),[1,4)$ give four distinct nonzero labelings, and $[0,5)\notin\mathcal C$ gives the all-zero labeling.
Exactly four of these five labelings assign $1$ to $[0,4)$.
With $T=1$, query $[0,4)$ and reveal $y_1=0$, which is realized by the legal target $[4,5)$.
Uniform exponential weights predicts $p_1=4/5$, so its regret is $(4/5)^2-0=16/25>1/4=T/4$.
Thus the horizon-aware selection in Corollary~\ref{cor:single-horizon} is needed for the $T/4$ term.

\subsubsection{Realizable fixed-target predictor}
\label{app:single-realizable}

\begin{appthm}[Version space]
\label{thm:single-version}
For a fixed $g^\star$ and any adaptive query sequence, including repeated queries, let $\mathcal V_t$ be the representatives consistent with all earlier labels and predict $p_t=|\{h\in\mathcal V_t:h(s_t)=1\}|/|\mathcal V_t|$; after observing $y_t=h_{g^\star}(s_t)$, delete inconsistent representatives.
Then
\[
L_T\le\min\{T,\tfrac12\ln N_1(\mathcal C)\}\le\min\{T,\ln(K+1)\}.
\]
\end{appthm}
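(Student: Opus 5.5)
The plan is a potential argument on the size of the version space. I will show that each round's square loss is at most $\tfrac12\ln(|\mathcal V_t|/|\mathcal V_{t+1}|)$ and then telescope.

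\emph{Setup.} By Theorem~\ref{thm:single-compression}, every target $g$ is $\sim_{\mathcal C}$-equivalent to exactly one of the $N=N_1(\mathcal C)$ representatives. Hence the representative $h^\star$ of $g^\star$ is never deleted: it agrees with every revealed label $y_u=h_{g^\star}(s_u)$. Consequently $h^\star\in\mathcal V_t$ for all $t$, so $|\mathcal V_t|\ge1$ and $p_t$ is well defined, with $|\mathcal V_1|=N$.

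\emph{Per-round inequality.} Write $q=p_t$. Suppose first $y_t=1$. The loss is $(1-q)^2$, and the survivors are exactly the representatives with $h(s_t)=1$, so $|\mathcal V_{t+1}|=q|\mathcal V_t|$ with $q>0$, since $h^\star$ survives. It therefore suffices to show
\[
f(q):=\tfrac12\ln(1/q)-(1-q)^2\ge0\qquad (q\in(0,1]).
\]
Compute $f'(q)=-\tfrac1{2q}+2(1-q)=-\tfrac{(2q-1)^2}{2q}\le0$. Thus $f$ is nonincreasing with $f(1)=0$, and so $f\ge0$ on $(0,1]$.

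Suppose next $y_t=0$. The loss is $q^2$ and $|\mathcal V_{t+1}|=(1-q)|\mathcal V_t|$, so the same inequality applied to $1-q$ finishes this case.

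Repeated or already determined queries need no separate treatment. If every representative in $\mathcal V_t$ agrees on $s_t$, then $q$ equals the true label, the loss is $0$, and $\mathcal V_{t+1}=\mathcal V_t$; this case is covered by $f(1)=0$. Adaptivity of the scheduler is also harmless, because the inequality holds pathwise for every $s_t$.

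\emph{Telescoping and conclusion.} Summing over rounds gives
\[
L_T\le\tfrac12\ln\frac{|\mathcal V_1|}{|\mathcal V_{T+1}|}\le\tfrac12\ln N_1(\mathcal C).
\]
Separately, $L_T\le T$, since $p_t,y_t\in[0,1]$ make each round's loss at most $1$. Finally, Theorem~\ref{thm:single-compression} gives $N_1(\mathcal C)\le ab+1\le(K+1)^2$, so $\tfrac12\ln N_1(\mathcal C)\le\ln(K+1)$.

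The only step needing care is the scalar inequality $f\ge0$. It reduces to the perfect square in $f'$, so I expect no real obstacle.
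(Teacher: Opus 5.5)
Your proposal is correct and follows the paper's proof essentially verbatim. Both bound each round's loss by $\tfrac12\ln(|\mathcal V_t|/|\mathcal V_{t+1}|)$ via the same perfect-square derivative (your $f$ is half of the paper's $F(q)=-\ln q-2(1-q)^2$), then telescope and finish with $N_1(\mathcal C)\le ab+1\le(K+1)^2$.
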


\begin{proof}
The true signature is never removed, so $\mathcal V_t\ne\varnothing$.
Let $q_t=|\mathcal V_{t+1}|/|\mathcal V_t|\in(0,1]$.
If $y_t=1$ then $q_t=p_t$, and if $y_t=0$ then $q_t=1-p_t$; in both cases the loss is $(1-q_t)^2$.
For $q\in(0,1]$, $F(q)=-\ln q-2(1-q)^2$ satisfies $F(1)=0$ and $F'(q)=-(2q-1)^2/q\le0$, so $F\ge0$.
Hence $L_T\le\tfrac12\sum_t-\ln q_t=\tfrac12\ln(|\mathcal V_1|/|\mathcal V_{T+1}|)\le\tfrac12\ln N_1(\mathcal C)$.
This is a realizable cumulative-loss bound; with fractional or inconsistent labels the version space can become empty, and the learner of Theorem~\ref{thm:single-ew} is the agnostic algorithm.
\end{proof}

\subsubsection{Optional family-specific refinement}
\label{app:single-ldim}

This refinement is not needed for the worst-case rate.
Let $D=\operatorname{Ldim}(\mathcal H_{\rm one}(\mathcal C))$ be the Littlestone dimension \citep{littlestone1988learning}: the maximal depth of a complete binary query tree shattered by $\mathcal H_{\rm one}(\mathcal C)$.
Because a fixed hypothesis assigns one label per candidate, a shattered tree cannot repeat a query on a path, so these trees already satisfy distinctness.

\begin{appthm}[Family-specific realizable rate]
\label{thm:single-ldim}
For nonempty $\mathcal C$ and $1\le T\le|\mathcal C|$,
\[
\begin{aligned}
\tfrac14\min\{T,D\}&\le V^{\rm real}_{T,\rm dist}(\mathcal C)\\
&\le\min\{T/4,\ D,\ \tfrac12\ln N_1(\mathcal C)\}.
\end{aligned}
\]
\end{appthm}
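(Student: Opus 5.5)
The statement has three inequalities on the realizable distinct-query value $V^{\rm real}_{T,\rm dist}(\mathcal C)$: two upper bounds, $T/4$ and $D$, and the lower bound $\tfrac14\min\{T,D\}$. The third upper bound, $\tfrac12\ln N_1(\mathcal C)$, is already given by Theorem~\ref{thm:single-version}. I will treat the pieces separately and in increasing order of difficulty.

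\emph{Upper bounds.} The bound $T/4$ holds because the constant prediction $\tfrac12$ loses at most $\tfrac14$ per round on binary labels. For the bound $D$, I will use a realizable learner in the style of the Standard Optimal Algorithm, outputting hard predictions $p_t\in\{0,1\}$, so that the square loss equals the number of mistakes. Concretely, let $\mathcal V_t$ be the current version space among the representatives of Theorem~\ref{thm:single-compression}. At query $s_t$, predict the label $y$ for which $\operatorname{Ldim}(\mathcal V_t\cap\{h:h(s_t)=y\})$ is larger. The standard Littlestone argument then applies: on each mistake, the Littlestone dimension of the version space strictly drops, since if both restrictions had dimension $\operatorname{Ldim}(\mathcal V_t)$ one could build a shattered tree of depth one more. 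So $L_T\le D$.

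This argument does not need distinct queries. If a query is repeated, the version space is already consistent with that point, so the learner predicts it correctly. Taking the minimum of the three learners is fine here because the bound is on the minimax value, so we may use whichever learner is best for the given $T$ and $\mathcal C$. This matches the role of Corollary~\ref{cor:single-horizon}.

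\emph{Lower bound.} Let $d=\min\{T,D\}$ and fix a complete shattered tree of depth $D$. Truncate it to depth $d$. As remarked before the theorem, labels along a root-to-leaf path never repeat a query, so queries on any path are distinct. The adversary works as follows:
\begin{itemize}
\item At time zero, sample independent fair bits $\epsilon_1,\dots,\epsilon_d$.
\item These bits determine a root-to-leaf path. Shattering gives a hypothesis $h\in\mathcal H_{\rm one}(\mathcal C)$ consistent with that path; take $g^\star$ to be its canonical target. This fixes the target before play, independently of the learner's randomness.
\item The scheduler queries the node labels along the path. The query at step $t$ depends only on earlier labels, i.e.\ on $\epsilon_{1:t-1}$.
\item If $T>d$, pad with $T-d$ further distinct candidates, labelled by $h$, which adds nonnegative loss.
\end{itemize}

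Now condition on the learner's seed. Then $\epsilon_t$ is a fair coin independent of the history before round $t$, so each of the first $d$ rounds has expected loss at least $\tfrac14$, exactly as in the computation of Theorem~B.7. Averaging over seeds keeps the bound. Some fixed realization of the bits then attains expected loss at least $d/4$, which gives $V^{\rm real}_{T,\rm dist}\ge\tfrac14\min\{T,D\}$.

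\emph{Main obstacle.} The delicate step is legality in the lower bound. A realizable target must be one fixed span chosen before play, with queries distinct and below the horizon. This is handled because the hypothesis is a genuine $h_g$ restricted to $\mathcal C$ and $d\le T\le|\mathcal C|$. It also needs $D\le|\mathcal C|$, which is immediate from path distinctness. In the $D$ upper bound, the remaining care is to check that the Littlestone recursion is taken over the finite class of representatives, so that the dimension is well defined, including at the base case of a singleton version space, where the dimension is $0$ and no further mistakes occur.
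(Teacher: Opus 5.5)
Your proposal is correct and follows the paper's proof essentially step for step. The upper bounds match: the Standard Optimal Algorithm with hard predictions gives the $D$ bound, because each mistake strictly lowers the Littlestone dimension of the version space, and the constant-$\tfrac12$ predictor and Theorem~\ref{thm:single-version} give the other two. The lower bound also matches: a depth-$\min\{T,D\}$ shattered tree with a leaf drawn at time zero (your independent fair bits are the same as a uniform leaf), seed conditioning giving loss at least $\tfrac14$ per round, and padding with unused candidates labeled by the same target. The only difference is cosmetic: you bound $V^{\rm real}_{T,\rm dist}(\mathcal C)$ by each learner separately rather than by a time-zero selection among them, and these are equivalent for an upper bound on the minimax value.
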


\begin{proof}
\emph{Upper bound.} The Standard Optimal Algorithm predicts the label whose nonempty version-space branch has larger Littlestone dimension (ties broken deterministically).
If both branches had dimension at least the current one, attaching their trees below the current query would give a deeper shattered tree, so each mistake decreases the current dimension; there are at most $D$ mistakes, and binary predictions have squared loss equal to their mistake indicators.
The other two upper bounds follow from the constant-half predictor and Theorem~\ref{thm:single-version}, selected at time zero.

\emph{Lower bound.} Take a shattered tree of depth $d=\min(T,D)$, choose one realizing hypothesis per leaf, and select a leaf uniformly at time zero, independently of learner randomness.
Distinct leaves cannot share a hypothesis, since their paths diverge at a query with different labels.
Following the tree with the target's labels, each next edge is a fair bit conditional on the history and on learner randomness, so each round has expected loss $(p_t-\tfrac12)^2+\tfrac14\ge\tfrac14$; queries are pathwise distinct.
If $T>d$, append unused candidates labeled by the same target.
Averaging over leaves shows that for every learner some fixed target attains the bound.
\end{proof}

\paragraph{Sparse-family example.}
$\log N_1(\mathcal C)$ is not a family-wise lower bound.
For $\mathcal C=\{[i,i+1):0\le i<n\}$ with $m=n\ge2$, $N_1(\mathcal C)=n+1$ and $K=n$, but $D=1$: predicting zero has realizable loss at most one and agnostic regret at most one on distinct queries, while a time-zero fair choice between the first queried singleton and the target $[0,m)$ gives loss $\ge1/4$.
Hence this family has value $\Theta(1)$, and the logarithmic rate of Theorem~\ref{thm:single} is a worst case over candidate families.

\subsubsection{Fixed-target lower bound}
\label{app:single-lower}

\begin{appthm}[Worst-case logarithmic lower bound]
\label{thm:single-lower}
For every integer $K\ge1$, set $m=2K+1$ and
\[
\mathcal C_K=\{[i,K+r):0\le i<K,\ 1\le r\le K\}.
\]
Then $a=b=K$, $|\mathcal C_K|=K^2$, and $N_1(\mathcal C_K)=K^2+1$.
For every $1\le T\le K^2$ and every (possibly randomized) learner there is a target fixed before play and an adaptive, pathwise-distinct scheduler with expected cumulative loss, and hence single-target regret, at least
\[
\tfrac14\min\bigl\{T,\lfloor\log_2(K+1)\rfloor\bigr\}.
\]
The bound holds in both the realizable and the agnostic game.
\end{appthm}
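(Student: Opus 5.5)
The plan has two parts. First I verify the combinatorial facts about $\mathcal C_K$. Then I prove the lower bound by hiding a threshold target along a chain of prefix candidates and querying it by binary search.

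\emph{Combinatorial facts.} Left endpoints are $\{0,\dots,K-1\}$ and right endpoints are $\{K+1,\dots,2K\}$, so $a=b=K$ and $|\mathcal C_K|=K^2$. Since $[0,m)=[0,2K+1)\notin\mathcal C_K$, we have $\delta=1$. Every endpoint pair is itself a candidate, so Theorem~\ref{thm:single-compression} (equivalently, the sharpness example with $a=b=K$) gives $N_1(\mathcal C_K)=K^2+1$.

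\emph{Hard chain and threshold targets.} Set $d=\min\{T,\lfloor\log_2(K+1)\rfloor\}$. Use the chain of candidates $c_r=[0,K+r)$, $r=1,\dots,K$, which is totally ordered by inclusion. For $j\in\{1,\dots,K+1\}$ define a target $g_j$ by $g_j=[K+j-1,K+j)$ if $j\le K$ and $g_{K+1}=[2K,2K+1)$. Then $g_j\subseteq c_r$ iff $r\ge j$. Indeed, the one-token span $[K+j-1,K+j)$ lies in $[0,K+r)$ iff $K+j\le K+r$, and $g_{K+1}$ is contained in no $c_r$. So the label on the chain is the threshold $\1\{r\ge j\}$, and there are $K+1$ thresholds. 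Restrict to $M=2^{d}\le K+1$ of them, say $j\in\{1,\dots,M\}$. At time zero, sample $J$ uniformly from these $M$ values, independently of the learner's randomness. The scheduler runs binary search: it keeps an interval of $2^{d-t+1}$ consistent thresholds, queries the chain element that splits it exactly in half, and recurses on the half selected by the revealed label. Queries on a path are pathwise distinct because each query point lies strictly inside the current interval and previously queried points lie outside it.

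\emph{Per-round loss and the remaining rounds.} Conditional on the history and on the learner's seed, the next label is a fair bit, so each of the first $d$ rounds has conditional expected loss $(p_t-\tfrac12)^2+\tfrac14\ge\tfrac14$. If $T>d$, the scheduler then queries any $T-d$ unused candidates, with labels given by $h_{g_J}$. Those rounds have nonnegative loss, and since $T\le K^2=|\mathcal C_K|$ there are enough unused candidates. Averaging over the seed and over $J$ shows that some fixed $j$ forces expected loss at least $d/4$. The target $g_j$ is realizable with comparator loss $0$, so the same quantity is also the agnostic regret; this covers both games.

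\emph{Main obstacle.} The delicate step is the binary search itself. I must check that the query dividing the current interval in half always exists among $c_1,\dots,c_K$ and keeps the next bit exactly fair. With $2^d$ thresholds $j=1,\dots,2^d$, a query at $c_r$ separates $j\le r$ from $j>r$. Splitting an interval $\{j_0,\dots,j_0+2^k-1\}$ with $k\ge1$ uses $r=j_0+2^{k-1}-1$, which satisfies $1\le r\le 2^d-1\le K$. Each such query is a fresh candidate because the intervals are nested. So the construction goes through.
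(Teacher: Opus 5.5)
Your proposal is correct and follows essentially the same route as the paper: the count $N_1(\mathcal C_K)=K^2+1$ from the canonical-intersection theorem, one-token threshold targets $g_J=[K+J-1,K+J)$ on the prefix chain $[0,K+r)$, median-split binary search with $J$ drawn uniformly and independently of the seed so that each label is a fair bit, filler on unused candidates, and averaging over $J$. The differences are cosmetic: you take $M=2^d$ rather than $2^{\lfloor\log_2(K+1)\rfloor}$, and your distinctness remark should say that earlier split points are never strictly interior to the current block while the new one is, because an earlier query index can equal the block's top element.
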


\paragraph{Threshold picture.}
A one-token target $g_J$ is a hidden threshold: the prefix candidate $s_r$ returns $1$ exactly when $J\le r$.
Median queries perform binary search on a uniform dyadic block of target indices.
Before each of its first $\lfloor\log_2(K+1)\rfloor$ answers, the next label is conditionally fair, forcing expected squared loss at least $1/4$.
If $J=K+1$ belongs to the chosen block, it realizes the all-zero signature on all candidates in $\mathcal C_K$.

\begin{proof}
The target index is sampled at time zero, queries use only earlier labels, and the randomized-learner bound follows by conditioning on the learner's seed and averaging.
Every candidate is legal, since $i\le K-1<K+1\le K+r\le2K<m$, and there are exactly $K$ endpoints on each side.
Every left--right pair is a distinct canonical intersection and $[0,m)\notin\mathcal C_K$, so Theorem~\ref{thm:single-compression} gives $N_1(\mathcal C_K)=K^2+1$.

Let $q=\lfloor\log_2(K+1)\rfloor$, $M=2^q\le K+1$, and $d=\min(T,q)$.
Before the first prediction, draw $J$ uniformly from $\{1,\dots,M\}$, independently of the learner's randomness, and fix $g^\star=g_J=[K+J-1,K+J)$, a legal one-token target.
The prefix candidates $s_r=[0,K+r)$, $1\le r\le K$, satisfy $h_{g_J}(s_r)=\1\{J\le r\}$, so the target is a hidden threshold.
Maintain the block $[A,B]$ of indices consistent with past labels, initially $[1,M]$; during the first $d$ rounds its size is a power of two.
Query $s_r$ with $r=A+(B-A+1)/2-1$, a median split with $1\le r\le M-1\le K$; after label $1$ keep $[A,r]$, after label $0$ keep $[r+1,B]$.
The posterior stays uniform on the retained block, so the next label is exactly Bernoulli$(1/2)$ conditional on the public history and the learner's randomness, because the scheduler ignores predictions and the learner's randomness is independent of $J$.
Thus for any prediction, $\mathbb E[(p_t-y_t)^2\mid\text{past},p_t]=(p_t-\tfrac12)^2+\tfrac14\ge\tfrac14$.

Each earlier split separates the current block from its discarded sibling, and each later split is strictly internal to the current block, so queries are distinct along every path.
If $T>d$, query the lexicographically first unused candidates, labeled by the same $g_J$; there are $K^2-d$ of them.
The true target has zero loss, so the realizable loss equals regret; in the agnostic game the same sequence is legal and its best comparator also has zero loss.
Averaging over $J$ shows that for every learner some fixed $J$ attains the bound.
\end{proof}

\begin{appcor}[Worst-case rate]
\label{cor:single-worst}
For $K\ge1$ and $1\le T\le K^2$, let
$W^{\diamond}(K,T)=\sup\{V^{\diamond}_{T,\rm dist}(\mathcal C):\mathcal C\subseteq\mathcal S_m,\ m\ge1,\ \max(a,b)=K,\ |\mathcal C|\ge T\}$ for $\diamond\in\{\rm ag,real\}$.
Then
\[
\begin{aligned}
W^{\diamond}(K,T)&\ge\tfrac14\min\{T,\lfloor\log_2(K+1)\rfloor\},\\
W^{\rm real}(K,T)&\le\min\{T/4,\tfrac12\ln(K^2+1)\},\\
W^{\rm ag}(K,T)&\le\min\{T/4,2\ln(K^2+1)\},
\end{aligned}
\]
so both equal $\Theta(\min\{T,\ln(K+1)\})$ uniformly.
\end{appcor}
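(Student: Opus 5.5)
The corollary is a sandwich. I would prove the lower bound with the hard family of Theorem~\ref{thm:single-lower}, and the two upper bounds by applying Corollary~\ref{cor:single-horizon} and Theorem~\ref{thm:single-version} uniformly over all families of effective scale $K$.

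\emph{Lower bound.} Fix $K\ge1$ and $1\le T\le K^2$. The family $\mathcal C_K$ with $m=2K+1$ has $a=b=K$, so $\max(a,b)=K$, and $|\mathcal C_K|=K^2\ge T$. Hence it is admissible in the supremum defining $W^\diamond(K,T)$. Theorem~\ref{thm:single-lower} gives, for every learner, a target fixed before play and a distinct-query scheduler forcing expected loss at least $\tfrac14\min\{T,\lfloor\log_2(K+1)\rfloor\}$. The theorem states this in both the realizable and agnostic games, because the comparator has zero loss. Taking the infimum over learners, this lower-bounds $V^\diamond_{T,\rm dist}(\mathcal C_K)$ for both $\diamond\in\{\rm ag,real\}$, and therefore lower-bounds the supremum $W^\diamond$.

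\emph{Upper bounds.} For any admissible $\mathcal C$ with $\max(a,b)=K$, Theorem~\ref{thm:single-compression} gives $N_1(\mathcal C)\le ab+1\le K^2+1$.
\begin{itemize}
\item Agnostic game: Corollary~\ref{cor:single-horizon} gives $V^{\rm ag}_{T,\rm dist}(\mathcal C)\le\min\{T/4,2\ln N_1(\mathcal C)\}\le\min\{T/4,2\ln(K^2+1)\}$.
\item Realizable game: I would choose, at time zero and using only $T$ and $\mathcal C$, between the constant-$\tfrac12$ learner (loss at most $T/4$) and the version-space learner of Theorem~\ref{thm:single-version} (loss at most $\tfrac12\ln N_1(\mathcal C)$). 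This is the same selection argument as in Corollary~\ref{cor:single-horizon}, and it yields $\min\{T/4,\tfrac12\ln(K^2+1)\}$.
\end{itemize}
Since these bounds are uniform in $\mathcal C$, they pass to the supremum $W^\diamond$.

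\emph{Matching up to constants.} I need $\min\{T,\lfloor\log_2(K+1)\rfloor\}$ and $\min\{T,\ln(K^2+1)\}$ to both be within constants of $\min\{T,\ln(K+1)\}$, uniformly in $K\ge1$.
\begin{itemize}
\item For the upper side, $\ln(K^2+1)\le 2\ln(K+1)$ because $K^2+1\le(K+1)^2$.
\item For the lower side, $\lfloor x\rfloor\ge x/2$ whenever $x\ge1$. Since $K\ge1$ gives $\log_2(K+1)\ge1$, we get $\lfloor\log_2(K+1)\rfloor\ge\tfrac12\log_2(K+1)\ge\tfrac12\ln(K+1)$.
\item Then use $\min\{T,cx\}\ge c\min\{T,x\}$ for $c\le1$, together with $\min\{T/4,cx\}\le\max(c,1)\min\{T,x\}$.
\end{itemize}
Combining these gives $\Theta(\min\{T,\ln(K+1)\})$ for both games. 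The main point needing care is the $\lfloor\cdot\rfloor$ at small $K$, which the $K\ge1$ observation above handles. Everything else is direct invocation of the stated results.
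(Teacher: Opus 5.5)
Your proposal is correct and matches the paper's proof step for step. It takes the lower bound from $\mathcal C_K$ (Theorem~\ref{thm:single-lower}), the agnostic upper bound from Corollary~\ref{cor:single-horizon}, and the realizable upper bound from a pre-play choice between the constant-$\tfrac12$ learner and the version-space learner with $ab\le K^2$, then handles the floor via $\lfloor x\rfloor\ge x/2$ for $x=\log_2(K+1)\ge1$. You also write out the elementary comparisons ($\ln(K^2+1)\le 2\ln(K+1)$ and the rescaling of $\min$) that the paper leaves implicit, and these are correct.
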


\begin{proof}
The agnostic upper bound follows from Corollary~\ref{cor:single-horizon}; the realizable upper bound follows by choosing the constant-half learner or Theorem~\ref{thm:single-version} before play, with $ab\le K^2$ in both cases.
The lower bound is attained by $\mathcal C_K$ for every admissible horizon.
For $x=\log_2(K+1)\ge1$, $\lfloor x\rfloor\ge x/2$, so $\min(T,\lfloor x\rfloor)\ge\tfrac12\min(T,\ln(K+1))$.
\end{proof}

\subsubsection{Same-family target-multiplicity separation}
\label{app:single-separation}

For $K\ge2$, this subsection establishes the two rates of Corollary~\ref{cor:separation} on the \emph{same} candidate family $\mathcal C_K$.
The restriction $K\ge2$ is necessary for the expression $\Theta(\log K)$: for $K=1$ we have $\log K=0$, whereas the fixed-target value is positive.
Indeed, for $\mathcal C_1=\{[0,2)\}$ with $m=3$ and $T=1$, the targets $[0,1)$ and $[2,3)$ give labels $1$ and $0$ on the only candidate; drawing one of them uniformly at time zero makes the label a fair bit, so every learner has expected loss at least $\tfrac14$, and the constant prediction $\tfrac12$ attains $\tfrac14$.
In terms of $\ln(K+1)$, which is positive for all $K\ge1$, Corollary~\ref{cor:single-worst} covers every $K\ge1$.
\begin{proof}[Same-family bounds]
\emph{Multi-target: a $K$-antichain gives $\Omega(K)$, with upper bound $\widetilde O(K)$.}
In $\mathcal C_K$ the candidates $A_i=[i,K+i+1)$, $0\le i<K$, have strictly increasing left and right endpoints and therefore form an antichain of size $K$, the maximum possible since two spans sharing a left endpoint are comparable.
At time zero draw independent fair bits $B_i$ and let $G=\{A_i:B_i=1\}$; then $\1\{\exists g\in G:g\subseteq A_i\}=B_i$.
Query these $K$ candidates first and fill with unused candidates labeled by the same $G$.
For $K\le T\le K^2$ every learner has expected loss at least $K/4$, while the induced binary isotonic comparator has zero loss; requiring $G\ne\varnothing$ by fixing one bit to $1$ leaves $(K-1)/4$.
The general upper bound of Theorem~\ref{thm:compression} gives $\widetilde O(K)$ on this family.

\emph{Fixed-target: the threshold construction gives $\Theta(\log K)$.}
For $K\le T\le K^2$, Theorem~\ref{thm:single-lower} gives a lower bound $\tfrac14\min\{T,\lfloor\log_2(K+1)\rfloor\}=\tfrac14\lfloor\log_2(K+1)\rfloor$, since $\lfloor\log_2(K+1)\rfloor\le K\le T$.
Corollary~\ref{cor:single-horizon} with $N_1(\mathcal C_K)=K^2+1$ gives an upper bound $2\ln(K^2+1)$.
For $K\ge2$, $\lfloor\log_2(K+1)\rfloor\ge\tfrac12\log_2(K+1)\ge\tfrac{1}{2\ln2}\ln K$ and $2\ln(K^2+1)\le2\ln(2K^2)\le6\ln K$, because $2\le K$ implies $2K^2\le K^3$.
Hence the fixed-target value on $\mathcal C_K$ is $\Theta(\log K)$ with absolute constants for all $K\ge2$ and $K\le T\le K^2$.
\end{proof}

\subsubsection{Exact implementations and computational complexity}
\label{app:single-algorithm}

These implementations concern the fixed-target learners; the bounds below count arithmetic operations and assume $\mathcal C\ne\varnothing$.
They are separate from the minimax proof.

\paragraph{Baseline implementation.}
Sort the unique left and right endpoints, compute $U_i$ and $V_j$ in one pass, and output $[\lambda_i,\rho_j)$ for every $(i,j)$ with $\lambda_i<\rho_j$, $\rho_j\le U_i$, and $V_j\le\lambda_i$; add $[0,m)$ if it is not a candidate.
By Theorem~\ref{thm:single-compression} this covers every labeling class exactly once, and costs $O(n\log(n+1)+ab)$.

The agnostic learner keeps one log-weight per representative $[u_h,v_h)$; on query $[l,r)$ it evaluates $b_h=\1\{l\le u_h,\ v_h\le r\}$, predicts $\sum_hw_hb_h/\sum_hw_h$, and adds $-\tfrac12(b_h-y)^2$ to each log-weight (log-sum-exp stabilization does not change predictions).
The realizable learner keeps one alive flag per representative, predicts the alive positive fraction, and deletes representatives whose bit disagrees with the binary label.
Both use $O(N_1(\mathcal C))\subseteq O(K^2)$ time per prediction and update and $O(K^2)$ memory, since $n\le ab\le K^2$.

\paragraph{Faster exact implementation (exponential weights).}
Store canonical representative $[\lambda_i,\rho_j)$ in cell $(i,j)$ of an $a\times b$ array (invalid cells have weight zero) and keep the zero expert as a scalar.
Initialize each valid cell and the zero expert, when present, to mass one.
For query $[l,r)$, the positive experts occupy the valid cells of the endpoint-rank rectangle $\{(i,j):\lambda_i\ge l,\ \rho_j\le r\}$; predict its mass $P$ divided by the total mass $S$.
With $\eta=1/2$, put $c=e^{\eta(2y-1)}$.
Multiplying positive cells by $c$ and leaving every other mass unchanged reproduces Theorem~\ref{thm:single-ew}: the omitted factor $e^{-\eta y^2}$ is common to \emph{all} experts, including the zero expert, and cancels under normalization.

A balanced segment tree per row stores interval sums and lazy multiplicative tags; a prefix sum or prefix multiplication costs $O(\log(b+1))$.
Let $i_0=\min\{i:\lambda_i\ge l\}$ and $j_0=\max\{j:\rho_j\le r\}$, obtained from the endpoint ranks.
Each round consists of these operations:
\[
\begin{aligned}
P&=\sum_{i=i_0}^a\operatorname{Sum}_i(1,j_0),\\
p&=P/S;\\
&\operatorname{Scale}_i(1,j_0,c)\quad(i_0\le i\le a),\\
S&\leftarrow S+(c-1)P.
\end{aligned}
\]
The scale operations follow observation of $y$; the zero-expert scalar stays unchanged and is included in $S$.
Thus prediction and update together cost $O(a\log(b+1))$ per round, including maintenance of the global mass.
Alternatively, column trees use suffix operations on the left-endpoint ranks and visit columns with $\rho_j\le r$, costing $O(b\log(a+1))$.
Choose the cheaper orientation during preprocessing to obtain
\[
\begin{aligned}
&O\bigl(\min\{a\log(b+1),\,b\log(a+1)\}\bigr)\\
&\subseteq O\bigl(K\log(K+1)\bigr)
\end{aligned}
\]
per round.
Enumeration and construction of the trees take $O(n\log(n+1)+ab)$ preprocessing time and $O(ab)$ memory; building all row trees costs $O(ab)$, and the transposed construction has the same cost.

\section{Experimental Details}
\label{app:setup}

\subsection{Natural Questions Candidates}
\label{app:g2}
Coordinates are NQ document-token indices, including HTML tokens; each candidate is a half-open interval.
\emph{Paragraphs:} scanning HTML-marked tokens, a unit opens at \texttt{<P>} and closes at \texttt{</P>} (inclusive); an unclosed paragraph extends to the document end; if no paragraph is found in a nonempty document, the full document is used.
\emph{Sentences:} within a paragraph, a sentence ends after a non-HTML token equal to \texttt{.}, \texttt{?}, or \texttt{!} (punctuation included); a nonempty tail up to the paragraph end is another sentence.
The candidate family is the union of paragraph units, sentence units, and spans covering consecutive groups of $2$, $4$, or $8$ sentence units and of $2$, $4$, or $8$ paragraph units, with group start stride one unit.
Exact duplicates are removed.
If more than $2{,}048$ candidates remain, they are subsampled deterministically, stratified by exact token length.
The generator uses only the document tokens and their HTML flags; it never reads answers or labels.
For the evidence-containment outcome, $G$ is the set of all valid annotated long-answer intervals pooled over annotators, so $Y_s=1$ iff $s$ contains at least one annotated long answer; the geometric statistics in Table~\ref{tab:geometry} do not depend on $G$.

\subsection{ERASER Evidence Inference}
\label{app:eraser}
Geometry is measured on the training split ($7{,}958$ instances, $1{,}924$ documents) and the validation split ($972$ instances, $247$ documents), $8{,}930$ instances over $2{,}171$ documents in total.
All $11{,}665$ annotated evidence spans have valid token offsets that match the official tokens.
Candidates follow the same multigranular construction as for NQ, without a cap (median $T=896$, maximum $7{,}980$), and are generated without access to the evidence.
The number of minimal targets $k_{\min}$ per instance has mean $1.306$, median $1$, 75th/90th/95th percentiles $2$, and maximum $7$; $27.48\%$ of instances have $k_{\min}\ge2$, $2.52\%$ have $k_{\min}\ge3$, and $0.10\%$ have $k_{\min}\ge5$.
Every target is contained in at least one candidate, and on the validation split the outcome is monotone over all $63{,}792{,}172$ containment relations.

\begin{table*}[ht]
\centering
\small
\begin{tabular}{@{}lrrl@{}}
\toprule
Quantity & Natural Questions & ERASER Ev.\ Inference & Role in the theory \\
\midrule
Documents & 3{,}000 & 2{,}171 & \\
Evidence instances & -- & 8{,}930 & \\
Median raw length $m$ & 6{,}336.5 & 4{,}410 & side of the span triangle \\
Median $T=|\mathcal C|$ & 413 & 896 & horizon if all candidates are verified \\
Median $\meff$ & 90.5 & 145 & compressed side (Thm.~\ref{thm:compression}) \\
Median compression $m/\meff$ & $58.37\times$ & $30.10\times$ & reduction of the ambient scale \\
Median width$/\meff$ & 1.0 & 1.0 & lower-bound scale (App.~\ref{app:width-lower}) \\
Documents with width $=\meff$ & 93.43\% & 100\% & upper and lower scales coincide \\
Documents with $\meff\le T\le\meff^2$ & 100\% & 100\% & nontrivial regime (Thm.~\ref{thm:compression}) \\
Instances with $\ge2$ minimal targets & -- & 27.48\% & natural multi-target evidence \\
Order preservation under $\psi$ & exact & exact & Thm.~\ref{thm:compression} \\
\bottomrule
\end{tabular}
\caption{Effective geometry of real evidence-candidate families. Medians are over documents; median width equals median $\meff$ on both datasets ($90.5$ and $145$). ``--'': not applicable.}
\label{tab:geometry}
\end{table*}

\subsection{Structural Measurements}
\label{app:audit-cost}
\emph{Order preservation:} for each document we compare all $T(T-1)$ ordered candidate pairs, checking that $s\subsetneq s'$ holds exactly when $\psi(s)\preceq\psi(s')$ with $\psi(s)\neq\psi(s')$, and that $\psi$ is injective.
This covers $26{,}469{,}705$ strict containments on NQ and $127{,}582{,}641$ on ERASER, with no relation lost or added.
These $O(T^2)$ comparisons validate exactness and are not needed to compute $\psi$, which takes $O(T\log T)$ time (Algorithm~\ref{alg:endpoint-compression}).
\emph{Exact width:} by Dilworth's theorem, the width of a finite poset equals $T$ minus the size of a maximum matching in the bipartite graph with an edge from $s$ to $s'$ whenever $s\subsetneq s'$; we compute the matching with the Hopcroft--Karp algorithm in $O(E\sqrt T)$ time, where $E$ is the number of strict containments, after $O(T^2)$ graph construction.
The solver agrees with brute-force enumeration on small validation cases.

\subsection{Predictive Model and Features}
\label{app:model}
\label{app:predictive}
The ERASER test split contains $959$ instances over $240$ documents and $933{,}403$ candidates; across the four scheduler families (with three random seeds) this yields $5{,}600{,}418$ prediction rows.

\paragraph{BASE.}
BASE is an L2-regularized logistic regression ($C=0.01$, L-BFGS, at most $300$ iterations, tolerance $10^{-4}$) over eleven candidate-static features.
The eight geometry features are the normalized start, end, length, and center of the span, $\log(1+\text{token length})$, the number of overlapping sentences, $\log(1+\text{sentence count})$, and an indicator for the full document.
The three semantic features use the frozen \texttt{BAAI/bge-base-en-v1.5} encoder ($768$ dimensions, with the encoder's retrieval query prefix and L2-normalized embeddings): the maximum, mean, and top-3 mean cosine similarity between the query and the official sentences overlapping the candidate.
Candidates are never encoded as truncated passages.
All feature scaling is fit on training data only.

\paragraph{BASE-H.}
BASE-H adds six generic online-history features that do not use containment: the step index, the normalized step fraction, $\log(1+\text{number seen})$, the numbers of positive and negative labels seen, and the positive fraction among seen labels.

\paragraph{COMBINED$^*$.}
Let $p_{\mathrm{base}}(s)$ be the BASE probability, $h_t(s)$ the six history features, and $c_t(s)$ the sixteen containment features below, all computed before the label of $s$ is revealed. COMBINED$^*$ predicts
\[
\begin{aligned}
p_{\mathrm{comb}}(s,t)
=\sigma\bigl(\beta_0+\beta^\top z\bigl[&\operatorname{logit}(\operatorname{clip}(p_{\mathrm{base}}(s))),\\
&h_t(s),\,c_t(s)\bigr]\bigr),
\end{aligned}
\]
where $z$ is a standardization fit on training data and $\sigma$ is the logistic function; the coefficient on the BASE logit is learned.
It is trained by L2 logistic regression ($C=1.0$, L-BFGS, at most $250$ iterations, tolerance $10^{-4}$, no class reweighting), with $C$ and the containment-feature variant selected on validation.
The sixteen containment features use only previously verified candidates: the numbers of seen strict subspans and superspans; the numbers of positive and negative seen subspans and of positive and negative seen superspans; the $\log(1+\cdot)$ of each of these six counts; the positive fractions among seen subspans and among seen superspans; and the binary indicators
\begin{align*}
\texttt{inner\_positive}&=\1\{\exists\,\text{seen } s'\subsetneq s,\ y_{s'}=1\},\\
\texttt{outer\_negative}&=\1\{\exists\,\text{seen } s'\supsetneq s,\ y_{s'}=0\}.
\end{align*}
No distance-based or nearest-neighbor feature is used.
Parameters and scaling are fixed after training; at test time only the episode's features change as labels are revealed, and each prediction precedes its own feedback.
COMBINED$^*$ applies no hard monotone override: the two indicators enter only as learned features.
The decision threshold $\tau=0.22254114$, fit on validation, affects only thresholded metrics such as balanced accuracy, not Brier scores.

\paragraph{RAND-CONT.}
A random permutation reassigns candidates to the nodes of the containment graph; the graph's topology is unchanged, but its relation to the actual spans, and therefore to the labels, is destroyed.

\subsection{Deterministic Monotone Closure Control}
\label{app:closure}
Closure+BASE predicts $p_{\mathrm{cl}}(s_t)=1$ if
\[
\exists j<t:\ s_j\subsetneq s_t,\ y_j=1,
\]
$p_{\mathrm{cl}}(s_t)=0$ if
\[
\exists j<t:\ s_j\supsetneq s_t,\ y_j=0,
\]
and $p_{\mathrm{cl}}(s_t)=p_{\mathrm{base}}(s_t)$ otherwise.
By monotonicity both rules are always correct, and they never conflict on real trajectories.
Of the $5{,}600{,}418$ test predictions, $4{,}152{,}334$ ($74.1433\%$) are logically determined ($376{,}342$ forced positive, $6.7199\%$; $3{,}775{,}992$ forced negative, $67.4234\%$) and $1{,}448{,}084$ ($25.8567\%$) are unresolved; under equal weighting of scheduler families the determined fraction is $66.8801\%$.
On determined rows, Closure+BASE has raw Brier $0$ and COMBINED$^*$ has raw Brier $5.84\times10^{-6}$, the negligible price of not hard-coding the rules.
On unresolved rows, Closure+BASE equals BASE exactly, and raw Brier is $0.113645$ for BASE and Closure+BASE, $0.098396$ for BASE-H, and $0.044505$ for COMBINED$^*$.
Over all rows, raw Brier is $0.069136$ (BASE), $0.029385$ (Closure+BASE), and $0.011512$ (COMBINED$^*$); the additive decomposition attributes $68.98\%$ of the BASE-to-COMBINED$^*$ reduction to the step from BASE to Closure+BASE and $31.02\%$ to the step from Closure+BASE to COMBINED$^*$.
This is a system-level decomposition of the loss, not a causal attribution to features.
The macro balanced Brier difference between Closure+BASE and COMBINED$^*$ is $0.082868$ (descriptive).

\subsection{Detailed Result Tables}
\label{app:detailed}
Tables~\ref{tab:predictive}, \ref{tab:neural}, and \ref{tab:sequential} give full-precision ERASER results (including pooled balanced Brier), internal NQ DEV neural results, and sequential results summarized in Tables~\ref{tab:main} and~\ref{tab:seq}.

\begin{table*}[ht]
\centering
\small
\begin{tabular}{@{}lcccccc@{}}
\toprule
System & Macro BBr.\,$\downarrow$ & Pooled BBr.\,$\downarrow$ & Raw Brier\,$\downarrow$ & AUPRC\,$\uparrow$ & AUROC\,$\uparrow$ & Bal.\ Acc.\,$\uparrow$ \\
\midrule
STATIC-POSLEN & 0.289556 & 0.281161 & 0.070421 & 0.444671 & 0.849476 & 0.592813 \\
SEMANTIC & 0.303817 & 0.301019 & 0.076375 & 0.338441 & 0.842072 & 0.543903 \\
BASE & 0.278308 & 0.273260 & 0.069136 & 0.457781 & 0.869953 & 0.596526 \\
BASE-H & 0.273206 & 0.274229 & 0.066958 & 0.500378 & 0.875671 & 0.601722 \\
Closure+BASE & 0.128333 & 0.098892 & 0.029385 & 0.884289 & 0.977516 & 0.859424 \\
RAND-CONT & 0.430307 & 0.408977 & 0.357563 & 0.145537 & 0.630237 & 0.567365 \\
\textbf{COMBINED$^*$} & \textbf{0.045465} & \textbf{0.040173} & \textbf{0.011512} & \textbf{0.980260} & \textbf{0.997277} & \textbf{0.942720} \\
\midrule
\multicolumn{7}{@{}l}{\emph{Paired improvement of COMBINED$^*$ in macro balanced Brier, with 95\% document-cluster bootstrap CI}} \\
over BASE & \multicolumn{6}{l}{$0.232844$ \ $[0.226830,\,0.238939]$} \\
over BASE-H & \multicolumn{6}{l}{$0.227742$ \ $[0.221091,\,0.234609]$} \\
over RAND-CONT & \multicolumn{6}{l}{$0.384843$ \ $[0.381948,\,0.387544]$} \\
\bottomrule
\end{tabular}
\caption{Online prediction on ERASER Evidence Inference test data ($959$ instances, $240$ documents). Each metric is computed per scheduler family (random: mean over three seeds) and then averaged with equal weight over the four families. Macro BBr., the primary metric, is per-instance balanced Brier averaged over instances; pooled BBr.\ pools candidates first. CIs: paired bootstrap over documents. Closure+BASE applies the deterministic consequences of monotonicity and otherwise falls back to BASE. COMBINED$^*$ improves over BASE and over Closure+BASE in all four scheduler families (Table~\ref{tab:scheduler}).}
\label{tab:predictive}
\end{table*}

\begin{table}[ht]
\centering
\small
\setlength{\tabcolsep}{4pt}
\begin{tabular}{@{}lcc@{}}
\toprule
System & Doc.-macro Brier$\downarrow$ & $\Delta$ vs.\ N2 [95\% CI] \\
\midrule
N2 & 0.01732 & -- \\
N2+G1 & 0.00692 & 0.01040 [0.00897, 0.01190] \\
N2+G4 & 0.00614 & 0.01118 [0.00970, 0.01275] \\
FULL & \textbf{0.00629} & 0.01103 [0.00957, 0.01255] \\
\bottomrule
\end{tabular}
\caption{Document-macro ordinary Brier of neural span predictors on the $600$ locked NQ DEV evaluation documents ($354{,}856$ candidates). N2: BERT embedding, span length and center; G1: containment flags; G4: structure-local containment estimate; FULL: N2+G1+G4. CIs: paired bootstrap over documents ($10{,}000$ resamples).}
\label{tab:neural}
\end{table}

\begin{table}[ht]
\centering
\small
\setlength{\tabcolsep}{3.5pt}
\begin{tabular}{@{}lrc@{}}
\toprule
History state & Gain & 95\% CI \\
\midrule
S0: none & $-0.0033$ & $[-0.0042, -0.0025]$ \\
S1: open & $0.0090$ & $[0.0069, 0.0113]$ \\
S2: pos.\ closure & $0.7248$ & $[0.6970, 0.7512]$ \\
S3: neg.\ closure & $0.0020$ & $[0.0015, 0.0027]$ \\
\midrule
\textbf{History $-$ none} & $\mathbf{0.0159}$ & $\mathbf{[0.0142, 0.0177]}$ \\
Closure $-$ open & $0.0760$ & $[0.0686, 0.0836]$ \\
\bottomrule
\end{tabular}
\caption{History-conditioned gain (N2 minus FULL ordinary Brier; positive favors FULL) on all locked NQ DEV trajectories ($600$ documents). Upper rows: state-specific gains. Bold: the primary sequential contrast. S1: comparable but non-determining history. ``Closure $-$ open'': gain under closure (S2, S3) minus gain without closure (S0, S1). CIs: paired bootstrap over documents ($10{,}000$ resamples).}
\label{tab:sequential}
\end{table}

\subsection{Per-Scheduler Results and Logical Determination}
\label{app:scheduler}
Table~\ref{tab:scheduler-app} reports macro balanced Brier for BASE, Closure+BASE, and COMBINED$^*$ in each scheduler family, together with the random-seed breakdown. For the random family, scores are averaged over seeds $17$, $42$, and $2027$ per instance before aggregation; overall results in Table~\ref{tab:predictive} weight the four families equally.
For each schedule, the paired document-bootstrap interval for the improvement of COMBINED$^*$ over BASE excludes zero, and all six pairwise schedule comparisons for COMBINED$^*$ remain significant after Holm correction (adjusted $p=0.005997$).

\begin{table}[ht]
\centering
\small
\setlength{\tabcolsep}{4pt}
\begin{tabular}{@{}lrrr@{}}
\toprule
Schedule & BASE & Closure+BASE & COMBINED$^*$ \\
\midrule
Random & 0.278308 & 0.049931 & 0.037721 \\
\quad seed 17 & & 0.049876 & 0.038239 \\
\quad seed 42 & & 0.049941 & 0.037751 \\
\quad seed 2027 & & 0.049976 & 0.037173 \\
Short$\to$long & 0.278308 & 0.018327 & 0.008265 \\
Long$\to$short & 0.278308 & 0.273783 & 0.073379 \\
Semantic-first & 0.278308 & 0.171293 & 0.062494 \\
\bottomrule
\end{tabular}
\caption{Macro balanced Brier ($\downarrow$) by scheduler family on ERASER test data. BASE does not use revealed labels, so its score is identical across schedules.}
\label{tab:scheduler-app}
\end{table}

\begin{table}[ht]
\centering
\small
\setlength{\tabcolsep}{3.5pt}
\begin{tabular}{@{}lrrrr@{}}
\toprule
Schedule & Determ. & BASE & Closure & COMB.$^*$ \\
\midrule
Random & 88.41\% & 0.2783 & 0.0499 & \textbf{0.0377} \\
Short$\to$long & 11.23\% & 0.2783 & 0.0183 & \textbf{0.0083} \\
Long$\to$short & 82.25\% & 0.2783 & 0.2738 & \textbf{0.0734} \\
Semantic-first & 85.63\% & 0.2783 & 0.1713 & \textbf{0.0625} \\
\bottomrule
\end{tabular}
\caption{Macro balanced Brier ($\downarrow$) per scheduler family on ERASER Evidence Inference test data (random: mean over three seeds), with the fraction of logically determined predictions. Closure: Closure+BASE.}
\label{tab:scheduler}
\end{table}

\paragraph{Determination over trajectory progress.}
Splitting each trajectory into ten bins of normalized progress, the fraction of logically determined predictions rises from the first to the last bin as follows: random $62.16\%\to93.44\%$; short-to-long $0.02\%\to47.96\%$; long-to-short $4.92\%\to99.62\%$; semantic-first $48.90\%\to99.87\%$.
Short-to-long scheduling exposes positive inner evidence only late, so closure has almost nothing to propagate early.
Long-to-short scheduling starts with little forced information but rapidly accumulates outer-negative implications.
Random and semantic-first scheduling reveal substantial closure early.

\subsection{Neural Span Prediction on Natural Questions}
\label{app:neural}
The neural study uses the internal NQ DEV partition of $3{,}000$ documents, split into $2{,}400$ training and $600$ locked evaluation documents; it is disjoint from the holdout partition used for geometry, and neither partition is the official NQ test set.
The evaluation documents contain $354{,}856$ candidates, of which $3{,}357$ are positive; $298$ documents contain at least one positive and $302$ contain none.
Both models are feed-forward networks with one hidden layer of width $256$ and dropout $0.1$, trained for a fixed $10$ epochs with AdamW (learning rate $10^{-3}$, weight decay $0.01$, batch size $4{,}096$) and unweighted binary cross-entropy, on frozen BERT [CLS] embeddings, with seeds $17$, $42$, and $2027$; the evaluation documents are not used for early stopping or tuning.
N2 has $770$ inputs ($768$ embedding dimensions, normalized span length, and normalized span center).
FULL adds four structural inputs computed only from previously verified candidates: three binary containment flags (G1) and the probability of a structure-local containment estimator (G4), the fallback of the M5$^*$ predictor in Appendix~\ref{app:nq-pred}; these are the only two structural families in the model.
Table~\ref{tab:neural} reports their scores; the main text uses this pair only for the sequential mechanism analysis.

\paragraph{Results.}
Document-macro ordinary Brier is $0.017319804$ (N2) and $0.0062912899$ (FULL, a $63.7\%$ relative reduction); the paired difference is $0.011028514$ with 95\% CI $[0.0095747626, 0.0125467]$ over $10{,}000$ document resamples.
Decomposing by stratum:

\begin{center}
\small
\setlength{\tabcolsep}{3pt}
\begin{tabular}{@{}lrccc@{}}
\toprule
Docs & $n$ & N2 & FULL & $\Delta$ \\
\midrule
All & 600 & 0.01732 & 0.00629 & 0.01103 \\
Pos.-pres. & 298 & 0.03010 & 0.01184 & 0.01826 \\
Zero-pos. & 302 & 0.00471 & 0.00082 & 0.00389 \\
\bottomrule
\end{tabular}
\end{center}

\noindent The 95\% CIs of the differences are $[0.00957, 0.01255]$ (all), $[0.01597, 0.02073]$ (positive-present), and $[0.00263, 0.00554]$ (zero-positive).

\noindent Positive-present documents contribute $82.24\%$ of the mean improvement and zero-positive documents $17.76\%$.
On positive-present documents, conditional document-macro balanced Brier is $0.37433039$ (N2) and $0.13714559$ (FULL), difference $0.2371848$ with CI $[0.22492763, 0.24917338]$.
The structural features consist of exactly two families, G1 and G4. Feature-family variants give document-macro ordinary Brier $0.006919294$ (N2+G1) and $0.0061352853$ (N2+G4), with improvements over N2 of $0.01040051$ (CI $[0.0089666322, 0.011896913]$) and $0.011184519$ (CI $[0.00970477, 0.012745206]$).
Candidate-pooled descriptive metrics are, for N2 and FULL respectively: Brier $0.0086564979$ and $0.0026861944$; AUPRC $0.16834297$ and $0.83564373$; AUROC $0.91704649$ and $0.9927433$; balanced accuracy at $0.5$, $0.53208891$ and $0.84812621$.
Measured amortized cost is $0.0044474664$\,ms per candidate for N2 and $0.028133907$\,ms for FULL ($6.33\times$).

\subsection{Sequential State Analysis}
\label{app:seq}
\begin{table}[ht]
\centering
\footnotesize
\begin{tabular}{@{}lrc@{}}
\toprule
Setting & Gain & 95\% CI \\
\midrule
S0: no history & $-$.0033 & [$-$.0042, $-$.0025] \\
S1: open & .0090 & [.0069, .0113] \\
S2: positive closure & .7248 & [.6970, .7512] \\
S3: negative closure & .0020 & [.0015, .0027] \\
\midrule
\textbf{History $-$ none} & \textbf{.0159} & [.0142, .0177] \\
Closure $-$ open & .0760 & [.0686, .0836] \\
Destruction & .3155 & [.3053, .3258] \\
\bottomrule
\end{tabular}
\caption{Sequential dependence on verification history.}
\label{tab:seq}
\end{table}

Figure~\ref{fig:appendix-sequential} summarizes how history exposure evolves along trajectories and how the state-conditioned gains relate to the history-destruction control, natural antichains, and the exploratory ERASER analysis.

\begin{figure*}[t]
\centering
\includegraphics[width=\textwidth]{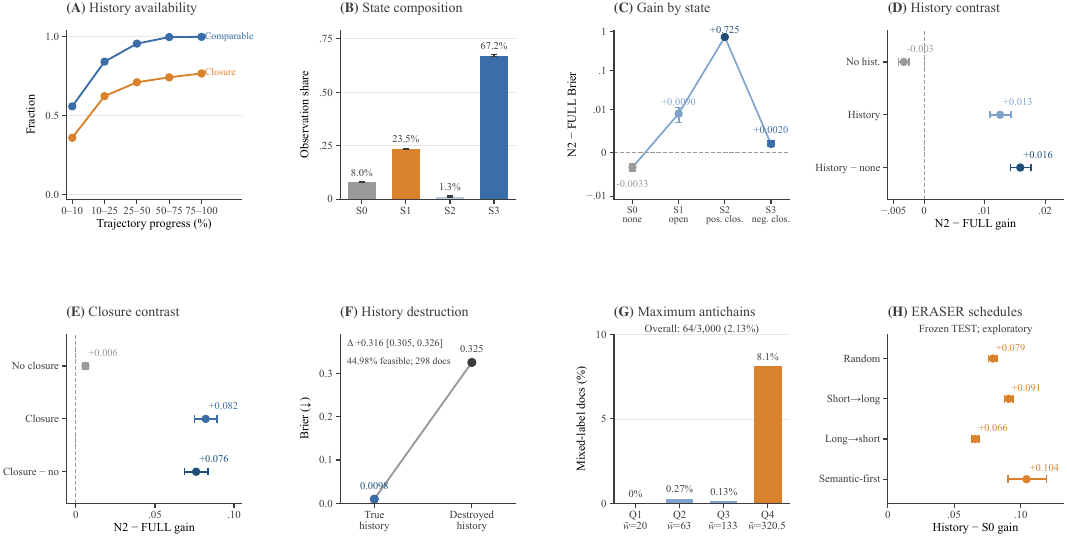}
\caption{Supplementary diagnostics for sequential relevance. (A) Fractions of trajectory observations with comparable history and logical closure across NQ DEV trajectory-progress bins. (B) Overall state composition and (C) state-conditioned N2$-$FULL ordinary-Brier gain. (D--E) History and closure contrasts on NQ DEV. (F) True versus destroyed history on the C2 exact-feasible subset; the interval shown is for destroyed minus true history, and feasibility is $44.98\%$ ($298$ documents). (G) Mixed-label frequency among geometry-selected maximum antichains across width quartiles; the overall rate is $64/3{,}000$ documents ($2.13\%$). (H) History-minus-S0 gain by scheduler on frozen ERASER TEST predictions, shown as exploratory corroboration. Error bars are 95\% paired document-bootstrap confidence intervals where available; panel G is descriptive. NQ and ERASER metrics are not pooled.}
\label{fig:appendix-sequential}
\end{figure*}

\paragraph{States and exposure.}
For each prediction along the frozen NQ DEV trajectories of FULL ($600$ documents, $354{,}856$ candidates), a previously verified candidate is comparable if it strictly contains or is strictly contained in the current span.
The mutually exclusive states are S0 (no comparable history; $120{,}317$ observations in $600$ documents), S1 (comparable, non-determining; $399{,}862$ in $599$), S2 (verified positive subspan; $11{,}524$ in $295$), and S3 (verified negative superspan; $1{,}242{,}577$ in $599$); counts follow trajectories, so they exceed the number of unique candidates.
No conflict between S2 and S3 and no closure violation occurred.
Comparable history is available for $91.9998\%$ of observations (CI $[0.915737, 0.923350]$) and logical closure for $68.4585\%$ (CI $[0.679195, 0.689588]$).
All CIs use $10{,}000$ paired document resamples.

\paragraph{State-conditioned Brier.}
Ordinary Brier for N2 and FULL is $0.010451538$ and $0.013787571$ in S0, $0.046093168$ and $0.037059575$ in S1, $0.724904556$ and $0.000108850$ in S2, and $0.002046139$ and $0.000000066$ in S3.
The gain with history is $0.012557952$ and without history $-0.003339657$; their difference is $0.015897610$ (CI $[0.014247860, 0.017678110]$, $599$ documents with both state types).
The non-closure gain is $0.006027718$ (CI $[0.004457883, 0.007655134]$), the closure gain $0.082061566$ (CI $[0.075024900, 0.089296113]$), and their difference $0.076033848$ (CI $[0.068577885, 0.083600278]$).

\paragraph{History-destruction control.}
The frozen FULL model is not retrained.
For each history event, the counterfactual uses only labels revealed in the observed prefix, preserves prefix length and the numbers of positive and negative labels, and reassigns to the comparable candidates labels taken from distinct previously observed non-comparable candidates (permutation seeds $17$, $42$, $2027$).
This is exactly feasible for $744{,}032$ of $1{,}653{,}963$ history events ($44.98\%$); the others either cannot change any label or lack enough non-comparable candidates and are excluded, with no fallback.
On the feasible subset, Brier is $0.009752124$ with the true history and $0.325284365$ with the destroyed history (difference $0.315532241$, CI $[0.305282041, 0.325756788]$); on S0 events the difference is $-1.86\times10^{-10}$, as expected when no comparable history exists.

\paragraph{Natural antichains.}
For all $3{,}000$ NQ DEV documents, a maximum antichain was selected from span geometry alone before labels were examined.
Median width is $91$ (mean $134.04$), but the selected antichain is mixed-label in only $64$ documents ($2.13\%$; $4.65\%$ among documents with a positive), with mean label entropy $0.0014$ bits (median $0$).
By width quartile, the mixed-label fraction is $0$, $0.27\%$, $0.13\%$, and $8.13\%$ (median widths $20$, $63$, $133$, $320.5$).
On the $600$ evaluation documents, ordinary Brier on antichain members is $0.000272$ (N2) and $0.000472$ (FULL), dominated by class imbalance; on the $20$ documents where conditional balanced Brier is defined, FULL improves it from $0.436$ to $0.353$ (CI of the gain $[0.047, 0.124]$), too few to support a claim about natural hardness.

\paragraph{ERASER corroboration.}
Using only frozen ERASER test trajectories and predictions ($5{,}600{,}418$ rows, $5{,}754$ trajectories, $240$ documents; no conflicts), we compare the BASE$-$COMBINED$^*$ gain in history states with that in S0.
This analysis is post-hoc.

\begin{center}
\small
\begin{tabular}{@{}lcc@{}}
\toprule
Schedule & History $-$ S0 & 95\% CI \\
\midrule
Random & 0.079052 & $[0.075942, 0.082180]$ \\
Short$\to$long & 0.090876 & $[0.087751, 0.094120]$ \\
Long$\to$short & 0.065844 & $[0.063252, 0.068618]$ \\
Semantic-first & 0.104391 & $[0.090418, 0.119283]$ \\
\bottomrule
\end{tabular}
\end{center}

\subsection{Secondary Predictive Evaluation on Natural Questions}
\label{app:nq-pred}
Natural Questions supports the full OECP construction: queries, documents, valid long-answer spans as targets, gold-independent candidates, and exact containment labels.
Before the neural study, an earlier one-shot study on the $3{,}000$-document NQ holdout partition evaluated four simpler online predictors under four schedules (random, short-to-long, long-to-short, left-to-right): a position-histogram predictor (M3), a length-isotonic predictor (M4), a containment-rule predictor with a pooled fallback (M5), and a variant with a structure-local fallback (M5$^*$).
Family-mean results are:

\begin{center}
\small
\begin{tabular}{@{}lrr@{}}
\toprule
Method & Raw Brier $\downarrow$ & Bal.\ Brier $\downarrow$ \\
\midrule
M3 & 0.017356 & 0.213855 \\
M4 & 0.111648 & 0.197339 \\
M5 & 0.004280 & 0.168749 \\
M5$^*$ & 0.039978 & 0.079786 \\
\bottomrule
\end{tabular}
\end{center}

\noindent The containment-aware M5$^*$ substantially improves balanced Brier, while M5 has the lowest raw Brier.
These learners differ from COMBINED$^*$ and FULL, and the holdout partition had previously been used to evaluate structured predictors, so this study is secondary evidence.

\section{Additional Experiment Details}
\label{app:new-exp}

\subsection{Strong Neural Validation on Official NQ DEV}
\label{app:lanea}
\emph{Population.} The frozen official public NQ development/validation evaluation contains $7{,}830$ documents, $4{,}357{,}196$ candidates, and $72{,}913$ positive candidates ($1.673\%$).
Of $5{,}499$ answerable documents, $5{,}065$ ($92.11\%$) have at least one covered target, and $6{,}534$ of $7{,}695$ annotated target spans ($84.91\%$) are covered by some candidate; results are conditional on this frozen candidate family and do not measure answer-level QA accuracy.
The hidden NQ test split was not used. Before this evaluation was fixed, the first $3{,}000$ rows of each of the first three public-validation shards had been inspected in earlier work, so the public split is not an untouched evaluation set.
\emph{Equal-capacity contract.} A0, A1, and A2 are fully fine-tuned DeBERTa-v3-base encoders (all parameters trainable) with the same $384$-token question/candidate/local-context template and the same $768{+}12\to256$ (GELU) $\to1$ head with dropout $0.1$, optimizer, training data, seeds, and calibration. A0 supplies zero history slots, A1 four generic-history fields, and A2 the same four plus eight frozen containment fields.
\emph{Statistics.} The primary contrast is A1$-$A2 document-macro Brier with a paired document bootstrap ($10{,}000$ resamples). The aggregate difference decomposes into a deterministic-closure contribution of $0.016074$ and a learned unresolved contribution of $0.000478$. In the single-target answerable stratum, unresolved Brier is $0.070659$ (A1) versus $0.067066$ (A2). Table~\ref{tab:lanea-seed} reports seed-level results; all three A1$-$A2 differences are positive overall and on unresolved candidates.

\begin{table}[ht]
\centering
\footnotesize
\setlength{\tabcolsep}{3pt}
\begin{tabular}{@{}lrrrrr@{}}
\toprule
Model & Seed & Brier & Unres. & ROC & PRC \\
\midrule
A0 & 17 & 0.027973 & 0.049730 & 0.9046 & 0.2177 \\
A0 & 42 & 0.027184 & 0.048738 & 0.9073 & 0.2358 \\
A0 & 2027 & 0.028030 & 0.049673 & 0.9021 & 0.1934 \\
A1 & 17 & 0.028276 & 0.051024 & 0.9018 & 0.1589 \\
A1 & 42 & 0.028971 & 0.051292 & 0.8856 & 0.1663 \\
A1 & 2027 & 0.026837 & 0.048453 & 0.9238 & 0.2376 \\
A2 & 17 & 0.011774 & 0.050349 & 0.9736 & 0.7987 \\
A2 & 42 & 0.011390 & 0.047092 & 0.9844 & 0.8221 \\
A2 & 2027 & 0.011266 & 0.047286 & 0.9843 & 0.8267 \\
\bottomrule
\end{tabular}
\caption{Seed-level results on official public NQ DEV.}
\label{tab:lanea-seed}
\end{table}

\subsection{Additional Cross-Backbone Neural Results}
\label{app:roberta}
\emph{Protocol.} A0/A1/A2 were re-trained with RoBERTa-base in place of DeBERTa-v3-base, with the same template, head, history fields, official public NQ DEV population ($7{,}830$ documents, $4{,}357{,}196$ candidates), and seeds $17$, $42$, $2027$; the inference protocol was fixed before public predictions, and no backbone was selected after them. CIs use $10{,}000$ paired document resamples (seed $20261005$).
\emph{Results.} Document-macro Brier (mean over seeds) is $0.027924$/$0.027667$/$0.011982$ on all candidates and $0.049457$/$0.049386$/$0.048184$ on unresolved candidates for A0/A1/A2. Unresolved A1$-$A2 is $0.001202$ ($[0.000956, 0.001447]$) and A0$-$A2 $0.001273$ ($[0.001016, 0.001536]$); all-candidate A1$-$A2 is $0.015685$ ($[0.015136, 0.016236]$). Per-seed unresolved A1$-$A2 is $0.001192$, $0.001148$, $0.001265$. From the frozen DeBERTa predictions, unresolved A0$-$A2 is $0.001138$ ($[0.000923, 0.001358]$; Figure~\ref{fig:roberta}).

\begin{figure*}[ht]
\centering
\includegraphics[width=0.9\textwidth]{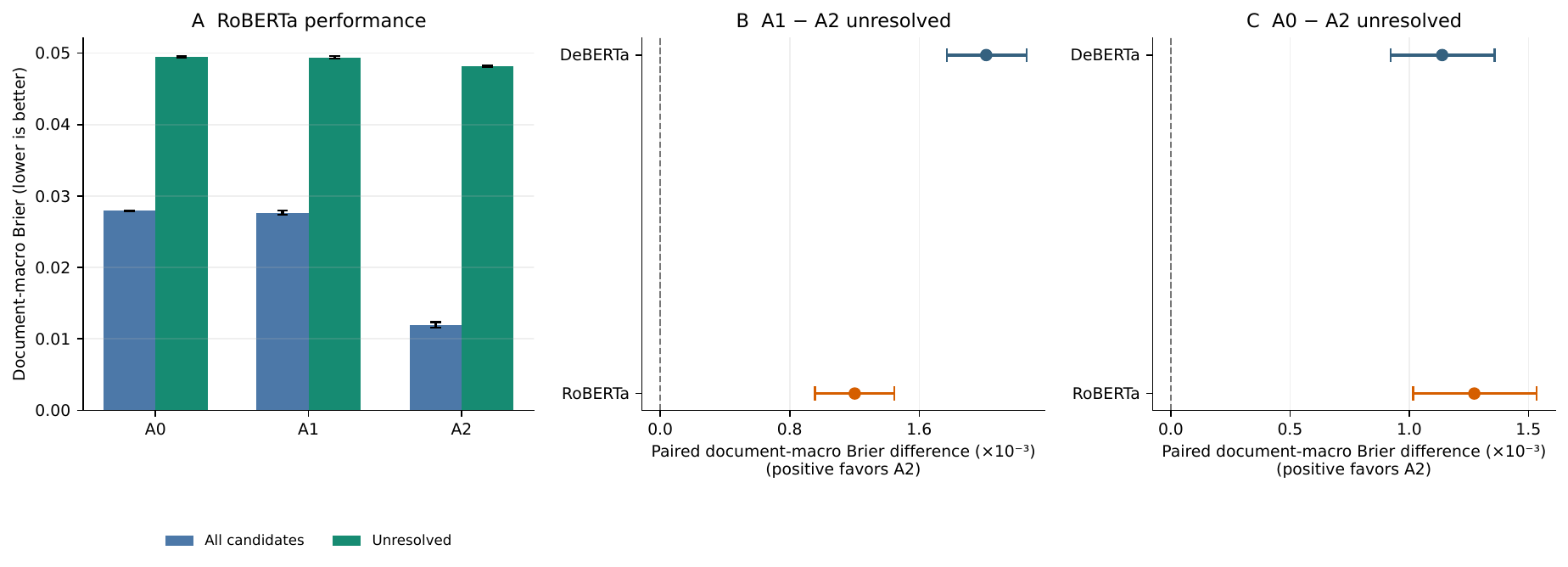}
\caption{Cross-backbone replication on official public NQ DEV. (A) RoBERTa document-macro Brier for A0/A1/A2 on all and unresolved candidates. (B, C) Unresolved A1$-$A2 and A0$-$A2 for DeBERTa and RoBERTa ($\times10^{-3}$; positive favors A2; 95\% paired bootstrap CIs).}
\label{fig:roberta}
\end{figure*}

\subsection{Practical Isotonic Instantiation}
\label{app:laneb}
\emph{Formulation.} For a document with candidate poset $(\mathcal C,\subseteq)$, B1 computes the Euclidean projection of the frozen semantic scores (B0, the N2 predictor of Appendix~\ref{app:neural}) onto $\{f\in[0,1]^{\mathcal C}: f(s)\le f(s')\text{ for every Hasse edge } s\subset s'\}$ with unit weights and no labels. B2 predicts each candidate before its label is revealed and then re-projects the squared-loss objective over revealed labels plus a $\lambda$-weighted B0 prior; $\lambda=0.01$ was selected from $\{0.01,0.1,1,10,100\}$ on $120$ training documents without evaluation labels. Both use a compiled Hildreth dual-coordinate (Dykstra) projection over order and box half-spaces on the Hasse diagram.
\emph{Validation.} On $100$ small induced candidate families the solver matched SciPy SLSQP with maximum objective difference $2.3\times10^{-11}$ (tolerance $10^{-7}$), and the compiled Hasse extractor matched a brute-force transitive reduction. On all $600$ evaluation documents, B1 and the final B2 state have zero violations over all comparable pairs, every prequential B2 state has zero violations on all Hasse edges, and the maximum B2 KKT primal residual is $4.41\times10^{-9}$.
\emph{Results and runtime.} All-candidate Brier is $0.008720$ (B0), $0.008533$ (B1), $0.003067$ (B2); on the $48{,}126$ candidates unresolved before their query it is $0.022777$, $0.023003$, $0.022549$. B2's all-candidate Brier includes predictions after prior outcomes; the unresolved comparison is the stricter view. Median batch projection takes $0.000906$\,s per document, the median online update $0.000634$\,s per verification, and a full online trajectory $0.920$\,s per document (mean $4.02$\,s; median $\meff=94$).
\emph{Compression ablation.} Endpoint-rank compression halves the coordinate payload (median $50\%$) and preserves the Hasse diagram exactly, but leaves the numbers of variables and sparse constraints unchanged; the median raw/compressed runtime ratio is $0.768$, i.e., the compressed pipeline is about $1.30\times$ slower in this implementation. Compression reduces the theoretical scale and storage, not this solver's cost.

\subsection{Direct Instantiation of Fixed-Target Learners}
\label{app:fixedinst}
\emph{Cohort.} Of the $7{,}830$ official public NQ DEV documents, $3{,}640$ admit an exact unique fixed-target realization: exactly one valid distinct target, and stored labels equal to its containment vector on every candidate. No target or label was synthesized or repaired. For these families $|\mathcal C|$ has median $381$ (max $2{,}048$) and $\meff$ median $83$ (max $870$).
\emph{Representation.} The exact number of fixed-target labelings $N_1(\mathcal C)$ has median $721$, 95th percentile $3{,}506$, and maximum $4{,}535$; $N_1(\mathcal C)\le ab+1\le\meff^2+1$ holds for every family, and the compressed representation matches brute-force enumeration on $100$ induced real families.
\emph{Learners and orders.} Exponential weights (EW), its horizon-aware pre-play choice (HA-EW), and the version-space learner are run exactly as in Appendix~\ref{app:single}, CPU-only and without tuning, under six query orders per document (left-to-right, short-to-long, long-to-short, and three seeded random orders), giving $21{,}840$ distinct-query trajectories.
\emph{Results.} Table~\ref{tab:fixedinst} summarizes the guarantees; every trajectory satisfies its bound, and every version-space path reaches a singleton (median round $153$). Medians by order are $3.93$--$4.93$ for EW regret and $1.00$--$2.11$ for version-space loss, all with $100\%$ bound satisfaction. Median exact preprocessing takes $1.52$\,ms per family, prediction/update $0.0054$\,ms (EW) and $0.0024$\,ms (version space) per query, and a learner episode $5.19$\,ms (Figure~\ref{fig:fixedinst}).

\begin{table*}[ht]
\centering
\footnotesize
\setlength{\tabcolsep}{3pt}
\begin{tabular}{@{}llrrr@{}}
\toprule
Learner & Bound & Median & p95 & Satisfied \\
\midrule
EW (regret) & $2\ln N_1(\mathcal C)$ & $4.321$ & $8.239$ & $100\%$ \\
HA-EW (regret) & $\min\{T/4,2\ln N_1(\mathcal C)\}$ & $4.410$ & $8.239$ & $100\%$ \\
Version space (loss) & $\tfrac12\ln N_1(\mathcal C)$ & $1.803$ & $2.486$ & $100\%$ \\
\bottomrule
\end{tabular}
\caption{Theorem-derived fixed-target learners on $3{,}640$ exact-realizable NQ DEV documents ($21{,}840$ trajectories). Cumulative square-loss regret or loss per trajectory.}
\label{tab:fixedinst}
\end{table*}

\begin{figure*}[ht]
\centering
\includegraphics[width=\textwidth]{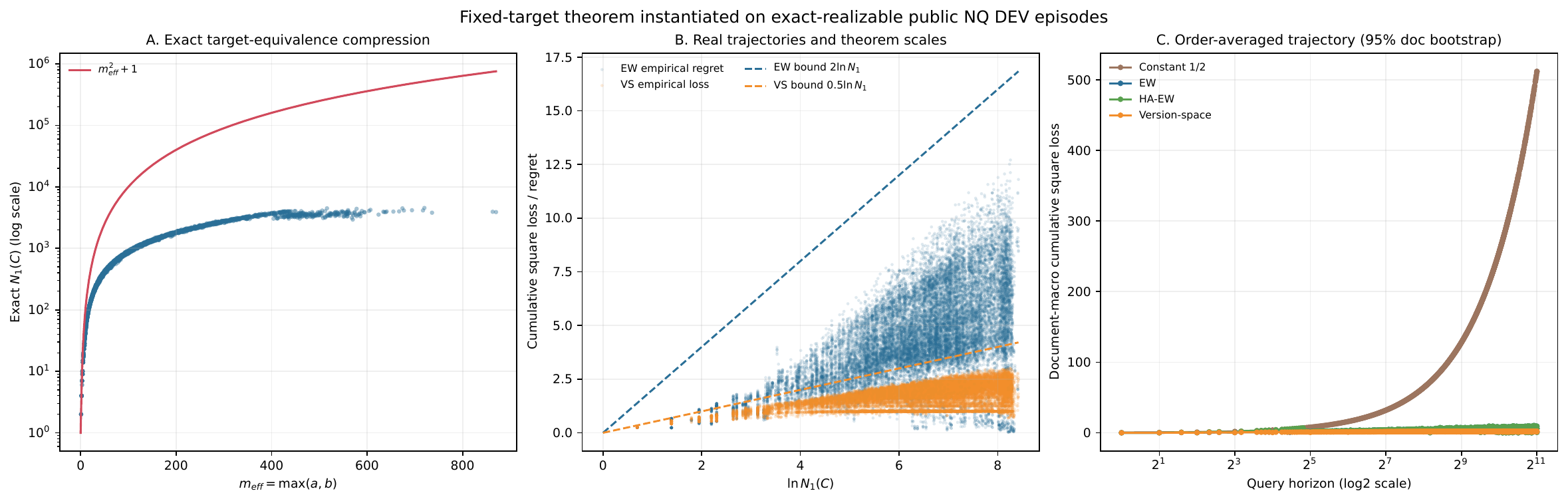}
\caption{Fixed-target theorem instantiation on exact-realizable NQ DEV families. (A) Exact $N_1(\mathcal C)$ against $\meff^2+1$. (B) Per-trajectory EW regret and version-space loss against their bounds. (C) Order-averaged cumulative loss over the query horizon, with the constant-$\tfrac12$ predictor for reference.}
\label{fig:fixedinst}
\end{figure*}

\subsection{Additional Computational Scaling}
\label{app:scaling}
\begin{table}[ht]
\centering
\small
\setlength{\tabcolsep}{4pt}
\begin{tabular}{@{}llrr@{}}
\toprule
Component & Scope & Median & p95 \\
\midrule
Batch isotonic & $1{,}689$ fam. & $0.359$\,ms & $7.267$\,ms \\
ESR decision & $64{,}415$ dec. & $0.162$\,ms & $2.108$\,ms \\
\bottomrule
\end{tabular}
\caption{Computational footprint of the structure-aware components: batch projection on $1{,}689$ real families, all satisfying the containment constraints exactly; ESR over $64{,}415$ decisions from $1{,}089$ families.}
\label{tab:computational-footprint}
\end{table}

We profiled the frozen batch isotonic projection and ESR decision code, unchanged and CPU-only, on $1{,}689$ real candidate families: the $600$ internal NQ DEV documents, $972$ ERASER validation annotation instances, and $117$ SciFact internal-dev instances.
These families span $|\mathcal C|=1$--$3{,}885$ candidates, $0$--$37{,}190$ compressed Hasse edges, and $\meff=1$--$724$; the largest geometry (six ERASER validation instances) has $|\mathcal C|=3{,}885$, $37{,}190$ edges, and $\meff=w=572$.
Per-family batch-projection time is $0.359$\,ms at the median and $7.267$\,ms at the 95th percentile ($221$\,ms on the largest family), with zero monotonicity violations in all families and repeats.
ESR timing covers the $1{,}089$ ERASER and SciFact families with a verified-positive root ($64{,}415$ decisions): $0.162$\,ms at the median, $1.508$\,ms at the 90th, and $2.108$\,ms at the 95th percentile, growing mainly with the active-set size (Figure~\ref{fig:scaling}).

\begin{figure}[ht]
\centering
\includegraphics[width=\columnwidth]{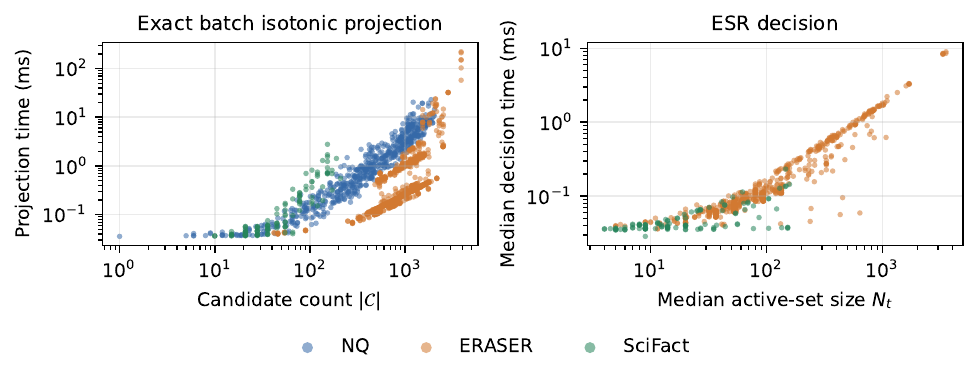}
\caption{Runtime scaling of the two structure-aware components summarized in Table~\ref{tab:computational-footprint}: exact batch isotonic projection over real candidate families (left) and ESR decision scoring over active sets (right); log--log.}
\label{fig:scaling}
\end{figure}

\subsection{Decision-Aware Verification on SciFact}
\label{app:f2}
\emph{Protocol.} A SciBERT containment model was trained on evidence-bearing instances from the official SciFact training split (internal $447/117$ train/dev split by claim) and calibrated by a single temperature ($T=2.75$) on internal dev; a separate frozen SciBERT claim classifier serves as downstream reader. The held-out evaluation is SciFact's labeled validation split ($340$ rows; $209$ evidence-bearing instances in $188$ claim clusters; $130$ NEI groups excluded), read only after the protocol, code, and models were fixed and not used for selection or calibration. Each episode starts from the verified abstract; P-GREEDY verifies the highest-probability active candidate, GEOM-BALANCE minimizes $\max(b_t(s)-1,N_t-b_t(s))$ without probabilities, and ESR minimizes $\mathcal L_t(s)$; all use exact positive/negative transitions and distinct queries. Budgets are $\{1,2,4,8,16,32\}$; AUCs integrate over $\log_2$ budget. CIs use $10{,}000$ paired claim-cluster resamples.
\emph{Results.} Table~\ref{tab:f2-budget} gives per-budget localization. ESR also improves on GEOM-BALANCE in normalized-length AUC ($0.0637$, CI $[0.0486, 0.0790]$) and active-set AUC ($0.0258$, CI $[0.0082, 0.0417]$). Among $212$ shared common-prefix states, ESR and P-GREEDY choose different actions in $207$ ($97.64\%$).
\emph{Downstream reading.} For completeness, downstream macro-F1 (AUC over budgets) is $0.5460$ (P-GREEDY), $0.5432$ (GEOM-BALANCE), and $0.5430$ (ESR), between $0.5373$ for the full abstract and $0.5451$ for the minimal gold region; the policies are statistically indistinguishable on this reader (ESR$-$P-GREEDY $-0.0030$, CI $[-0.0138, 0.0073]$).
\emph{Evaluation split.} The labeled SciFact validation split serves as the held-out evaluation, since official test evidence is not publicly released; no test data were read.

\begin{table}[ht]
\centering
\footnotesize
\setlength{\tabcolsep}{2.6pt}
\begin{tabular}{@{}lrrrrr@{}}
\toprule
Policy & $B$ & Len. & Orc. & Act. & F1 \\
\midrule
P-GREEDY & 1 & 1.0000 & 0.0048 & 1.0000 & 0.5373 \\
P-GREEDY & 2 & 0.8155 & 0.0239 & 0.6550 & 0.5500 \\
P-GREEDY & 4 & 0.5653 & 0.1770 & 0.2768 & 0.5500 \\
P-GREEDY & 8 & 0.3199 & 0.8038 & 0.0301 & 0.5437 \\
P-GREEDY & 16 & 0.2860 & 0.9856 & 0.0054 & 0.5451 \\
P-GREEDY & 32 & 0.2856 & 0.9904 & 0.0030 & 0.5451 \\
GEOM-BAL. & 1 & 1.0000 & 0.0048 & 1.0000 & 0.5373 \\
GEOM-BAL. & 2 & 0.8660 & 0.0096 & 0.4852 & 0.5348 \\
GEOM-BAL. & 4 & 0.4925 & 0.1531 & 0.0992 & 0.5500 \\
GEOM-BAL. & 8 & 0.2857 & 0.9856 & 0.0002 & 0.5451 \\
GEOM-BAL. & 16 & 0.2849 & 1.0000 & 0.0000 & 0.5451 \\
GEOM-BAL. & 32 & 0.2849 & 1.0000 & 0.0000 & 0.5451 \\
ESR & 1 & 1.0000 & 0.0048 & 1.0000 & 0.5373 \\
ESR & 2 & 0.6022 & 0.2297 & 0.3153 & 0.5412 \\
ESR & 4 & 0.4103 & 0.5598 & 0.0946 & 0.5490 \\
ESR & 8 & 0.3073 & 0.9091 & 0.0276 & 0.5386 \\
ESR & 16 & 0.2901 & 0.9809 & 0.0140 & 0.5451 \\
ESR & 32 & 0.2859 & 0.9856 & 0.0081 & 0.5451 \\
\bottomrule
\end{tabular}
\caption{SciFact held-out localization by budget $B$: mean normalized evidence length, fraction reaching the minimal oracle region, mean active-set fraction, and downstream macro-F1.}
\label{tab:f2-budget}
\end{table}

\subsection{ESR Replication on ERASER}
\label{app:eraser-esr}
\emph{Protocol.} ESR, P-GREEDY, the budget grid, and the localization metrics are unchanged from the SciFact protocol (Appendix~\ref{app:f2}); probabilities come from a frozen ERASER probability provider, with no retraining. The evaluation covers all $959$ eligible ERASER instances, i.e., those whose frozen candidate family contains a positive candidate and whose localization root is positive under the existing admissible-target semantics. The minimal oracle region is the shortest positive candidate in the frozen family, with deterministic tie-breaking. CIs use a paired cluster bootstrap with $10{,}000$ resamples (seed $20261005$).
\emph{Results.} Normalized evidence-length AUC is $0.469872$ for P-GREEDY and $0.353517$ for ESR, a paired P-GREEDY$-$ESR difference of $0.116355$ (95\% CI $[0.097527, 0.135707]$). Median verifier calls to the minimal oracle region fall from $16$ (P-GREEDY) to $11$ (ESR). Over shared common-prefix states, ESR and P-GREEDY choose different actions in $83.668\%$ of shared states.

\subsection{Retrieval-Generated Candidate Validation}
\label{app:realgen}
\emph{Generators and pools.} DPR uses the NQ-trained single-NQ question and context encoders \citep{karpukhin2020dense}; it scores every natural paragraph of a document against its question by inner product. BM25 \citep{robertson2009probabilistic} ($k_1=1.5$, $b=0.75$) scores the same document-local paragraphs. For a cutoff $k$, a pool contains the full-document root, the top-$k$ paragraphs, and their natural sentences, deduplicated; no multi-unit runs or sections are added. Pools are built without labels; $k=8$ is primary and $k\in\{4,16\}$ are sensitivity cutoffs fixed in advance.
\emph{Prediction.} The DeBERTa A0/A1/A2 predictors are applied without retraining, in generator-score order (DPR) or a fixed generator-independent random order. Unresolved document-macro Brier is computed on generated-covered documents, with $10{,}000$ paired document resamples (seed $20261005$).
\emph{Learned scheduler.} LEARNED-SCHED is an MLP ($18\to64\to32\to1$, GELU) trained per generator on states from $2{,}400$ NQ TRAIN documents ($1{,}920$ fit, $480$ early stopping), visited by P-GREEDY, GEOM-BALANCE, and random behavior policies in equal thirds. It regresses the negative normalized realized one-step active-set size from $18$ pre-verification features (probability, generator score, span geometry and unit type, branch sizes, and verification history); three seeds are averaged. No public DEV label enters fitting, scaling, or checkpoint selection.
\emph{Policies and metrics.} All policies use the same probabilities and exact transitions. Normalized evidence-length AUC is integrated over $\log_2$ budgets $\{1,\dots,32\}$ on generated-covered documents. SUCCESS@$B$ is computed over all $5{,}499$ answerable documents and equals one only if the pool contains a non-root positive and the policy reaches the minimal oracle region, or a candidate at most $1.25\times$ its length, within $B$ calls.
\emph{Results.} Table~\ref{tab:realgen-k} reports coverage, prediction, and the two ESR contrasts across generators and cutoffs; Table~\ref{tab:realgen-policy} lists all evaluated policies at $k=8$; Figures~\ref{fig:realgen-main} and~\ref{fig:realgen-cov} show the pipeline, prediction, localization curves, coverage, and SUCCESS@$B$.
\emph{Evaluation scope.} Evaluation uses official public NQ DEV, on which the A0/A1/A2 predictors were previously evaluated; the scheduler's TRAIN documents coincide with the predictor's training documents. An initial evaluation run, which had already read public-DEV labels, stopped on an indexing error in the LEARNED-SCHED interface before producing any policy output; the reported results come from the corrected run with identical policies, features, models, seeds, metrics, budgets, and tie rules. Hidden NQ test data were not used; the downstream reader is evaluated in Appendix~\ref{app:downstream}.

\begin{table*}[ht]
\centering
\footnotesize
\setlength{\tabcolsep}{2.4pt}
\begin{tabular}{@{}lrrrlll@{}}
\toprule
Gen. & $k$ & Cov. & $|\mathcal C|$ & A1$-$A2 unres. & LS$-$ESR AUC & ESR$-$GG S@8 \\
\midrule
DPR & 4 & .627 & 22 & .0093 [.0086, .0100] & $-$.0038 [$-$.0068, $-$.0007] & $-$.0071 [$-$.0116, $-$.0025] \\
DPR & 8 & .707 & 40 & .0062 [.0057, .0067] & .0087 [.0055, .0121] & .0109 [.0031, .0189] \\
DPR & 16 & .755 & 74 & .0043 [.0039, .0046] & .0212 [.0176, .0250] & $-$.0042 [$-$.0151, .0067] \\
BM25 & 4 & .531 & 23 & .0070 [.0066, .0074] & $-$.0014 [$-$.0050, .0020] & $-$.0002 [$-$.0038, .0035] \\
BM25 & 8 & .637 & 44 & .0040 [.0037, .0042] & .0154 [.0116, .0193] & .0284 [.0202, .0362] \\
BM25 & 16 & .721 & 80 & .0027 [.0025, .0028] & .0315 [.0270, .0361] & .0440 [.0329, .0551] \\
\bottomrule
\end{tabular}
\caption{Retrieval-generated hierarchies by generator and cutoff $k$: non-root positive coverage over answerable documents, median pool size, unresolved A1$-$A2 Brier (DPR in generator order, BM25 in random order), LEARNED-SCHED$-$ESR length AUC, and ESR$-$GENERATOR-GREEDY SUCCESS@8 (positive favors ESR in the last two). 95\% paired bootstrap CIs.}
\label{tab:realgen-k}
\end{table*}

\begin{table*}[ht]
\centering
\footnotesize
\setlength{\tabcolsep}{3pt}
\begin{tabular}{@{}lrrrrrrrr@{}}
\toprule
& \multicolumn{4}{c}{DPR $k=8$} & \multicolumn{4}{c}{BM25 $k=8$} \\
\cmidrule(lr){2-5}\cmidrule(lr){6-9}
Policy & AUC$\downarrow$ & S@8 & S@16 & Calls & AUC$\downarrow$ & S@8 & S@16 & Calls \\
\midrule
P-GREEDY & .237 & .644 & .659 & 2 & .236 & .592 & .604 & 2 \\
GENERATOR-GREEDY & .275 & .631 & .668 & 3 & .320 & .560 & .608 & 3 \\
GEOM-BALANCE & .371 & .598 & .636 & 4 & .387 & .550 & .588 & 5 \\
LEARNED-SCHED & .256 & .628 & .651 & 2 & .264 & .573 & .599 & 2 \\
ESR & .247 & .642 & .656 & 2 & .248 & .588 & .603 & 2 \\
\bottomrule
\end{tabular}
\caption{Full policy comparison on $k=8$ retrieval-generated pools: normalized evidence-length AUC on generated-covered documents, SUCCESS@8/16 over all answerable documents, and median calls to the minimal oracle region.}
\label{tab:realgen-policy}
\end{table*}

\begin{figure*}[ht]
\centering
\includegraphics[width=\textwidth]{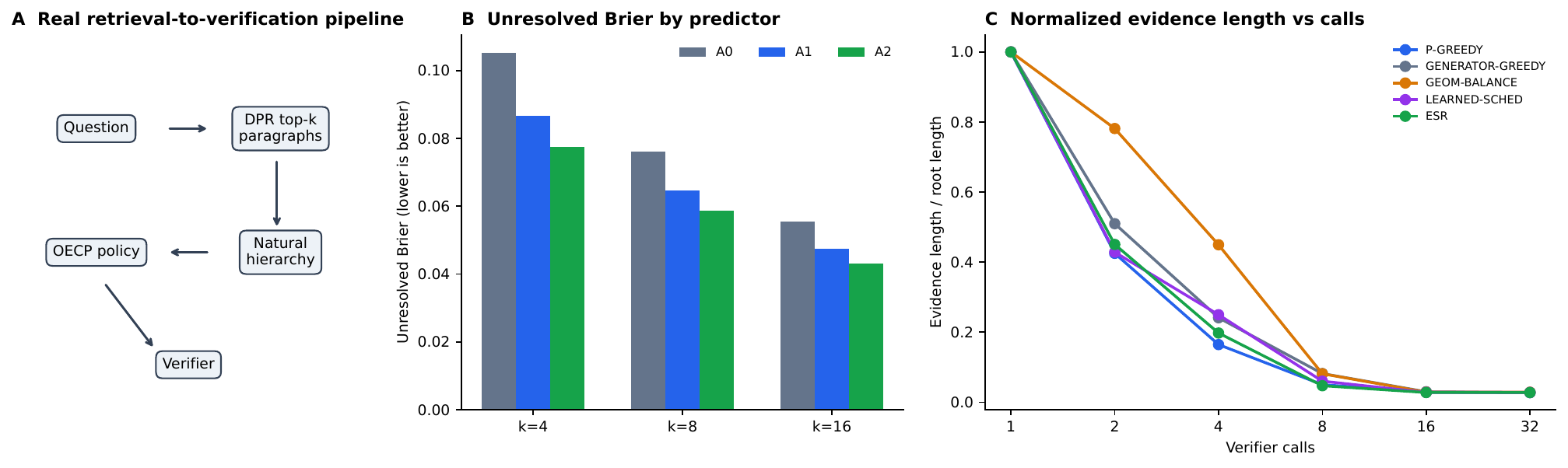}
\caption{Retrieval-generated validation with DPR. (A) Pipeline: DPR top-$k$ paragraphs induce a natural hierarchy verified under an OECP policy. (B) Unresolved Brier of A0/A1/A2 by cutoff $k$. (C) Normalized evidence length against verifier calls at $k=8$.}
\label{fig:realgen-main}
\end{figure*}

\begin{figure*}[ht]
\centering
\includegraphics[width=0.75\textwidth]{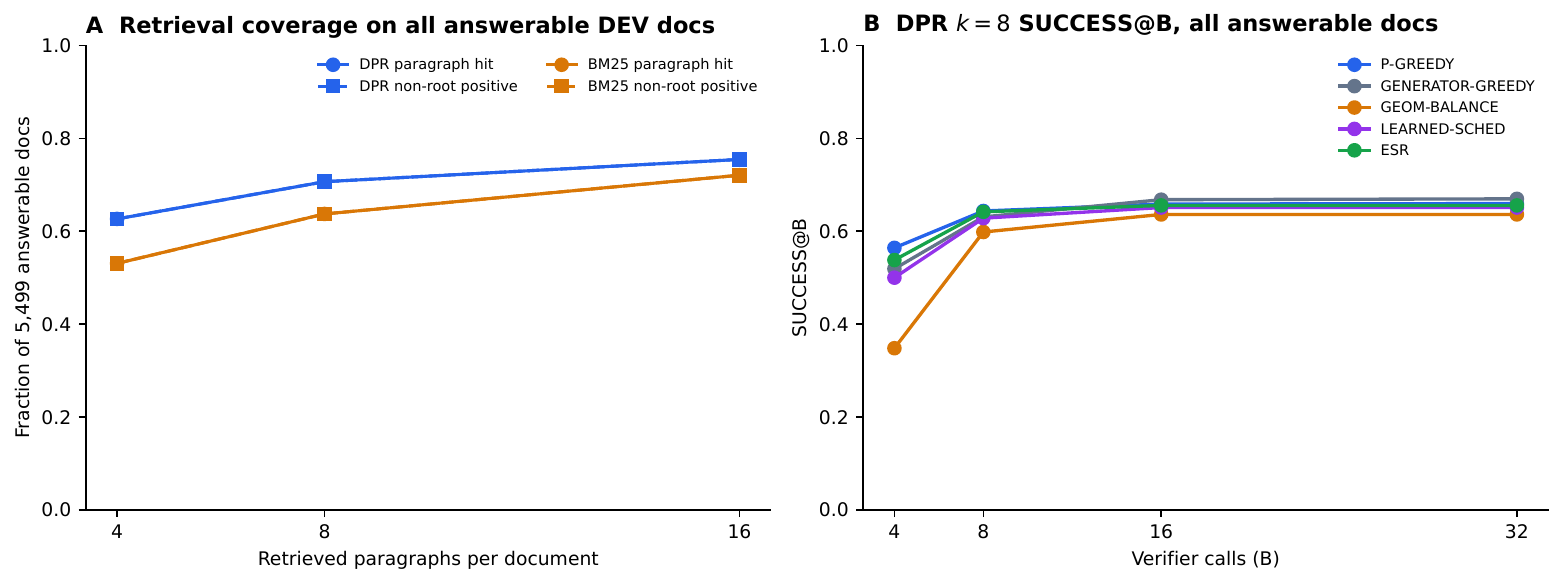}
\caption{(A) DPR and BM25 paragraph hit rate and non-root positive coverage over $5{,}499$ answerable documents. (B) DPR $k=8$ SUCCESS@$B$ over all answerable documents.}
\label{fig:realgen-cov}
\end{figure*}

\subsection{Recent External Selector Comparison}
\label{app:recent}
\emph{Setting.} All selectors act on the frozen DPR $k=8$ pools of Appendix~\ref{app:realgen} ($3{,}887$ generated-covered documents for AUC; $5{,}499$ answerable documents for SUCCESS, with retrieval misses scored zero), with the same binary containment verifier, active-set transition, verifier-call budgets $\{1,2,4,8,16,32\}$, and verification-token budgets $\{64,\dots,2048\}$ (non-root candidate tokens; the shared root initialization is excluded). No method is trained or tuned on public DEV, and hidden TEST is not used. All label-free rankings, features, and adapter configurations were fixed before public-DEV labels were read. CIs are paired document percentile bootstraps with $10{,}000$ resamples (seed $20261006$).
\emph{Methods and fidelity.} REARANK runs the official listwise reasoning reranker \citep{zhang2025rearank} (checkpoint \texttt{le723z/Rearank-7B}, revision \texttt{530a7e15}) once per frozen DPR query; verification follows the frozen ranking over active candidates. NEST-Adapt applies nested evidence survival \citep{verma2026nest} over the root/paragraph/sentence hierarchy with a fixed survival-consistent top-$20$ order. RCD-Adapt keeps the official budget-aware routing objective and coefficients \citep{qureshi2026budgetrouting} (commit \texttt{73e22b3c}) with frozen TF-IDF features and token verification cost; Cost-MMR-Adapt keeps the official MMR objective \citep{wu2026knowledgefree} (commit \texttt{dc68c0dd}) over frozen \texttt{Qwen3-Embedding-0.6B} representations, replacing source-access cost by TRAIN-normalized token cost; both adapters reproduce the official objective values on small test sets. FunnelRAG \citep{zhao2025funnelrag} is not reproducible here: its progressive stage scorer cannot be mapped faithfully onto the frozen pools, and no substitute selector is reported.
\emph{Reproducibility note.} One NEST-Adapt configuration description was regenerated after the label-free NEST rankings were fixed, and its original text is unavailable; the NEST rankings, survival scores, and candidate inputs themselves were verified unchanged before labels were read.
\emph{Probabilities and isotonic correction.} In all DPR policy runs, ESR, P-GREEDY, and LEARNED-SCHED read the same frozen, Platt-calibrated, seed-averaged A2 probabilities; no isotonic projection is applied to any policy, so isotonic correction (Section~\ref{sec:esr}) is an optional module of the OECP pipeline rather than part of the evaluated ESR. External selectors use their own label-free scores, listed in Table~\ref{tab:recent-fidelity}; no corrected-score control for the external selectors was run.
\emph{Results.} Table~\ref{tab:recent-full} lists all policies under both budget types and Table~\ref{tab:recent-contrasts} the paired contrasts; Figure~\ref{fig:recent} shows the budget curves and call-AUC differences. Under verifier-call budgets ESR has lower evidence-length AUC than every runnable external selector; NEST-Adapt is the strongest ($+.0612$ $[.0548,.0677]$). Under token budgets REARANK ($.2872$) and NEST-Adapt ($.3065$) are below ESR ($.3197$), while ESR remains below Cost-MMR-Adapt and RCD-Adapt. On SUCCESS@8, NEST-Adapt ($.6434$) is marginally above ESR ($.6419$), followed by REARANK ($.6265$).
\emph{Runtime.} Mean CPU action-selection time per decision, excluding offline ranking and feature encoding, is $0.016$\,ms (REARANK), $0.007$\,ms (NEST-Adapt), $0.011$\,ms (RCD-Adapt), and $0.260$\,ms (Cost-MMR-Adapt); REARANK ranking and NEST/Cost-MMR encoding run offline on one GPU, with unrecorded wall time.

\begin{table*}[t]
\centering
\scriptsize
\setlength{\tabcolsep}{3pt}
\begin{tabular}{@{}>{\raggedright\arraybackslash}p{1.6cm}>{\raggedright\arraybackslash}p{1.5cm}>{\raggedright\arraybackslash}p{2.2cm}>{\raggedright\arraybackslash}p{2.4cm}>{\raggedright\arraybackslash}p{2.4cm}>{\raggedright\arraybackslash}p{1.4cm}>{\raggedright\arraybackslash}p{1.3cm}>{\raggedright\arraybackslash}p{1.5cm}@{}}
\toprule
Method & Fidelity class & Source / revision & Frozen & Adapted & Score source & Tuning data & Isotonic / contain. \\
\midrule
REARANK & paper-faithful adapter & \texttt{le723z/Rearank-7B}, rev.\ \texttt{530a7e15} & official checkpoint; one listwise ranking per DPR query & ranking replayed over active candidates & LLM listwise rank & none & no / shared transition only \\
NEST-Adapt & hierarchy adapter & ACL 2026 Industry; frozen adapter & survival scores; top-$20$ order & root/paragraph/sentence hierarchy as nest levels & survival/MRR & none & no / shared transition only \\
RCD-Adapt & cost adapter & commit \texttt{73e22b3c} & official objective, coefficients, TF-IDF features & token-length verification cost & routing objective & none & no / shared transition only \\
Cost-MMR-Adapt & cost adapter & commit \texttt{dc68c0dd} & official MMR objective; Qwen3-Emb.-0.6B embeddings & source-access cost $\to$ token cost & MMR score & TRAIN token scale & no / shared transition only \\
ESR & OECP & frozen policy code & A2 probabilities & -- & calibrated A2 & NQ TRAIN (A2) & no / ESR uses order \\
\bottomrule
\end{tabular}
\caption{Configuration of recent external selectors on frozen DPR ($k=8$) pools. All methods share candidate pools, the exact containment verifier and active-set transition, call budgets $\{1,\dots,32\}$ (non-root verifications after the root) and token budgets $\{64,\dots,2048\}$ (non-root candidate tokens). At each step the action is fixed before its label is revealed.}
\label{tab:recent-fidelity}
\end{table*}

\begin{table*}[!t]
\centering
\footnotesize
\setlength{\tabcolsep}{4pt}
\begin{tabular}{@{}lrrrrrrr@{}}
\toprule
& \multicolumn{4}{c}{Verifier-call budget} & \multicolumn{3}{c}{Verification-token budget} \\
\cmidrule(lr){2-5}\cmidrule(lr){6-8}
Policy & AUC$\downarrow$ & S@8$\uparrow$ & Hit@32$\uparrow$ & Med.\ calls & AUC$\downarrow$ & S@512$\uparrow$ & Med.\ tokens \\
\midrule
P-GREEDY & \textbf{.2365} & \textbf{.6438} & .9118 & 2 & .2925 & .6037 & 144 \\
GENERATOR-GREEDY & .2752 & .6310 & .9259 & 3 & .3080 & .5921 & 151 \\
GEOM-BALANCE & .3710 & .5985 & .8778 & 4 & .5012 & .4186 & 386 \\
LEARNED-SCHED & .2560 & .6283 & .8968 & 2 & .3428 & .5363 & 160 \\
\midrule
REARANK & .3239 & .6265 & \textbf{.9264} & 3 & \textbf{.2872} & \textbf{.6179} & 149 \\
NEST-Adapt & .3085 & .6434 & .9259 & 3 & .3065 & .6021 & 162 \\
RCD-Adapt & .8639 & .0687 & .6620 & 22 & .6898 & .1704 & 789 \\
Cost-MMR-Adapt & .6758 & .2944 & .7955 & 8 & .5075 & .4679 & 274 \\
\midrule
ESR & .2473 & .6419 & .9048 & 2 & .3197 & .5672 & 151 \\
\bottomrule
\end{tabular}
\caption{Full policy comparison on frozen DPR ($k=8$) pools. AUC: normalized evidence-length AUC on generated-covered documents; S@$B$: coverage-aware SUCCESS at budget $B$ over all answerable documents; Hit@32: minimal-oracle hit rate at $32$ calls; Med.\ calls/tokens: median calls or verified tokens to the minimal oracle region among hits. Bold: column best for AUC, S@$B$, and Hit@32.}
\label{tab:recent-full}
\end{table*}

\makeatletter
\setlength{\@dblfptop}{0pt}\setlength{\@dblfpbot}{0pt plus 1fil}
\makeatother
\begin{figure*}[tp]
\centering
\begin{minipage}[b]{0.33\textwidth}\centering\includegraphics[width=\linewidth]{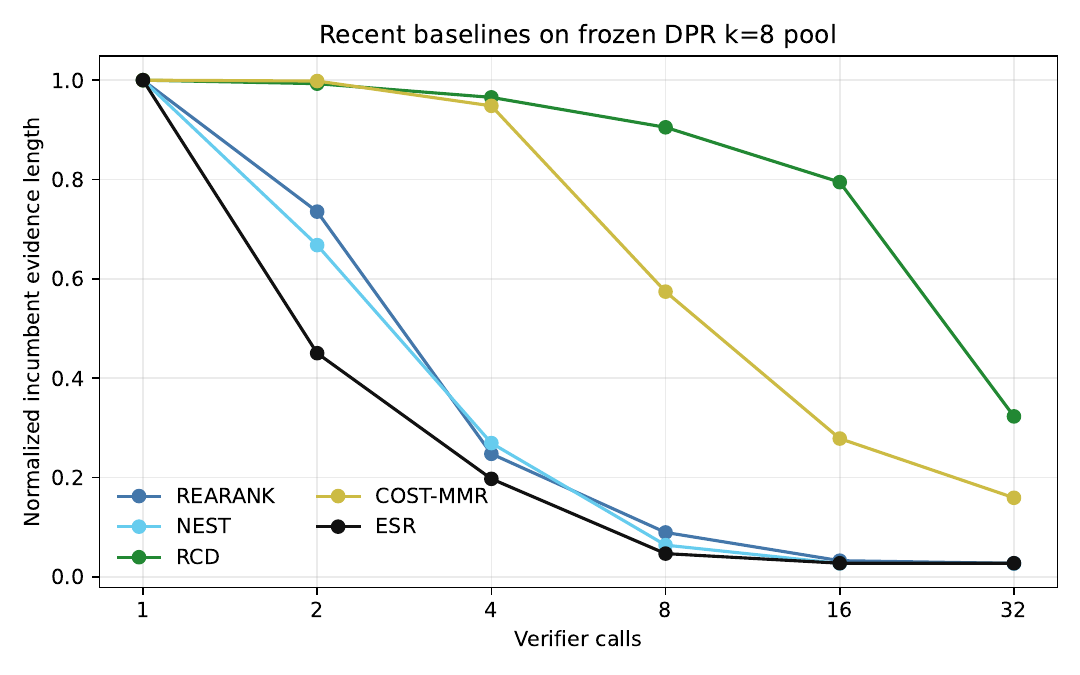}\\{\footnotesize (a)}\end{minipage}\hfill
\begin{minipage}[b]{0.33\textwidth}\centering\includegraphics[width=\linewidth]{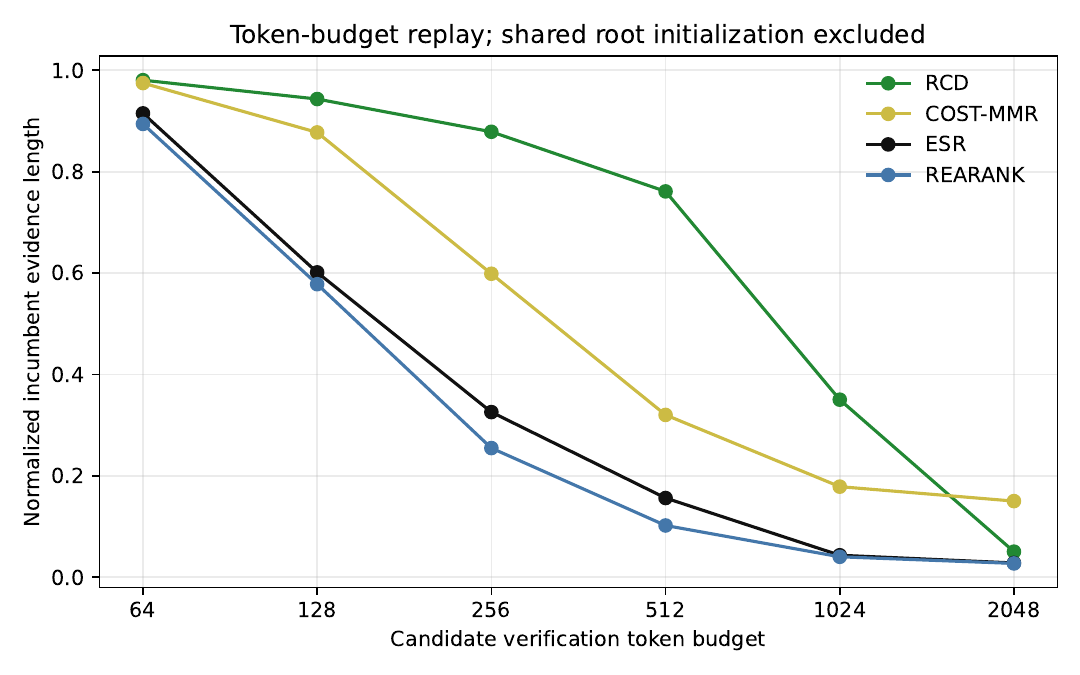}\\{\footnotesize (b)}\end{minipage}\hfill
\begin{minipage}[b]{0.33\textwidth}\centering\includegraphics[width=\linewidth]{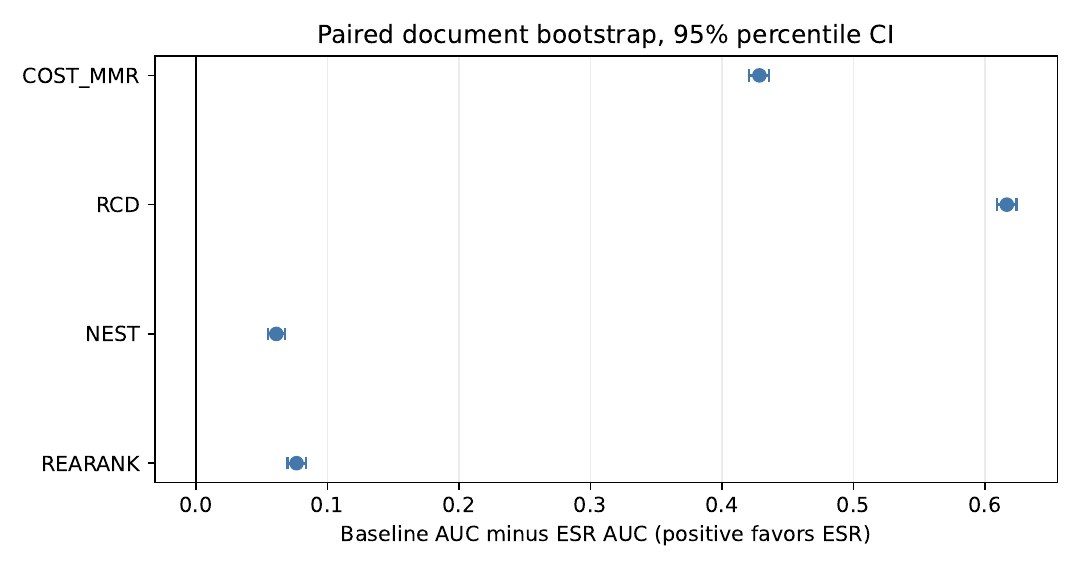}\\{\footnotesize (c)}\end{minipage}
\caption{Recent external selectors on frozen DPR ($k=8$) pools: normalized evidence length versus (a) verifier calls and (b) verification-token budget (shared root initialization excluded); (c) paired call-AUC differences, method$-$ESR, with 95\% CIs (positive favors ESR).}
\label{fig:recent}

\vspace{14pt}
\begin{minipage}[t]{0.48\textwidth}
\centering
\footnotesize
\setlength{\tabcolsep}{2.5pt}
\begin{tabular}{@{}llrr@{}}
\toprule
Track & Contrast & Diff. & 95\% CI \\
\midrule
Call & NEST$-$ESR & .0612 & \mbox{[.0548,.0677]} \\
Call & REARANK$-$ESR & .0766 & \mbox{[.0697,.0835]} \\
Call & C-MMR$-$ESR & .4285 & \mbox{[.4208,.4360]} \\
Call & RCD$-$ESR & .6166 & \mbox{[.6093,.6240]} \\
Token & C-MMR$-$ESR & .1878 & \mbox{[.1785,.1972]} \\
Token & RCD$-$ESR & .3701 & \mbox{[.3619,.3783]} \\
\bottomrule
\end{tabular}
\captionof{table}{Registered paired contrasts of evidence-length AUC, method$-$ESR (positive favors ESR), on $3{,}887$ generated-covered documents. C-MMR: Cost-MMR-Adapt. Token AUCs of REARANK and NEST-Adapt are reported descriptively in Table~\ref{tab:recent-full}.}
\label{tab:recent-contrasts}
\end{minipage}\hfill
\begin{minipage}[t]{0.48\textwidth}
\centering
\footnotesize
\setlength{\tabcolsep}{3.0pt}
\begin{tabular}{@{}lrrrrrrr@{}}
\toprule
Calls & 1 & 2 & 4 & 8 & 16 & 32 & 64 \\
\midrule
Macro-F1 & .614 & .595 & .642 & .718 & .805 & .865 & \textbf{.879} \\
\midrule
\multicolumn{8}{@{}l}{Oracle .887; Gold .912} \\
\multicolumn{8}{@{}l}{16 calls: length .241; ${\le}1.25\times$ oracle .540} \\
\bottomrule
\end{tabular}
\captionof{table}{Evidence localization$^\dagger$.}
\label{tab:loc}

\vspace{10pt}
\centering
\footnotesize
\setlength{\tabcolsep}{3.2pt}
\begin{tabular}{@{}lrrrrrr@{}}
\toprule
 & Brier$\downarrow$ & Top1+ & Sh.+ & Top1$=$ & Rk$=$ & $\tau/\rho$ \\
\midrule
BASE & .2185 & 34.9\% & 14.7\% & -- & -- & -- \\
COMB.$^*$ & \textbf{.2118} & 34.9\% & 14.7\% & 100\% & 100\% & 1/1 \\
\bottomrule
\end{tabular}
\captionof{table}{Frontier ranking audit$^\dagger$.}
\label{tab:rank}
\end{minipage}

\vspace*{\fill}\end{figure*}

\subsection{Downstream Answer Utility on NQ}
\label{app:downstream}
\emph{Cohort.} We evaluate downstream answer quality on the official NQ public-DEV short-answer/yes-no subset, with cohort membership determined solely by official annotation type and the frozen DPR answer normalization. Of the $5{,}499$ answerable documents of Appendix~\ref{app:realgen}, $4{,}287$ have short-answer text and $173$ are yes/no-only; $1{,}037$ are long-answer-only and $2$ have no scoreable short target (both normalize to the empty string), so $D_{\rm SA}$ has $N=4{,}460$ documents with full gold coverage. Within $D_{\rm SA}$, $3{,}272$ documents are generated-covered and $1{,}188$ are retrieval misses. The $1{,}039$ annotation-type exclusions are outside the evaluated population.
\emph{Reader and protocol.} The reader is \texttt{facebook/dpr-reader-single-nq-base} (revision \texttt{38f47a49}), used frozen in FP32 without fine-tuning or DEV tuning (maximum input $512$, maximum answer length $30$). For each policy and budget, the reader receives the latest verified-positive non-root incumbent of the frozen trajectories of Appendix~\ref{app:realgen}; if none has been acquired, the system abstains. Coverage-aware F1 and EM average over all of $D_{\rm SA}$, scoring retrieval misses and abstentions as zero; answers are normalized as in the DPR validator (lowercase, punctuation and articles removed), and each official short-answer span is an alias. Call budgets are $\{1,\dots,32\}$ verifier calls and token budgets $\{64,\dots,2048\}$ verified tokens; AUCs integrate over $\log_2$ budget. Paired document bootstraps use $10{,}000$ resamples (seed $20261006$). As references, the reader on all available DPR top-8 paragraphs (DPR8) reaches F1 $.4505$ and EM $.3726$ on $D_{\rm SA}$; on the minimal oracle evidence it reaches F1 $.6288$ and EM $.5238$ on the $3{,}272$ generated-covered documents (a conditional diagnostic).
\emph{Results.} Table~\ref{tab:ds-full} reports all five policies, Table~\ref{tab:ds-contrasts} the registered paired contrasts, and Figure~\ref{fig:downstream} the budget curves. Under call budgets ESR improves F1-AUC and F1@8 over both LEARNED-SCHED and GENERATOR-GREEDY; under token budgets it improves over LEARNED-SCHED and is below GENERATOR-GREEDY. All five policies reach $90\%$ and $95\%$ of DPR8 F1 at $8$ calls; under token budgets P-GREEDY and GENERATOR-GREEDY reach $90\%$ at $512$ tokens and all policies reach $95\%$ at $1{,}024$. Table~\ref{tab:ds-fail} decomposes outcomes at $8$ and $16$ calls.

\begin{figure*}[p]
\centering
\includegraphics[width=\textwidth]{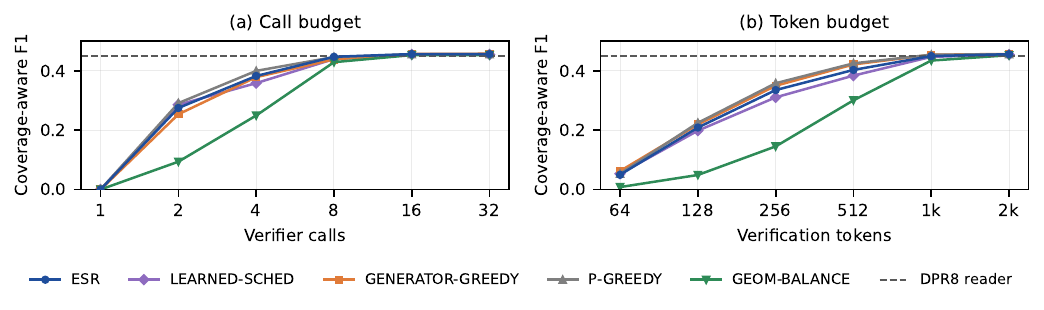}
\caption{Coverage-aware answer F1 of the frozen DPR reader on the official NQ public-DEV short-answer/yes-no subset ($N=4{,}460$) as a function of (a) verifier calls and (b) verified tokens; the dashed line is the DPR8 reader reference.}
\label{fig:downstream}

\vspace{10pt}
\footnotesize
\setlength{\tabcolsep}{5pt}
\begin{tabular}{@{}lrrrrrrrr@{}}
\toprule
& \multicolumn{4}{c}{Verifier-call budget} & \multicolumn{4}{c}{Verification-token budget} \\
\cmidrule(lr){2-5}\cmidrule(lr){6-9}
Policy & F1-AUC & EM-AUC & F1@8 & F1@16 & F1-AUC & EM-AUC & F1@512 & F1@1024 \\
\midrule
P-GREEDY & \textbf{.3638} & \textbf{.3037} & .4458 & .4552 & \textbf{.3430} & \textbf{.2870} & \textbf{.4246} & \textbf{.4538} \\
GENERATOR-GREEDY & .3509 & .2933 & .4381 & \textbf{.4563} & .3404 & .2848 & .4210 & .4537 \\
GEOM-BALANCE & .2900 & .2412 & .4286 & .4524 & .2317 & .1927 & .3003 & .4341 \\
LEARNED-SCHED & .3528 & .2954 & .4392 & .4543 & .3183 & .2673 & .3834 & .4466 \\
ESR & .3574 & .2991 & \textbf{.4467} & .4559 & .3295 & .2765 & .4025 & .4492 \\
\bottomrule
\end{tabular}
\captionof{table}{Coverage-aware answer quality of the frozen DPR reader on $D_{\rm SA}$ ($N=4{,}460$) for all five frozen policies (higher is better; best value per column in bold). DPR8 reference: F1 $.4505$, EM $.3726$.}
\label{tab:ds-full}

\vspace{10pt}
\footnotesize
\setlength{\tabcolsep}{4pt}
\begin{tabular}{@{}llrlrl@{}}
\toprule
Track & Endpoint & ESR$-$LS & 95\% CI & ESR$-$GG & 95\% CI \\
\midrule
Call & F1-AUC & .0046 & \mbox{[.0022,.0070]} & .0065 & \mbox{[.0017,.0113]} \\
Call & EM-AUC & .0037 & \mbox{[.0013,.0060]} & .0058 & \mbox{[.0009,.0107]} \\
Call & F1@8 & .0075 & \mbox{[.0043,.0108]} & .0086 & \mbox{[.0030,.0142]} \\
Call & EM@8 & .0058 & \mbox{[.0027,.0092]} & .0078 & \mbox{[.0022,.0135]} \\
Call & F1@16 & .0016 & \mbox{[.0001,.0031]} & $-$.0004 & \mbox{[$-$.0038,.0029]} \\
Call & EM@16 & .0011 & \mbox{[$-$.0004,.0029]} & .0011 & \mbox{[$-$.0029,.0049]} \\
Token & F1-AUC & .0112 & \mbox{[.0086,.0138]} & $-$.0108 & \mbox{[$-$.0164,$-$.0054]} \\
Token & EM-AUC & .0093 & \mbox{[.0068,.0118]} & $-$.0083 & \mbox{[$-$.0139,$-$.0028]} \\
Token & F1@512 & .0191 & \mbox{[.0146,.0239]} & $-$.0185 & \mbox{[$-$.0264,$-$.0106]} \\
Token & EM@512 & .0161 & \mbox{[.0117,.0206]} & $-$.0146 & \mbox{[$-$.0222,$-$.0070]} \\
Token & F1@1024 & .0026 & \mbox{[.0007,.0047]} & $-$.0045 & \mbox{[$-$.0087,$-$.0004]} \\
Token & EM@1024 & .0022 & \mbox{[.0002,.0043]} & $-$.0022 & \mbox{[$-$.0067,.0022]} \\
\bottomrule
\end{tabular}
\captionof{table}{Registered paired contrasts on $D_{\rm SA}$ (ESR minus comparator; positive favors ESR). LS: LEARNED-SCHED; GG: GENERATOR-GREEDY.}
\label{tab:ds-contrasts}

\vspace{10pt}
\footnotesize
\setlength{\tabcolsep}{4pt}
\begin{tabular}{@{}lrrrrrrrrrr@{}}
\toprule
& \multicolumn{5}{c}{8 calls} & \multicolumn{5}{c}{16 calls} \\
\cmidrule(lr){2-6}\cmidrule(lr){7-11}
Policy & Miss & No pos. & Zero & Partial & Exact & Miss & No pos. & Zero & Partial & Exact \\
\midrule
P-GREEDY & 1188 & 70 & 793 & 752 & 1657 & 1188 & 5 & 810 & 765 & 1692 \\
GENERATOR-GREEDY & 1188 & 160 & 750 & 733 & 1629 & 1188 & 6 & 799 & 776 & 1691 \\
GEOM-BALANCE & 1188 & 165 & 792 & 725 & 1590 & 1188 & 1 & 824 & 771 & 1676 \\
LEARNED-SCHED & 1188 & 105 & 801 & 728 & 1638 & 1188 & 1 & 821 & 759 & 1691 \\
ESR & 1188 & 60 & 803 & 745 & 1664 & 1188 & 0 & 813 & 763 & 1696 \\
\bottomrule
\end{tabular}
\captionof{table}{Outcome decomposition on $D_{\rm SA}$ ($N=4{,}460$): retrieval miss; generated-covered but no positive acquired; positive acquired but reader F1 $=0$; partial F1; exact answer.}
\label{tab:ds-fail}
\end{figure*}

\subsection{Decision Diagnostics on ERASER}
\label{app:f1diag}
First, when a downstream Evidence Inference classifier reads the region localized after a budget of verification calls, macro-F1 rises from $.614$ with one call (the full document) to $.879$ with $64$, close to the minimal oracle region ($.887$) and gold evidence ($.912$; Table~\ref{tab:loc}); this curve reflects localization, not COMBINED$^*$. Second, on $17{,}986$ realized decision frontiers ($93.1\%$ with several choices, $96.4\%$ mixed-label; median size $30$), COMBINED$^*$ lowers Brier from $.2185$ to $.2118$ yet selects the same top-1 candidate and the same full ranking as BASE in every case (Table~\ref{tab:rank}), so probability-greedy acquisition cannot use the improved predictor.

\subsection{Computational Resources and Budget}
\label{app:compute}
\emph{Hardware.} Experiments ran on a single compute node with $4\times$ NVIDIA A100 40GB GPUs and an Intel Xeon (Ice Lake) CPU with $96$ logical cores.
\emph{GPU time.} Runs with recorded timing used $8.735$ GPU-hours for training and $24.363$ for prediction and evaluation ($33.098$ in total). This covers only timed runs; some retrieval, feature-caching, external-baseline, and repeated runs were not timed, so total compute is higher. CPU-side costs of the structure-aware components are reported in Appendix~\ref{app:scaling}.
\emph{Model sizes.} Table~\ref{tab:params} lists parameter counts of the pretrained backbones and whether their parameters were updated.

\begin{table}[h]
\centering
\footnotesize
\setlength{\tabcolsep}{3pt}
\begin{tabular}{@{}lrl@{}}
\toprule
Model & Params & Use \\
\midrule
DeBERTa-v3-base & 184.03M & fine-tuned \\
RoBERTa-base & 124.85M & fine-tuned \\
BERT + MLP & ${\approx}$109.68M & head only \\
SciBERT & 109.92M & App.~\ref{app:f2} \\
DPR encoders (2) & 217.78M & frozen \\
REARANK-7B & 7.616B & frozen \\
\bottomrule
\end{tabular}
\caption{Parameter counts (total, including task heads). DeBERTa and RoBERTa: A0/A1/A2, all parameters trained; DPR: question and context encoders combined. For BERT + MLP, only the MLP head is trained; for the SciFact setting, the containment model is trained and the separate claim classifier used as reader is frozen (Appendix~\ref{app:f2}).}
\label{tab:params}
\end{table}

\subsection{Software and Implementation Details}
\label{app:software}
\emph{Software.} The reference environment uses Python 3.10.19, PyTorch 2.7.1 (CUDA 12.6 build), Transformers 5.14.1, NumPy 2.2.6, scikit-learn 1.7.2, and SciPy 1.15.3. Some earlier experiments ran in prior environments whose exact versions were not recorded.
\emph{Pretrained models and freezing.} All backbones are public pretrained checkpoints. The DeBERTa and RoBERTa A0/A1/A2 predictors are fully fine-tuned (Appendices~\ref{app:lanea} and~\ref{app:roberta}); the BERT N2/FULL pair trains only an MLP on frozen encoder embeddings (Appendix~\ref{app:neural}); the DPR retriever and reader, REARANK, and the Cost-MMR embedding model are used frozen, without fine-tuning or tuning on public DEV (Appendices~\ref{app:realgen}, \ref{app:recent}, and~\ref{app:downstream}).
\emph{Optimization and implementation.} Optimizers, learning rates, epochs, calibration, and seeds are given in the appendix of each experiment (Appendices~\ref{app:model}, \ref{app:neural}, \ref{app:lanea}, \ref{app:f2}, and~\ref{app:realgen}) and are not repeated here. Batch isotonic projection and ESR are implemented in Python and run on CPU (Appendix~\ref{app:scaling}). Paired bootstrap CIs use document- or claim-cluster resampling with the resample counts and seeds stated in each appendix.

\end{document}